\def\eqref#1{equation~\ref{#1}}
\def\1{\bm{1}}
\def\eps{{\epsilon}}
\DeclareMathAlphabet{\mathsfit}{\encodingdefault}{\sfdefault}{m}{sl}
\SetMathAlphabet{\mathsfit}{bold}{\encodingdefault}{\sfdefault}{bx}{n}
\DeclareMathOperator*{\argmax}{arg\,max}
\definecolor{Gray}{gray}{0.9}
\newtheorem{definition}{Definition}[section]
\newtheorem{lemma}{Lemma}[section]
\newtheorem{assumption}{Condition}[section]
\newtheorem{theorem}{Theorem}[section]
\newtheorem{corollary}{Corollary}[section]
\newcommand{\cG}{\mathcal{G}}
\newcommand{\data}{\mathcal{D}}
\newcommand{\eg}{\emph{e.g.},\ }
\newcommand{\ie}{\emph{i.e.},\ }
\newcommand{\states}{\mathcal{S}}
\newcommand{\actions}{\mathcal{A}}
\newcommand{\transitions}{P}
\newcommand{\behavior}{{\pi_\beta}}
\newcommand{\hatbehavior}{\hat{\pi}_\beta}
\newcommand{\by}{\mathbf{y}}
\newcommand{\bx}{\mathbf{x}}
\newcommand{\bs}{\mathbf{s}}
\newcommand{\ba}{\mathbf{a}}
\newcommand{\bg}{\mathbf{g}}
\newcommand{\subopt}{\mathsf{SubOpt}}
\newcommand{\hatpi}{\hat{\pi}}
\newcommand{\variance}{\mathbb{V}}
\newcommand{\E}[2]{\mathbb{E}_{#1} \left[#2\right]}
\newcommand{\prob}[1]{\mathbb{P} \left(#1\right)}
\newcommand{\var}[1]{\mathrm{Var} \left[#1\right]}
\newcommand{\expec}{\mathbb{E}}
\newcommand{\abs}[1]{\left|#1\right|}
\def\polylog{\operatorname{polylog}}
\newcommand{\vilcb}{\ensuremath{\tt VI\textsf{-}LCB } }
\newcommand{\pimbs}{\ensuremath{\tt MBS\textsf{-}PI } }
\newcommand{\bc}{\ensuremath{\tt BC }}
\newcommand{\bcpi}{\ensuremath{\tt BC\textsf{-}PI }}
\newcommand{\bcpik}{\ensuremath{\tt BC\textsf{-}kPI }}
\newcommand{\rlc}{\ensuremath{\tt RL\textsf{-}C }}
\newcommand{\rlpc}{\ensuremath{\tt RL\textsf{-}PC }}
\renewcommand{\hat}{\widehat}
\newtheorem{protocol}{Practical Observation}[section]
\title{When Should We Prefer Offline Reinforcement Learning Over Behavioral Cloning?}
\author{
\!\!Aviral Kumar$^{*, 1, 2}$, Joey Hong$^{*, 1}$, Anikait Singh$^1$, Sergey Levine$^{1, 2}$\\
$^1$Department of EECS, UC Berkeley~~~~~~~ $^2$Google Research ~~~~~~~~~~~~~ ($^*$Equal Contribution)\\
\texttt{\{aviralk, joey\_hong\}@berkeley.edu}
}
\begin{document}

\maketitle

\vspace{-0.2cm}
\begin{abstract}
Offline reinforcement learning (RL) algorithms can acquire effective policies by utilizing previously collected experience, without any online interaction. It is widely understood that offline RL is able to extract good policies even from highly suboptimal data, a scenario where imitation learning finds suboptimal solutions that do not improve over the demonstrator that generated the dataset. However, another common use case for practitioners is to learn from data that resembles demonstrations. In this case, one can choose to apply offline RL, but can also use behavioral cloning (BC) algorithms, which mimic a subset of the dataset via supervised learning. Therefore, it seems natural to ask: when can an offline RL method outperform BC with an equal amount of expert data, even when BC is a natural choice?
To answer this question, we characterize the properties of environments that allow offline RL methods to perform better than BC methods, even when only provided with expert data.
Additionally, we show that policies trained on sufficiently noisy suboptimal data can attain better performance than even BC algorithms with expert data, especially on long-horizon problems.
We validate our theoretical results via extensive experiments on both diagnostic and high-dimensional domains including robotic manipulation, maze navigation, and Atari games, with a variety of data distributions. We observe that, under specific but common conditions such as sparse rewards or noisy data sources, modern offline RL methods can significantly outperform BC.
\end{abstract}

\vspace{-0.2cm}
\section{Introduction}
\vspace{-5pt}
Offline reinforcement learning (RL) algorithms aim to leverage large existing datasets of previously collected data to produce effective policies that generalize across a wide range of scenarios, without the need for costly active data collection. 
Many recent offline RL algorithms ~\citep{fujimoto2018off,kumar2019stabilizing,wu2019behavior,kumar2020conservative,yu2020mopo,sinha2021srl,kostrikov2021offline} can work well even when provided with highly suboptimal data, and a number of these approaches have been studied theoretically~\citep{wang2021what,zanette2020exponential,rashidinejad2021bridging,jin2021pessimism}. While it is clear that offline RL algorithms are a good choice when the available data is either random or highly suboptimal, {it is less clear if such methods are useful when the dataset consists of demonstration that come from expert or near-expert demonstrations.}
In these cases, imitation learning algorithms, such as behavorial cloning (BC), can be used to train policies via supervised learning. It then seems natural to ask: \emph{When should we prefer to use offline RL over imitation learning?}

To our knowledge, there has not been a rigorous characterization of when offline RL perform better than imitation learning.
Existing \emph{empirical} studies comparing offline RL to imitation learning have come to mixed conclusions. Some works show that offline RL methods appear to greatly outperform imitation learning, specifically in environments that require ``stitching'' parts of suboptimal trajectories~\citep{fu2020d4rl}.
In contrast, a number of recent works have argued that BC performs better than offline RL on both expert and suboptimal demonstration data over a variety of tasks \citep{mandlekar2021what,florence2021implicit,hahn2021norl}. This makes it confusing for practitioners to understand whether to use offline RL or simply run BC on collected demonstrations.
Thus, in this work we aim to understand if there are conditions on the environment or the dataset under which an offline RL algorithm might outperform BC for a given task, even when BC is provided with expert data or is allowed to use rewards as side information. Our findings can inform a practitioner in determining whether offline RL is a good choice in their domain, even when expert or near-expert data is available and BC might appear to be a natural choice.

Our contribution in this paper is a theoretical and empirical characterization of certain conditions when offline RL outperforms BC. Theoretically, our work presents conditions on the MDP and dataset that are sufficient for offline RL to achieve better worst-case guarantees than even the \emph{best-case lower-bound} for BC using the same amount of \emph{expert demonstrations}. These conditions are grounded in practical problems, and provide guidance to the practitioner as to whether they should use RL or BC. Concretely, we show that in the case of expert data, the error incurred by offline RL algorithms can scale significantly more favorably when the MDP enjoys some structure, which includes horizon-independent returns (\ie sparse rewards) or a low volume of states where it is ``critical" to take the same action as the expert (Section ~\ref{sec:expert_data}).
Meanwhile, in the case of sufficiently noisy data, we show that offline RL again enjoys better guarantees on long-horizon tasks (Section ~\ref{sec:noisy_expert_data}). Finally, since BC methods ignore rewards, we consider generalized BC methods that use the observed rewards to inform learning, and show that it is still preferable to perform offline RL (Section ~\ref{sec:generalized_bc}).

Empirically, we validate our theoretical conclusions on diagnostic gridworld domains~\citep{fu2019diagnosing} and large-scale benchmark problems in robotic manipulation and navigation and Atari games, using human data~\citep{fu2020d4rl}, scripted data~\citep{singh2020cog}, and data generated from RL policies~\citep{agarwal2019optimistic}.
We verify that in multiple long-horizon problems where the conditions we propose are likely to be satisfied, practical offline RL methods can outperform BC and generalized BC methods.
We show that using careful offline tuning practices, we show that it is possible for offline RL to outperform cloning an expert dataset for the same task, given equal amounts of data. We also highlight open questions for hyperparameter tuning that have the potential to make offline RL methods work better in practice.

\vspace{-0.25cm}
\section{Related Work}
\label{sec:related_work}
\vspace{-0.25cm}
Offline RL~\citep{LangeGR12, levine2020offline} has shown promise in domains such as robotic manipulation~\citep{kalashnikov2018scalable, mandlekar2020iris,singh2020cog,kalashnikov2021mt}, NLP~\citep{jaques2020human} and healthcare~\citep{shortreed2011informing, Wang2018SupervisedRL}. The major challenge in offline RL is distribution shift~\citep{fujimoto2018off,kumar2019stabilizing}, where the learned policy might execute out-of-distribution actions. Prior offline RL methods can broadly be characterized into two categories: \textbf{(1)} \emph{policy-constraint} methods that regularize the learned policy to be ``close'' to the behavior policy either explicitly~\citep{fujimoto2018off,kumar2019stabilizing,liu2020provably,wu2019behavior,fujimoto2021minimalist} or implicitly~\citep{siegel2020keep,peng2019advantage,nair2020accelerating}, or via importance sampling~\citep{LiuSAB19, SwaminathanJ15, nachum2019algaedice}, and \textbf{(2)} \emph{conservative} methods that learn a conservative, estimate of return and optimize the policy against it~\citep{kumar2020conservative,kostrikov2021offline,kidambi2020morel,yu2020mopo,yu2021combo}. Our goal is not to devise a new algorithm, but to understand when existing offline RL methods can outperform BC.

\emph{When do offline RL methods outperform BC?} \citet{rashidinejad2021bridging} derive a conservative offline RL algorithm based on lower-confidence bounds (LCB) that provably outperforms BC in the simpler contextual bandits (CB) setting, but do not extend it to MDPs. While this CB result signals the possibility that offline RL can outperform BC in theory, this generalization is not trivial, as RL suffers from compounding errors~\citep{munos2003api,munos2005error,wang2021what}. \citet{laroche2019safe,nadjahi2019safe,kumar2020conservative,liu2020provably,xie2021bellman} present safe policy improvement bounds expressed as improvements over the behavior policy, which imitation aims to recover, but these bounds do not clearly indicate when offline RL is better or worse. 
Empirically, \citet{fu2020d4rl} show that offline RL considerably outperforms BC for tasks that require ``stitching'' trajectory segments to devise an optimal policy. In contrast, \citet{mandlekar2021what,brandfonbrener2021offline,chen2021decision,hahn2021norl} suggest that BC or filtered BC using the top fraction of the data performs better on other tasks. {While the performance results in D4RL~\citep{fu2020d4rl}, especially on the Adroit domains, show that offline RL outperforms BC even on expert data, \citet{florence2021implicit} reported superior BC results, making it unclear if the discrepancy arises from different hyperparameter tuning practices.} {\citet{kurenkov2022showing} emphasize the importance of the online evaluation budget for offline RL methods and show that BC is more favorable in a limited budget.} While these prior works discussed above primarily attempt to show that BC can be better than offline RL, in this work, we attempt to highlight when offline RL is expected to be better by providing a characterization of scenarios where we would expect offline RL to be better than BC, and empirical results verifying that offline RL indeed performs better on such problems~\citep{fu2020d4rl,singh2020cog,bellemare2013ale}.  

Our theoretical analysis combines tools from a number of prior works. We analyze the total error incurred by RL via an error propagation analysis~\citep{munos2003api,munos2005error,farahmand2010error,chen2019information,xie2020q,liu2020provably}, which gives rise to bounds with \emph{concentrability coefficients} that bound the total distributional shift between the learned policy and the data distribution~\citep{xie2020q,liu2020provably}. We use tools from \citet{ren2021nearly}, which provide horizon-free bounds for standard (non-conservative) offline Q-learning but relax their strict coverage assumptions. While our analysis studies a LCB-style algorithm similar to \citet{rashidinejad2021bridging,jin2021pessimism}, we modify it to use tighter Bernstein bonuses~\citep{zhang2021reinforcement,agarwal2020model}, which is key to improving the guarantee.

\vspace{-0.3cm}
\section{Problem Setup and Preliminaries}
\label{sec:prelims}
\vspace{-0.3cm}

The goal in reinforcement learning is to learn a policy $\pi(\cdot|\bs)$ that maximizes the expected cumulative discounted reward in a Markov decision process (MDP), which is defined by a tuple $(\mathcal{S}, \mathcal{A}, \transitions, r, \gamma)$.
$\mathcal{S}, \mathcal{A}$ represent state and action spaces, $\transitions(\bs' | \bs, \ba)$ and $r(\bs,\ba)$ represent the dynamics and mean reward function, and $\gamma \in (0,1)$ represents the discount factor. The effective horizon of the MDP is given by $H = 1/(1 - \gamma)$. 
The Q-function, $Q^\pi(\bs, \ba)$ for a given policy $\pi$ is equal to the discounted long-term reward attained by executing $\ba$ at the state $\bs$ and then following policy $\pi$ thereafter. $Q^\pi$ satisfies the recursion: $\forall \bs, \ba \in \mathcal{S}\times \mathcal{A}, Q^\pi(\bs, \ba) = r(\bs, \ba) + \gamma \E{\bs' \sim \transitions(\cdot|\bs, \ba), \ba' \sim \pi(\cdot|\bs')}{Q(\bs', \ba')}$. The value function $V^\pi$ considers the expectation of the Q-function over the policy $V^\pi(\bs) = \E{\ba \sim \pi(\cdot | \bs)}{Q^\pi(\bs, \ba)}$. Meanwhile, the Q-function of the optimal policy, $Q^*$, satisfies the recursion: $Q^*(\bs, \ba) = r(\bs, \ba) + \E{\bs' \sim \transitions(\cdot|\bs, \ba)}{\max_{\ba'} Q^*(\bs', \ba')}$, and the optimal value function is given by $V^*(\bs) = \max_{\ba} Q^*(\bs, \ba)$. Finally, the expected cumulative discounted reward is given by $J(\pi) = \E{\bs_0 \sim \rho}{V^\pi(\bs_0)}$.

In offline RL, we are provided with a dataset $\mathcal{D}$ of transitions, $\data = \{(\bs_i, \ba_i, r_i, \bs'_i)\}_{i=1}^N$ of size $|\data| = N$. We assume that the dataset $\mathcal{D}$ is generated i.i.d. from a distribution $\mu(\bs, \ba)$ that specifies the effective behavior policy $\behavior(\ba|\bs) := \mu(\bs, \ba)/ \sum_{\ba} \mu(\bs, \ba)$. 
Note that this holds even if the data itself is generated by running a non-Markovian policy $\behavior$~\citep{puterman1994markov}. Let $n(\bs, \ba)$ be the number of times $(\bs, \ba)$ appear in $\data$, and $\hat{\transitions}(\cdot|\bs, \ba)$ and $\hat{r}(\bs, \ba)$ denote the empirical dynamics and reward distributions in $\data$, which may be different from $\transitions$ and $r$ due to stochasticity. Following \citet{rashidinejad2021bridging}, the goal is to minimize the suboptimality of the learned policy $\hat{\pi}$: 
\begin{equation}
\label{eqn:suboptimality_metric}
    \subopt(\hatpi) = \E{\mathcal{D} \sim \mu}{J(\pi^*) - J(\hatpi)} = \E{\data}{\E{\bs_0 \sim \rho}{V^*(\bs_0) - V^{\hatpi}(\bs_0)}}. 
\end{equation}
We will now define some conditions on the offline dataset and MDP structure that we will use in our analysis. The first characterizes the distribution shift between the data distribution $\mu(\bs, \ba)$ and the normalized state-action marginal of $\pi^*$, given by 
$d^*(\bs, \ba) = (1 - \gamma) \sum_{t = 0}^{\infty} \gamma^t \prob{\bs_t = s, \ba_t = a; \pi^*}$, via a \emph{concentrability coefficient} $C^*$.
\begin{assumption}[\citet{rashidinejad2021bridging}, Concentrability of the data distribution] 
\label{assumption:concentrability}
Define $C^*$ to be the smallest, finite constant that satisfies:  $d^*(\bs, \ba)/\mu(\bs, \ba) \leq C^* ~~ \forall \bs \in \states, \ba \in \actions$.
\vspace{-0.1in}
\end{assumption}
Intuitively, the coefficient $C^*$ formalizes how well the data distribution $\mu(\bs, \ba)$ covers the state-action pairs visited under the optimal $\pi^*$, where $C^* = 1$ corresponds to data from $\pi^*$. If $\mu(\bs, \ba)$ primarily covers state-action pairs that are not visited by $\pi^*$, $C^*$ would be large. The next condition is that the return for any trajectory in the MDP is bounded by a constant, which w.l.o.g., we assume to be 1.
\begin{assumption}[\citet{ren2021nearly}, the value of any trajectory is bounded by 1]
\label{assumption:value_bounded}
The infinite-horizon discounted return for any trajectory $\tau = (\bs_0, \ba_0, r_0, \bs_1, \cdots)$ is bounded as $\sum_{t = 0}^{\infty} \gamma^t r_t \leq 1$.
\vspace{-0.1in}
\end{assumption}
This condition holds in sparse-reward tasks, particularly those where an agent succeeds or fails at its task once per episode. This is common in domains such as robotics~\citep{singh2020cog,kalashnikov2018scalable} and games ~\citep{bellemare2013ale}, where the agent receives a signal upon succeeding a task or winning. This condition also appears in prior work deriving suboptimality bounds for RL algorithms \citep{ren2021nearly,zhang2021reinforcement}. 

\textbf{Notation.}
Let $n \wedge 1 = \max\{n, 1\}$. Denote $\iota = \polylog(|\states|, H, N)$. We let $\iota$ be a polylogarithmic quantity, changing with context. For $d$-dimensional vectors $\bx, \by$, $\bx(i)$ denotes its $i$-th entry, and define $\variance(\bx, \by) = \sum_i \bx(i) \by(i)^2 - (\sum_i \bx(i) \by(i))^2$.

\vspace{-0.1in}
\section{Theoretical Comparison of BC and Offline RL}
\label{sec:theory}
\vspace{-0.1in}

In this section, we present performance guarantees for BC and offline RL, and characterize scenarios where offline RL algorithms will outperform BC. We first present general upper bounds for both algorithms in Section~\ref{sec:simple_bounds}, by extending prior work to account for the conditions discussed in Section~\ref{sec:prelims}. Then, we compare the performance of BC and RL when provided with the same data generated by an expert in Section~\ref{sec:expert_data} and when RL is given noisy, suboptimal data in Section~\ref{sec:noisy_expert_data}. Our goal is to characterize the conditions on the environment and offline dataset where RL can outperform BC. Furthermore, we provide intuition for when they are likely to hold in Appendix~\ref{app:intuitive}.

\vspace{-0.2cm}
\subsection{Improved Performance Guarantees of BC and Offline RL} 
\label{sec:simple_bounds}
\vspace{-0.2cm}
Our goal is to understand if there exist offline RL methods that can outperform BC for a given task. As our aim is to provide a proof of existence, we analyze representative offline RL and BC algorithms that achieve optimal suboptimality guarantees. For brevity, we only consider a conservative offline RL algorithm (as defined in Section~\ref{sec:related_work}) in the main paper and defer analysis of a representative policy-constraint method to Appendix~\ref{app:policy_constraint}. 
Both algorithms are described in Algorithms~\ref{alg:vilcb} and ~\ref{alg:vimbs}.

\textbf{Guarantees for BC.} For analysis purposes, we consider a BC algorithm that matches the empirical behavior policy on states in the offline dataset, and takes uniform random actions outside the support of the dataset. This BC algorithm was also analyzed in prior work~\citep{rajaraman2020toward}, and is no worse than other schemes for acting at out-of-support states in general. Denoting the learned BC policy as $\hatbehavior$, we have $\forall \bs \in \mathcal{D}, \hatbehavior(\ba|\bs) \leftarrow n(\bs, \ba) / n(\bs)$, and $\forall \bs \notin \mathcal{D}, \hatbehavior(\ba |\bs) \leftarrow 1/|\actions|$. We adapt the results presented by \citet{rajaraman2020toward} to the setting with Conditions~\ref{assumption:concentrability} and \ref{assumption:value_bounded}. BC can only incur a non-zero asymptotic suboptimality (\ie does not decrease to $0$ as $N \rightarrow \infty$) in scenarios where $C^*=1$, as it aims to match the data distribution $\mu(\bs, \ba)$, and a non-expert dataset will inhibit the cloned policy from matching the expert $\pi^*$. The performance for BC is bounded in Theorem~\ref{thm:bc}. 
\begin{theorem}[Performance of BC]
\label{thm:bc}
Under Conditions~\ref{assumption:concentrability} and \ref{assumption:value_bounded}, the suboptimality of BC satisfies
\begin{equation*}
    \mathsf{SubOpt}(\hatbehavior) \lesssim \frac{(C^* - 1) H}{2} + \frac{|\states| H \iota}{N}. 
\end{equation*}
\end{theorem}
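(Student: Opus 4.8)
The plan is to adapt the behavioral-cloning analysis of \citet{rajaraman2020toward} so as to fold in Conditions~\ref{assumption:concentrability} and~\ref{assumption:value_bounded}, splitting $\mathsf{SubOpt}(\hatbehavior)$ into a population term (capturing that $\hatbehavior$ converges to $\behavior$, not $\pi^*$, when $C^*>1$) and a finite-sample term (from states absent in $\mathcal D$). Take $\pi^*$ deterministic WLOG, and assume non-negative rewards (the sparse-reward setting of interest), so that with Condition~\ref{assumption:value_bounded} returns lie in $[0,1]$. First I would couple rollouts of $\pi^*$ and $\hatbehavior$ from a shared $\bs_0\sim\rho$ with common randomness, and let $T$ be the first step at which the two policies pick different actions. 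For $t<T$ the trajectories coincide and follow $\pi^*$, so $\bs_T$ has the law of the time-$T$ state of $\pi^*$; by Condition~\ref{assumption:value_bounded} the discounted return of the sub-trajectory from $\bs_T$ is at most $1$, so the post-split return is $\le \gamma^T$ under $\pi^*$ and $\ge 0$ under $\hatbehavior$, yielding $J(\pi^*)-J(\hatbehavior)\le\E{}{\gamma^T}$. This is precisely the place where Condition~\ref{assumption:value_bounded} replaces the usual $\|Q\|_\infty\le H$ bound and so converts the classical $|\states|H^2/N$ rate into $|\states|H/N$.

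Next I would union-bound the two ways the policies can first disagree at step $t$ --- either $\bs_t\notin\mathcal D$ (then $\hatbehavior$ is uniform) or $\bs_t\in\mathcal D$ but the empirical behavior policy disagrees with $\pi^*$ at $\bs_t$ --- to get $\prob{T=t}\le \prob{\bs_t\notin\mathcal D}+\E{\bs\sim d^*_t}{1-\hatbehavior(\pi^*(\bs)\mid\bs)}$, where $d^*_t$ is the time-$t$ state marginal of $\pi^*$. Summing $\gamma^t$-weighted and using $(1-\gamma)\sum_t\gamma^t d^*_t=d^*$, then taking expectation over $\mathcal D$,
\begin{equation*}
\mathsf{SubOpt}(\hatbehavior)\;\lesssim\; H\,\E{\bs\sim d^*}{\prob{\bs\notin\mathcal D}}\;+\;H\,\E{\bs\sim d^*}{1-\behavior(\pi^*(\bs)\mid\bs)},
\end{equation*}
where replacing $\hatbehavior$ by $\behavior$ on the second term uses a multiplicative Chernoff bound on $n(\bs,\pi^*(\bs))/n(\bs)$ on the event that every state with $\mu(\bs)\gtrsim\iota/N$ is well sampled; the complement event and the few under-sampled states contribute only lower-order terms absorbed into $\iota$.

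For the population term, Condition~\ref{assumption:concentrability} gives $d^*(\bs)=d^*(\bs,\pi^*(\bs))\le C^*\mu(\bs)\,\behavior(\pi^*(\bs)\mid\bs)$, hence $1-\behavior(\pi^*(\bs)\mid\bs)\le 1-d^*(\bs)/(C^*\mu(\bs))$ and, by Cauchy--Schwarz ($\sum_\bs d^*(\bs)^2/\mu(\bs)\ge(\sum_\bs d^*(\bs))^2/\sum_\bs\mu(\bs)=1$), we get $\E{\bs\sim d^*}{1-\behavior(\pi^*(\bs)\mid\bs)}\le 1-1/C^*\lesssim C^*-1$ (it vanishes at $C^*=1$, where $\behavior=\pi^*$), and tracking the constants exactly as in \citet{rajaraman2020toward} yields the stated $(C^*-1)H/2$. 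For the finite-sample term, $\prob{\bs\notin\mathcal D}=(1-\mu(\bs))^N\le e^{-N\mu(\bs)}$, and since $d^*(\bs)\le C^*\mu(\bs)$ and $\max_{x\ge0}xe^{-Nx}=1/(eN)$, summing over the $\le|\states|$ states gives $H\,\E{\bs\sim d^*}{\prob{\bs\notin\mathcal D}}\lesssim|\states|H\iota/N$. Adding the two bounds gives the claim. I expect the main obstacle to be the bookkeeping around $\hatbehavior$ versus the population $\behavior$ at in-sample states --- handling the dependence between the event $\{\bs\in\mathcal D\}$ and the empirical counts $n(\bs,\cdot)$ --- together with keeping the $C^*=1$ case (only the finite-sample term survives) and the $C^*>1$ case bundled into one clean statement.
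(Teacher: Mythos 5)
Your proof is correct and reaches the stated bound, but by a somewhat different route than the paper's. The paper inserts the behavior policy as an intermediate comparator, writing $J(\pi^*)-J(\hatbehavior)\le[J(\pi^*)-J(\behavior)]+[J(\behavior)-J(\hatbehavior)]$; it then invokes Theorem~4.4 of \citet{rajaraman2020toward} as a black box for the second bracket (adapted to Condition~\ref{assumption:value_bounded}, which is exactly what drops the rate from $|\states|H^2/N$ to $|\states|H/N$), and bounds the first bracket by a total-variation argument on occupancy measures, $\tfrac12\sum_t\sum_{(\bs,\ba)}\gamma^t\,|d_t^*(\bs)\behavior(\ba|\bs)-d_t^*(\bs,\ba)|\le\tfrac{(C^*-1)H}{2}$. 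You instead compare $\hatbehavior$ to $\pi^*$ directly via a first-disagreement-time coupling, re-deriving the Rajaraman-style machinery rather than citing it, and you bound the population term through the pointwise inequality $\behavior(\pi^*(\bs)\mid\bs)\ge d^*(\bs)/(C^*\mu(\bs))$ plus Cauchy--Schwarz, obtaining $H(1-1/C^*)$ --- in fact tighter than $H(C^*-1)/2$ once $C^*>2$ and equivalent up to constants otherwise. Two small remarks. First, the Chernoff step you flag as the main obstacle is unnecessary: conditioned on $n(\bs)\ge 1$, the actions recorded at $\bs$ are i.i.d.\ draws from $\behavior(\cdot\mid\bs)$, so $\mathbb{E}_{\data}\left[\mathbb{I}\{\bs\in\data\}\,(1-\hatbehavior(\pi^*(\bs)\mid\bs))\right]\le 1-\behavior(\pi^*(\bs)\mid\bs)$ holds exactly in expectation, with no concentration argument or good-event bookkeeping. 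Second, because you measure the missing mass under $d^*$ rather than under $\mu$, your finite-sample term comes out as $C^*|\states|H\iota/N$ rather than $|\states|H\iota/N$; this is harmless --- the excess $(C^*-1)|\states|H/N$ is dominated by the first term whenever $N\gtrsim|\states|$, and the bound is vacuous otherwise --- but the paper's decomposition avoids it automatically, since the missing-mass term there is measured under the behavior policy's own occupancy, which is exactly the sampling distribution $\mu$.
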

A proof of Theorem~\ref{thm:bc} is presented in Appendix~\ref{app:proof_bc}. The first term is the additional suboptimality incurred due to discrepancy between the behavior and optimal policies. The second term in this bound is derived by bounding the expected visitation frequency of the learned policy $\hatbehavior$ onto states not observed in the dataset.
The analysis is similar to that for existing bounds for imitation learning~\citep{ross2010efficient,rajaraman2020toward}. 
We achieve $\tilde{\mathcal{O}}(H)$ suboptimality rather than $\tilde{\mathcal{O}}(H^2)$
due to Condition~\ref{assumption:value_bounded}, since the worst-case suboptimality of any trajectory is $1$ rather than $H$.

\textbf{Guarantees for conservative offline RL.} We consider guarantees for a class of offline RL algorithms that maintain conservative value estimator such that the estimated value lower-bounds the true one, \ie $\hat{V}^{\pi} \leq V^{\pi}$ for policy $\pi$. 
Existing offline RL algorithms achieve this by subtracting a penalty from the reward, either explicitly~\citep{yu2020mopo,kidambi2020morel} or implicitly~\citep{kumar2020conservative}.
We only analyze one such algorithm that does the former, but we believe the algorithm can serve as a theoretical model for general conservative offline RL methods, where similar algorithms can be analyzed using the same technique. 
While the algorithm we consider is similar to \vilcb, which was proposed by \citet{rashidinejad2021bridging} and subtracts a penalty $b(\bs, \ba)$ from the reward during value iteration, we use a different penalty that results in a tighter bound. The estimated Q-values are obtained by iteratively solving the following Bellman backup: $\hat{Q}(\bs, \ba) \leftarrow \hat{r}(\bs, \ba) - b(\bs, \ba) + \sum_{\bs'} \hat{P}(\bs' | \bs, \ba) \max_{\ba'} \hat{Q}(\bs', \ba')$. The learned policy is then given by
$\widehat{\pi}^*(\bs) \leftarrow \argmax_\ba \hat{Q}(\bs, \ba)$. The specific $b(\bs, \ba)$ we analyze is derived using Bernstein's inequality: 
\begin{align}
\label{eqn:bonus_expression}
    b(\bs, \ba) ~\leftarrow \sqrt{\frac{\mathbb{V}(\hat{P}(\cdot|\bs, \ba), \hat{V}) \iota}{(n(\bs, \ba) \wedge 1)}} + \sqrt{\frac{\hat{r}(\bs, \ba)\iota}{(n(\bs, \ba) \wedge 1)}} + \frac{\iota}{(n(\bs, \ba) \wedge 1)}\,.
\end{align}
The performance of the learned policy $\widehat{\pi}^*$ can then be bounded as:
\begin{theorem}[Performance of conservative offline RL] 
\label{thm:lcb_rl}
Under Conditions~\ref{assumption:concentrability} and \ref{assumption:value_bounded}, the policy $\widehat{\pi}^*$ found by conservative offline RL algorithm can satisfy
\begin{equation*}
    \mathsf{SubOpt}(\widehat{\pi}^*) \lesssim \sqrt{\frac{C^* |\states| H \iota}{N}} + \frac{C^* |\states| H \iota}{N}. 
\end{equation*}
\end{theorem}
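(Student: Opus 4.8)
\emph{Overall strategy and Step 1 (concentration).} The plan is to run the standard pessimistic value-iteration argument — a high-probability ``good event,'' a pessimism (lower-bound) property, and charging the suboptimality to the accumulated penalty along $\pi^*$ — but to propagate the transition-variance terms through the analysis rather than bounding each one crudely by $\|\hat{V}\|_\infty^2 \le 1$, so that Condition~\ref{assumption:value_bounded} and the Bernstein penalty in \eqref{eqn:bonus_expression} together yield a $\sqrt{H}$ (not $H$) dependence. Concretely, fix a target failure probability $1/\mathrm{poly}(N)$ and define the good event $\gE$ on which, simultaneously over all $(\bs,\ba)$: (i) $|\hat{r}-r|(\bs,\ba) \lesssim \sqrt{\hat{r}(\bs,\ba)\iota/(n(\bs,\ba)\wedge 1)} + \iota/(n(\bs,\ba)\wedge 1)$ (empirical Bernstein; a $[0,1]$ reward has variance $\le$ its mean); (ii) $|(\hat{P}-P)(\cdot|\bs,\ba)^\top w| \lesssim \sqrt{\variance(\hat{P}(\cdot|\bs,\ba),w)\iota/(n(\bs,\ba)\wedge 1)} + \iota/(n(\bs,\ba)\wedge 1)$ for the relevant bounded vectors $w$ (Bernstein plus a covering/recursive argument as in \citet{ren2021nearly} to handle data-dependence of $w$); and (iii) $n(\bs,\ba) \gtrsim N\mu(\bs,\ba)$ whenever $N\mu(\bs,\ba) \gtrsim \iota$. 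A union bound over $|\states||\actions|$ pairs (and the value-iteration rounds) gives $\prob{\gE}\ge 1-1/\mathrm{poly}(N)$; off $\gE$ the suboptimality is at most $1$ by Condition~\ref{assumption:value_bounded}, which is negligible. On $\gE$, $b(\bs,\ba)$ dominates the one-step model error $|\hat{r}-r|(\bs,\ba)+\gamma|(\hat{P}-P)(\cdot|\bs,\ba)^\top\hat{V}|$.

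\emph{Step 2 (pessimism and reduction).} By induction on the penalized Bellman backup — using its monotonicity and the fact that on $\gE$ it lies pointwise below the true backup of the policy greedy in $\hat{Q}$ (since $\max_{\ba'}\hat{Q}(\bs',\ba')=\hat{Q}(\bs',\widehat{\pi}^*(\bs'))$) — the fixed point obeys $\hat{Q}\le Q^{\widehat{\pi}^*}$, hence $\hat{V}\le V^{\widehat{\pi}^*}$, so $\subopt(\widehat{\pi}^*)\le \E{\data,\,\bs_0\sim\rho}{V^*(\bs_0)-\hat{V}(\bs_0)}$. Since $\widehat{\pi}^*$ is greedy in $\hat{Q}$, $\hat{V}$ upper-bounds the value of $\pi^*$ in the penalized empirical MDP, so the value-difference lemma between $\pi^*$ in the true MDP and in the penalized empirical MDP, combined with Step~1, yields
\begin{equation*}
V^*(\bs_0)-\hat{V}(\bs_0) \;\lesssim\; \textstyle\sum_{t\ge 0}\gamma^t\,\E{\bs_t,\ba_t\sim\pi^*}{b(\bs_t,\ba_t)} \;=\; \frac{1}{1-\gamma}\,\E{(\bs,\ba)\sim d^*}{b(\bs,\ba)},
\end{equation*}
where, taking $\pi^*$ deterministic, $d^*$ is supported on at most $|\states|$ pairs.

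\emph{Step 3 (bounding $\frac{1}{1-\gamma}\E{d^*}{b}$ — the crux).} Split $b$ into its three summands. For the $\iota/(n\wedge 1)$ and $\sqrt{\hat{r}\iota/(n\wedge 1)}$ parts, use $n(\bs,\ba)\gtrsim N\mu(\bs,\ba)\ge Nd^*(\bs,\ba)/C^*$ on covered pairs, handle the few pairs with $N\mu(\bs,\ba)\lesssim\iota$ separately, and sum over the $\le|\states|$ pairs in $\mathrm{supp}(d^*)$ to get the lower-order term $\tfrac{C^*|\states|H\iota}{N}$. The variance summand $\sqrt{\variance(\hat{P}(\cdot|\bs,\ba),\hat{V})\iota/(n\wedge 1)}$ is the heart of the proof: applying Cauchy--Schwarz over time within each trajectory and then over $\mathrm{supp}(d^*)$, with $n\gtrsim N\mu\ge Nd^*/C^*$, reduces matters to controlling an accumulated transition-variance quantity of the form $\sum_{t\ge 0}\gamma^{2t}\E{\pi^*}{\variance(P(\cdot|\bs_t,\ba_t),\hat{V})}$ (up to $\iota$ factors and model-error corrections); by the law of total variance this is the variance of the discounted return of $\hat{V}$, which Condition~\ref{assumption:value_bounded} bounds by $O(1)$ once $\variance(P,\hat{V})$ is compared to $\variance(P,V^*)$ and $\variance(P,V^*-\hat{V})$ and $\subopt(\widehat{\pi}^*)$ is fed back on itself. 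Assembling the pieces produces $\subopt(\widehat{\pi}^*) \lesssim \sqrt{\tfrac{C^*|\states|H\iota}{N}\bigl(1+\subopt(\widehat{\pi}^*)\bigr)} + \tfrac{C^*|\states|H\iota}{N}$, which, solved for $\subopt(\widehat{\pi}^*)$, gives the claimed bound.

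\emph{Main obstacle.} The delicate part is the variance accounting in Step~3: turning the naive $\sum_t\gamma^t(\cdots)$ telescoping — which would only deliver an $H$ dependence — into a $\sum_t\gamma^{2t}(\cdots)$ form controllable by the law of total variance and Condition~\ref{assumption:value_bounded}, while simultaneously keeping the $1/(n\wedge 1)$ count terms discounted enough to preserve the $1/N$ rate and the $C^*|\states|$ scaling, and closing the resulting self-bounding inequality. This is precisely where the variance-aware (Bernstein) penalty and the horizon-free machinery of \citet{ren2021nearly,zhang2021reinforcement} are needed, and the use of single-policy concentrability $C^*$ rather than all-policy coverage is what leaves the residual $\sqrt{H}$ in the leading term. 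The rest — the exact Bernstein bookkeeping, the treatment of under-covered state-action pairs, and checking that $b(\bs,\ba)$ is at once large enough to ensure pessimism and small enough in aggregate — is routine.
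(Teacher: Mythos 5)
Your proposal is correct and follows essentially the same route as the paper's proof: a Bernstein good event (with the $s$-absorbing-MDP/covering device for the data-dependence of $\hat{V}$), a pessimism lemma reducing $\subopt(\widehat{\pi}^*)$ to the discounted sum of penalties along $d^*$, Cauchy--Schwarz plus concentrability for the reward and count terms, and a self-bounding inequality to close the accumulated-variance term. The only cosmetic difference is that you frame the crux via the law of total variance, whereas the paper runs the explicit $2^i$-power variance recursion of \citet{ren2021nearly} (their Lemma 4) on $f(i)=\sum_t\gamma^{m-t}d^*_{m-t}\,\mathbb{V}(\hat{P},(\hat{V}_{t-1})^{2^i})$ before solving for $\Phi$ — the same horizon-free machinery you cite.
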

We defer a proof for Theorem~\ref{thm:lcb_rl} to Appendix~\ref{app:proof_lcb}. At a high level, we first show that our algorithm is always conservative, \ie $\forall \bs, \, \hat{V}(\bs) \leq V^{\hat{\pi}^*}(\bs)$, and then bound the total suboptimality incurred as a result of being conservative. 
Our bound in Theorem~\ref{thm:lcb_rl} improves on existing bounds several ways: \textbf{(1)} by considering pessimistic value estimates, we are able to remove the strict coverage assumptions used by \citet{ren2021nearly}, \textbf{(2)} by using variance recursion techniques on the Bernstein bonuses, with Condition~\ref{assumption:value_bounded}, we save a $O(H^2)$ factor over \vilcb in \citet{rashidinejad2021bridging}, and \textbf{(3)} we shave a $|\states|$ factor by introducing $s$-absorbing MDPs for each state as in \citet{agarwal2020model}. Building on this analysis of the representative offline RL method, we will now compare BC and offline RL under specific conditions on the MDP and dataset. 

\vspace{-0.2cm}
\subsection{Comparison Under Expert Data}
\label{sec:expert_data}
\vspace{-0.2cm}
We first compare the performance bounds from Section~\ref{sec:simple_bounds} when the offline dataset is generated from the expert. In relation to Condition~\ref{assumption:concentrability}, this corresponds to small $C^*$. Specifically, we consider $C^* \in [1, 1 + \tilde{\mathcal{O}}(1/N)]$ and in this case, the suboptimality of BC in Theorem~\ref{thm:bc} scales as $\mathcal{\tilde{O}}(|\states| H / N)$. 
In this regime, we perform a nuanced comparison by analyzing specific scenarios where RL may outperform BC. We consider the case of $C^* = 1$ and $C^* = 1 + \tilde{\mathcal{O}}(1/N)$ separately.

\textbf{What happens when $C^* = 1$?} In this case, we derive a lower-bound of $|\states| H / N$ for any offline algorithm, by utilizing the analysis of \citet{rajaraman2020toward} and factoring in Condition~\ref{assumption:value_bounded}.
\begin{theorem}[Information-theoretic lower-bound for offline learning with $C^* = 1$]
\label{thm:expert_lower_bound}
For any learner $\widehat{\pi}$, there exists an MDP $\mathcal{M}$ satisfying Assumption~\ref{assumption:value_bounded}, and a deterministic expert $\pi^*$, such that the expected suboptimality of the learner is lower-bounded:
\vspace{-0.1in}
\begin{equation*}
    \sup_{\mathcal{M}, \pi^*}~~ \mathsf{SubOpt}(\widehat{\pi}) \gtrsim \frac{|\states| H}{N}
\end{equation*}
\vspace{-0.15in}
\end{theorem}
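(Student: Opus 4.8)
I would prove this by exhibiting a family of hard MDPs, indexed by a labelling $\{a^*_j\}_{j=1}^{|\states|}$ of ``critical actions,'' and then applying a Yao-style averaging argument: if the expected suboptimality is $\gtrsim |\states| H/N$ when the labels are drawn uniformly at random, then some fixed labelling — hence some fixed MDP with a deterministic optimal $\pi^*$ — forces at least that much suboptimality on every learner $\hat\pi$. Write $S=|\states|$. I would state the bound in the informative regime $N \gtrsim SH$ (so that $SH/N \lesssim 1$); for smaller $N$ the claim already follows from a standard coupon-collector argument forcing $\Omega(1)$ suboptimality, which I would mention in passing.

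\emph{The construction.} The state space is $\{s_1,\dots,s_S\}$ together with an absorbing state $\mathsf{dead}$ of zero reward, with $|\actions|\ge 2$. From each $s_j$, the designated action $a^*_j$ gives reward $1$ and transitions to $\mathsf{dead}$; every other action gives reward $0$ and also transitions to $\mathsf{dead}$; $\mathsf{dead}$ self-loops with zero reward. Every trajectory then has discounted return in $[0,1]$, so Condition~\ref{assumption:value_bounded} holds, and the optimal policy $\pi^*(s_j)=a^*_j$ is deterministic with $V^*(s_j)=1$, $V^*(\mathsf{dead})=0$. The key choice is the initial distribution: put $\rho(s_j)=H/(2N)$ for every $j$ — legitimate since $SH/(2N)\le 1$ in our regime — and the remaining mass on $\mathsf{dead}$. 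Because each $s_j$ is occupied only at $t=0$ under any policy, its normalized discounted occupancy is $(1-\gamma)\rho(s_j)=1/(2N)$; in particular the data distribution is $\mu(s_j,\cdot)=d^*(s_j,\cdot)=1/(2N)$ and Condition~\ref{assumption:concentrability} holds with $C^*=1$.

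\emph{Reducing suboptimality to label-guessing.} Since $\mathsf{dead}$ absorbs after one step, $V^*(s_j)-V^{\hat\pi}(s_j)=\indicator\{\hat\pi(s_j)\ne a^*_j\}$ and $V^*(\mathsf{dead})-V^{\hat\pi}(\mathsf{dead})=0$, so $\subopt(\hat\pi)=\frac{H}{2N}\sum_{j=1}^S \prob{\hat\pi(s_j)\ne a^*_j}$. A sampled transition reveals $a^*_j$ iff its state equals $s_j$, which happens with probability $1/(2N)$ per draw, so $s_j$ is absent from $\data$ with probability $(1-1/(2N))^N\ge 1/2$ (Bernoulli); moreover the state-marginal of $\mu$ does not depend on the labels, so conditioned on $s_j\notin\data$ the dataset is independent of $a^*_j$, and a uniformly random $a^*_j$ over the $\ge 2$ actions makes any (possibly randomized) learner guess wrong with probability $\ge 1/2$. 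Combining, $\expected_{\{a^*\}}\subopt(\hat\pi)\ge \frac{H}{2N}\cdot S\cdot\frac12\cdot\frac12=\frac{SH}{8N}$, and taking the supremum over labellings gives $\sup_{\mathcal M,\pi^*}\subopt(\hat\pi)\gtrsim SH/N$.

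\emph{Where the $H$ comes from, and the main obstacle.} The only non-routine ingredient is the factor $H=1/(1-\gamma)$, and the construction is designed to make it transparent: the learner is scored under $\rho$, which assigns mass $H/(2N)$ to each informative state, but it only observes data from the discounted occupancy measure, which shrinks that mass by $1-\gamma$ to exactly $1/(2N)$ — the threshold at which a constant fraction of the $s_j$ stay unobserved for every $N$ in the regime. The design choice that requires care is tuning $\rho$ (and hence the MDP) to $N$ so the bound is $\Omega(SH/N)$ across the whole range of $N$, rather than only near $N\approx SH$ as a fixed construction would give; once that is done, the remainder is the standard information-theoretic ``you cannot guess an unseen optimal action'' argument adapted from \citet{rajaraman2020toward}, with Condition~\ref{assumption:value_bounded} ensuring each wrong guess costs only $1$ instead of $\Theta(H)$.
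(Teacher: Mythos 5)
Your proposal is correct, and it takes a genuinely different route from the paper's. The paper does not write out a standalone proof: it invokes the hard instance from Theorem 6.1 of \citet{rajaraman2020toward} --- a finite-horizon, layered construction in which a wrong action sends the learner to a permanently bad state, so that the retained factor of $H$ comes from summing the per-timestep missing mass over $H$ layers via the performance difference lemma, while Condition~\ref{assumption:value_bounded} caps the cost of each mistake at $1$ instead of $\Theta(H)$ and removes the other factor. You instead build a one-step discounted MDP that is native to the paper's actual setting: all the action information is concentrated at $t=0$, and the factor of $H$ arises entirely from the mismatch between the initial distribution $\rho$ (under which the learner is scored, mass $H/(2N)$ per state) and the normalized occupancy $d^* = \mu$ (from which the $N$ i.i.d.\ transitions are drawn, mass $1/(2N)$ per state), so that a constant fraction of the informative states go unobserved and an averaging argument over random labellings finishes the job. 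Your version has two advantages worth noting: it works directly in the i.i.d.-transition data model the paper assumes (rather than the full-trajectory model of \citet{rajaraman2020toward}), and it makes explicit that the surviving $H$ is a normalization artifact of the discounted occupancy measure rather than a compounding-error phenomenon. The price is that the MDP (through $\rho$) must be tuned to $N$, which you correctly flag and which is permissible under the theorem's quantifier order; your handling of the small-$N$ regime and the $(1-1/(2N))^N \ge 1/2$ and uniform-guessing steps are all sound.
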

The proof of Theorem~\ref{thm:expert_lower_bound} uses the same hard instance from Theorem 6.1 of \citet{rajaraman2020toward}, except that one factor of $H$ is dropped due to Condition~\ref{assumption:value_bounded}. The other factor of $H$ arises from the performance difference lemma and is retained. In this case, where BC achieves the lower bound up to logarithmic factors, we argue that we cannot improve over BC. This is because the suboptimality of BC is entirely due to encountering states that do not appear in the dataset; without additional assumptions on the ability to generalize to unseen states, offline RL must incur the same suboptimality, as both methods would choose actions uniformly at random. 
\begin{tcolorbox}[colback=blue!6!white,colframe=black,boxsep=0pt,top=3pt,bottom=5pt]
\begin{protocol}
\label{guideline:addressing_overfitting}
{When no assumptions are made on the environment structure, both offline RL and BC perform equally poorly with trajectories from an expert demonstrator.}
\end{protocol}
\end{tcolorbox} 
However, we argue that even with expert demonstrations as data, $C^*$ may not exactly be $1$. Na\"{i}vely, it seems plausible that the expert that collected the dataset did not perform optimally at every transition; this is often true for humans, or stochastic experts such as $\eps$-greedy or maximum-entropy policies.
In addition, there are scenarios where $C^* > 1$ even though the expert behaves optimally: for example, when the environment is stochastic or in the presence of small distribution shifts in the environment. One practical example of this is when the initial state distribution changes between dataset collection and evaluation (\eg in robotics~\citep{singh2020cog}, or self-driving~\citep{bojarski2016end}). Since the normalized state-action marginals $d^*(\bs, \ba), \mu(\bs, \ba)$ depend on $\rho(\bs)$, this would lead to $C^* > 1$ even when the data collection policy acts exactly as the expert $\pi^*$ at each state.

\textbf{What happens when $C^* = 1 + \tilde{\mathcal{O}}(1/N)$?} Here $C^*$ is small enough that BC still achieves the same optimal  $\tilde{\mathcal{O}}(|\states|H/N)$ performance guarantee. However, there is suboptimality incurred by BC even for states that appear in the dataset due to distribution shift, which allows us to argue about structural properties of MDPs that allow offline RL to perform better across those states, particularly for problems with large effective horizon $H$. We discuss one such structure below.

\begin{wrapfigure}{r}{0.6\textwidth}
\centering
\vspace{-0.8cm}
\includegraphics[width=0.98\linewidth]{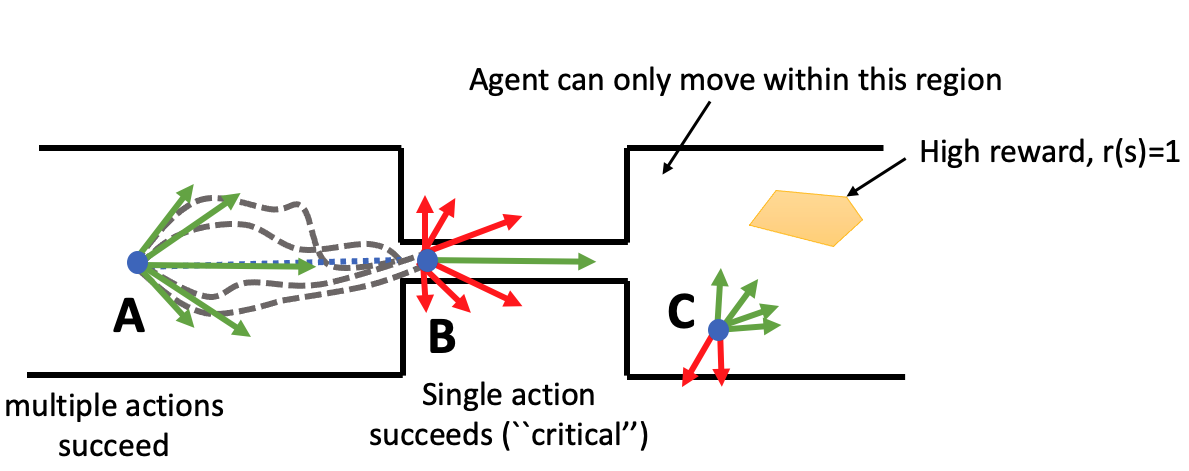}
\vspace{-0.4cm}
\caption{\label{fig:navigation_example} {\footnotesize{\textbf{Illustration showing the intuition behind critical states.} The agent is supposed to navigate to a high-reward region marked as the yellow polygon, without crashing into the walls. For different states, \textbf{A}, \textbf{B} and \textbf{C} that we consider, the agent has a high volume of actions that allow it to reach the goal at states \textbf{A} and \textbf{C}, but only few actions that allow it to do so at state \textbf{B}. States around \textbf{A} and \textbf{C} are not critical, and so this task has only a small volume of critical states (i.e., those in the thin tunnel)}.}}
\vspace{-0.5cm}
\end{wrapfigure}
In several practical problem domains, the return of any trajectory can mostly be explained by the actions taken at a small fraction of states, which we call \emph{critical states}.
This can occur when there exist a large proportion of states in a trajectory where it is not costly to recover after deviating from the optimal trajectory, or when identifying an optimal trajectory is easy: this could be simply because there exists a large volume of near-optimal trajectories, or because at all but a few states, the volume of ``good-enough'' actions is large and can be identified using reward information. Therefore, we would expect that offline RL can quickly identify and master such good-enough actions at a majority of the states while reward-agnostic BC would be unable to do so.

Two examples are in robotic manipulation and navigation. In manipulation tasks such as grasping, if the robot is not near the object, it can take many different actions and still pick up the object by the end; this is because unless the object breaks, actions taken by the robot are typically reversible \citep{kalashnikov2018qtopt}. In this phase, the robot just needs to avoid performing actions that do not at all move it closer to the object since it will fail to grasp it, but it needs to know no more than a general sense of how to approach the object, which is easily identifiable from the reward information. It is only at a few ``critical states'' when the robot grasps the object, where the robot should be careful to not drop and break the object. 

In navigation, as we pictorially illustrate in Figure~\ref{fig:navigation_example}, there may exist multiple paths that end at the same goal, particularly in large, unobstructed areas \citep{habitat19iccv}. For example, while navigating through a wide tunnel, the exact direction the agent takes may not matter so much as multiple directions will take the agent through the tunnel, and identifying these good-enough actions using reward information is easy. However, there are ``critical states'' like narrow doorways where taking a specific action is important.
Domains that do not satisfy this structure include cliffwalk environments, where a single incorrect action at any state will cause the agent to fall off the cliff. We can formally define one notion of critical states as follows:
\begin{definition}[Non-critical states]
A state $\bs$ is said to be non-critical (i.e., $\bs \notin \mathcal{C}$) if there exists a large subset $\cG(\bs)$ of $\varepsilon$-good actions, such that, 
\begin{align*}
    \forall \ba \in \cG(\bs), &~~ \max_{\ba'} Q^*(\bs, \ba') - Q^*(\bs, \ba) \leq \frac{\varepsilon}{H}, \text{~~and}\\
    \forall \ba \in \actions \smallsetminus \cG(\bs), &~~ \max_{\ba'} Q^*(\bs, \ba') - Q^*(\bs, \ba) \simeq \Delta(\bs).
\end{align*}
\end{definition}
Now the structural condition we consider is that for any policy, the total density of critical states, $\bs \in \mathcal{C}$ is bounded by $p_c$, and we will control $p_c$ for our bound.
\begin{assumption}[Occupancy of critical states is small]
\label{assumption:critical_num}
Let $\exists p_c \in (0, 1)$ such that for any policy $\pi$ the average occupancy of critical states is bounded above by $p_c$: $\sum_{\bs \in \mathcal{C}} d^\pi(\bs) \leq p_c$. 
\vspace{-0.05in}
\end{assumption}
We can show that, if the MDP satisfies the above condition with a small enough $p_c$ and and the number of good actions $|\mathcal{G}(\bs)|$ are large enough, then by controlling the gap $\Delta(\bs)$ between suboptimality of good and bad actions some offline RL algorithms can outperform BC. We perform this analysis under a simplified setting where the state-marginal distribution in the dataset matches that of the optimal policy and find that a policy constraint offline RL algorithm can outperform BC. We describe the informal statement below and discuss the details and a proof in Appendix~\ref{app:proof_lcb_critical}.  
\begin{corollary}[Offline RL vs BC with critical states]
\label{thm:lcb_rl_critical}
Let the dataset distribution be such that $\forall \bs, d^{\pi^*}(\bs) = \mu(\bs)$ and all for state-action pairs $\forall (\bs, \ba), n(\bs, \ba) \geq n_0$. Then, under Conditions~\ref{assumption:critical_num} with $p_c = 1/H$, \ref{assumption:concentrability} and \ref{assumption:value_bounded} a policy $\widehat{\pi}^*$ found by policy constraint offline RL (Equation~\ref{eqn:policy_constraint_modified}) satisfies
\vspace{-0.02in}
\begin{align*}
    \mathsf{SubOpt}(\widehat{\pi}^*) \lesssim \mathsf{SubOpt}(\widehat{\pi}_\text{BC}).
\end{align*}
\end{corollary}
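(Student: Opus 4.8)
The plan is to upper-bound $\subopt(\hatpi^*)$ block by block over critical versus non-critical states using the performance difference lemma, and to check that each block is at most a constant times the corresponding piece of BC's guarantee in Theorem~\ref{thm:bc}. Writing the performance difference lemma against $Q^*$ (the expectation over $\data\sim\mu$ is kept implicit throughout),
\[
 \subopt(\hatpi^*) \;=\; H\cdot\E{\bs\sim d^{\hatpi^*}}{\,V^*(\bs) - \E{\ba\sim\hatpi^*(\cdot|\bs)}{Q^*(\bs,\ba)}\,},
\]
I split the state expectation into $\bs\in\mathcal{C}$ and $\bs\notin\mathcal{C}$. On $\mathcal{C}$, Condition~\ref{assumption:value_bounded} bounds the integrand by $V^*(\bs)\le 1$, so with Condition~\ref{assumption:critical_num} at $p_c=1/H$ the critical block is at most $H\cdot p_c\cdot 1 = 1$ unconditionally; and since $n(\bs,\ba)\ge n_0$ holds on $\mathcal{C}$ as well, the conservative value estimates act near-optimally there too, so this block is no larger than the cost that the cloned $\hatbehavior$ already pays on those same states, plus an estimation term $\lesssim H p_c\sqrt{H\iota/n_0}$, because $\hatpi^*$ additionally exploits the reward.

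The real content is the non-critical block, and here I would reuse the conservative value-estimation bounds behind the proof of Theorem~\ref{thm:lcb_rl}. With $n_0$ samples at every pair, the Bernstein bonus satisfies $b(\bs,\ba)\lesssim\sqrt{H\iota/n_0}$ -- Condition~\ref{assumption:value_bounded} together with the variance recursion is what yields $\sqrt{H}$ rather than $\sqrt{H^2}$ -- and the learned values obey $Q^*(\bs,\ba)\ge\hat{Q}(\bs,\ba)\ge Q^*(\bs,\ba) - c\,b(\bs,\ba)$ for an absolute constant $c$. At a non-critical state, every $\ba\notin\cG(\bs)$ has $Q^*$-gap $\simeq\Delta(\bs)$ while every $\ba\in\cG(\bs)$ has gap $\le\varepsilon/H$; hence as soon as $n_0\gtrsim H\iota/\Delta_{\min}^2$ (so that $c\,b+\varepsilon/H<\Delta(\bs)$), the policy-constrained selection of Equation~\ref{eqn:policy_constraint_modified} cannot place mass on any action outside $\cG(\bs)$. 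This is where the hypotheses that $|\cG(\bs)|$ is large and $d^{\pi^*}(\bs)=\mu(\bs)$ enter: they force the near-expert $\behavior$ to already place enough mass on $\cG(\bs)$ that the constrained optimizer is free to concentrate there rather than being dragged onto a bad action. The non-critical block therefore contributes at most $H\cdot(\varepsilon/H)=\varepsilon$, so $\subopt(\hatpi^*)\lesssim \varepsilon + H p_c\sqrt{H\iota/n_0} + (\text{BC's cost on }\mathcal{C})$.

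To close the comparison, note that on this dataset $\hatbehavior\approx\behavior$, so the $(C^*-1)H/2$ term of Theorem~\ref{thm:bc} is unavoidable for BC -- it is the suboptimality of $\behavior$ itself, which $\hatbehavior$ recovers -- and equals $\Theta(\eta H)$ for a genuinely noisy data-collecting expert ($C^*-1=\Theta(\eta)>0$), a term that does not vanish as $N\to\infty$; BC is forced to pay it because, being reward-agnostic, it keeps re-executing the expert's merely sub-$\varepsilon$-good actions at non-critical states instead of the $\varepsilon/H$-good ones that $\hatpi^*$ selects. Taking $p_c=1/H$, $\varepsilon$ small, $|\cG(\bs)|$ large, and $n_0$ large enough relative to $\Delta_{\min}$ then makes the RL bound no larger than -- and, under mild extra conditions on $\Delta$, strictly smaller than -- the BC bound, which is exactly the claim; the explicit constants, the precise form of Equation~\ref{eqn:policy_constraint_modified}, and the model for the expert's noise are given in Appendix~\ref{app:proof_lcb_critical}. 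The step I expect to be the main obstacle is the non-critical one: establishing that the \emph{policy-constraint} optimizer, rather than the unconstrained pessimistic $\argmax$ analyzed for Theorem~\ref{thm:lcb_rl}, still lands inside $\cG(\bs)$ at every non-critical state -- this forces one to combine the large-$|\cG(\bs)|$ and $\mu=d^{\pi^*}$ hypotheses with the exact constraint in Equation~\ref{eqn:policy_constraint_modified}, and to re-derive the accuracy of the Bernstein bonuses under Condition~\ref{assumption:value_bounded} for that algorithm rather than quoting it verbatim.
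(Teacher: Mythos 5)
Your decomposition into critical and non-critical states via the performance difference lemma is exactly the paper's, and your treatment of the critical block (RL pays essentially what BC pays there, up to an estimation term controlled by the Theorem~\ref{thm:lcb_rl} machinery) matches the paper's modified-MDP argument in spirit. The two places where you diverge matter, though. First, on non-critical states the paper does not use your deterministic threshold ``once $n_0 \gtrsim H\iota/\Delta_{\min}^2$ the pessimistic optimizer cannot pick a bad action.'' Instead, Lemma~\ref{lemma:exp_fast} shows that the event $\{\hatpi_{\mathrm{RL}}(\bs)\notin\cG(\bs)\}$ forces \emph{every} good action's count below a threshold $n_\Delta(\bs)$, and bounds the probability of that event by a Chernoff/KL bound over the $n_0$ action draws at $\bs$, yielding a failure probability that is \emph{exponentially small in $n_0$} for all $n_0$ --- this is precisely where the largeness of $|\cG(\bs)|$ and the ratio $\pi^*(\ba|\bs)/\mu(\ba|\bs)\le\alpha$ enter quantitatively. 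Your version is salvageable, but only as a cruder statement under an extra hypothesis on $n_0$ that is not in the corollary, and it discards the quantitative decay rate that the paper needs later.

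Second, and more seriously, your closing comparison does not hold in the regime the corollary lives in. You argue BC must pay the $(C^*-1)H/2$ term of Theorem~\ref{thm:bc} and that this is $\Theta(\eta H)$, non-vanishing in $N$. But this corollary sits in the near-expert regime of Section~\ref{sec:expert_data}, $C^* = 1+\tilde{\mathcal{O}}(1/N)$ (the appendix proof assumes this explicitly), where $(C^*-1)H/2 = \mathcal{O}(H/N)$ and BC's entire bound is $\tilde{\mathcal{O}}(|\states|H/N)$ --- there is no non-vanishing BC term for your residual $\varepsilon + Hp_c\sqrt{H\iota/n_0}$ to sit below. (If $C^*-1$ were a constant, the corollary would be uninteresting; that is the Section~\ref{sec:noisy_expert_data} setting.) The paper's actual comparison is between \emph{rates} on non-critical states: BC's per-state error there decays as $1/(N+1)$, matching the information-theoretic lower bound of Theorem~\ref{thm:expert_lower_bound}, while RL's decays exponentially in $n_0$ by the lemma above; choosing $\Delta_0$ and $|\cG(\bs)|$ so that the ratio of these two quantities is at most $1/\sqrt{H}$, and taking $p_c = 1/H$ for the critical block, is what delivers $\mathsf{SubOpt}(\hatpi_{\mathrm{RL}}) \lesssim \mathsf{SubOpt}(\hatpi_{\mathrm{BC}})$. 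Without that exponential-versus-$1/N$ comparison, your argument does not close.
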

The above statement indicates that when we encounter $\mathcal{O}(1/H)$ critical states on average in any trajectory, then we can achieve better performance via offline RL than BC. In this simplified setting of full coverage over all states, the proof relies on showing that the particular offline RL algorithm we consider can collapse to good-enough actions at non-critical states quickly, whereas BC is unable to do so since it simply mimics the data distribution. On the other hand, BC and RL perform equally well at critical states. We can control the convergence rate of RL by adjusting $\Delta$, and $|\mathcal{G}(\bs)|$, whereas BC does not benefit from this and incurs a suboptimality that matches the information-theoretic lower bound (Theorem~\ref{thm:expert_lower_bound}). While this proof only considers one offline RL algorithm, we conjecture that it should be possible to generalize such an argument to the general family of offline RL algorithms. 
\begin{tcolorbox}[colback=blue!6!white,colframe=black,boxsep=0pt,top=3pt,bottom=5pt]
\begin{protocol}
\label{guideline:critical_states}
{Offline RL can be preferred over BC, even with expert or near-expert data, when either the initial state distribution changes during deployment, or when the environment has a few ``critical'' states.}
\end{protocol}
\end{tcolorbox}

\vspace{-0.2cm}
\subsection{Comparison Under Noisy Data}
\label{sec:noisy_expert_data}
\vspace{-0.25cm}

In practice, it is often much more tractable to obtain suboptimal demonstrations rather than expert ones. For example, suboptimal demonstrations can be obtained by running a scripted policy. 
From Theorem~\ref{thm:bc}, we see that for $C^* = 1 + \Omega(1/\sqrt{N})$, BC will incur suboptimality that is worse asymptotically than offline RL. In contrast, from Theorem~\ref{thm:lcb_rl}, we note that offline RL does not scale nearly as poorly with increasing $C^*$.
Since offline RL is not as reliant on the performance of the behavior policy, we hypothesize that RL can actually benefit from suboptimal data when this improves coverage.
In this section, we aim to answer the following question: \emph{Can offline RL with suboptimal data outperform BC with an equal amount of expert data?} 
\begin{wrapfigure}{r}{0.5\textwidth}
\centering
\vspace{-0.3cm}
\includegraphics[width=0.98\linewidth]{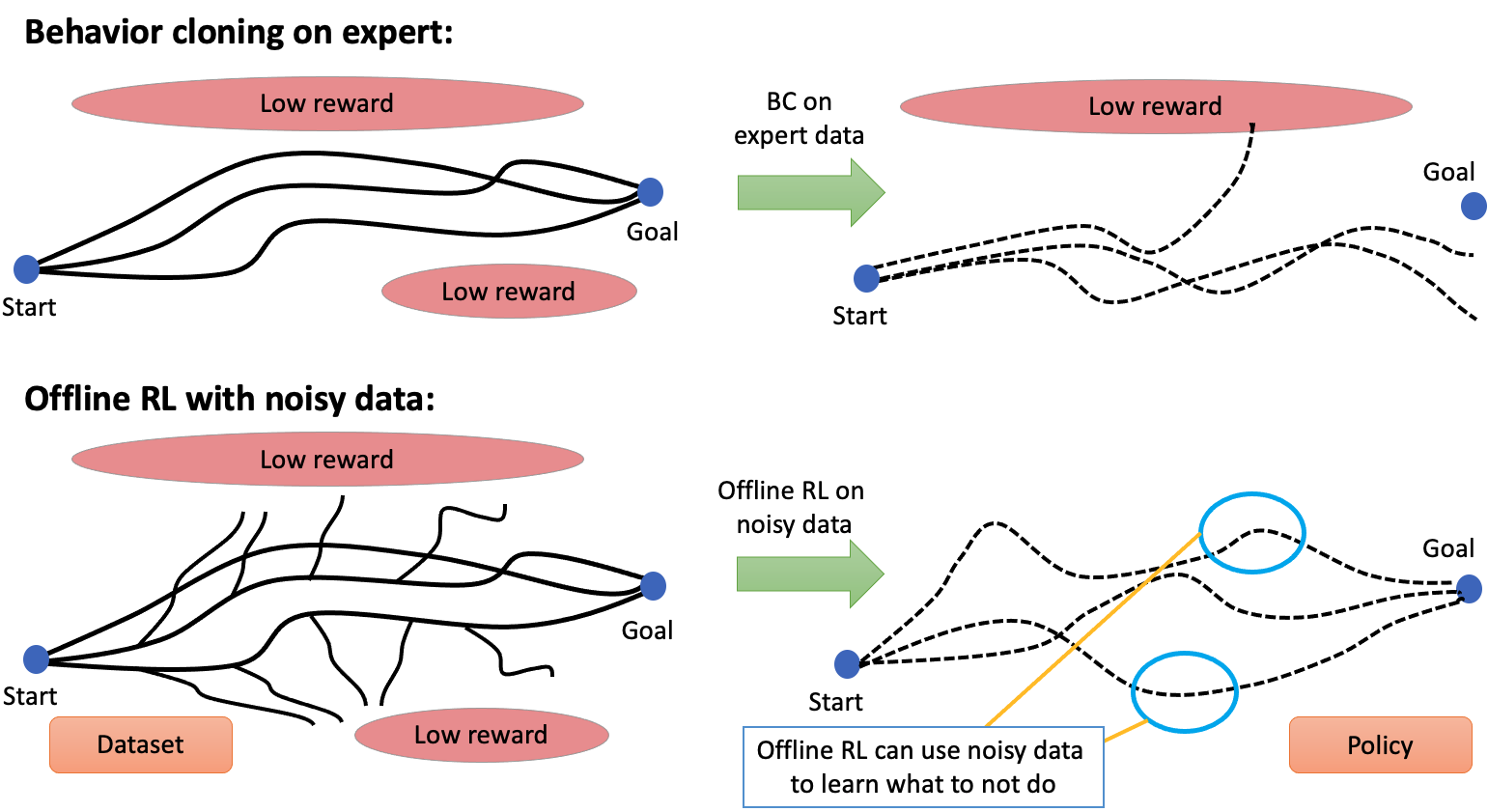}
\vspace{-0.3cm}
\caption{\label{fig:suboptimal_main} {\footnotesize{\textbf{Illustration showing the intuition behind noisy data.} BC trained on expert data (data composition is shown on the left) may diverge away from the expert and find a poor policy that does not solve the task. On the other hand, if instead of expert data, offline RL is provided with noisy expert data that sometimes ventures away from the expert distribution, RL can use this data to learn to stay on the course to the goal.}}}
\vspace{-0.6cm}
\end{wrapfigure}
We show in Corollary~\ref{thm:lcb_rl_noisy} that, if the suboptimal dataset $\data$ satisfies an additional coverage condition, then running conservative offline RL can attain $\mathcal{\tilde{O}}(\sqrt{H})$ suboptimality in the horizon. This implies, perhaps surprisingly, that offline RL with suboptimal data can actually outperform BC, \emph{even when the latter is provided expert data}. Formally, our coverage condition is the following:
\begin{assumption}[Coverage of the optimal policy]
\label{assumption:coverage}
$\exists b \in [\log{H}/N, 1)$ such that $\mu$ satisfies:
$\forall (\bs, \ba) \in \states \times \actions \text{~~where~~} d^*(\bs, \ba) \geq b / H$, we have $\mu(\bs, \ba) \geq b$. 
\vspace{-0.1in}
\end{assumption}

Intuitively, this means that the data distribution puts sufficient mass on states that have non-negligible density in the optimal policy distribution.
Note that this is a weaker condition than prior works that require \textbf{(1)} full coverage of the state-action space, and \textbf{(2)} enforce a constraint on the empirical state-action visitations $\widehat{\mu}(\bs, \ba)$ instead of $\mu(\bs, \ba)$ \citep{ren2021nearly,zhang2021reinforcement}. 
This condition is reasonable when the dataset is collected by $\epsilon$-greedy or a maximum-entropy expert, which is a standard assumption in MaxEnt IRL \citep{maxentirl}. Even if the expert is not noisy, we argue that in several real-world applications, creating noisy-expert data is feasible. In many robotics applications, it is practical to augment expert demonstrations with counterfactual data by using scripted exploration policies \citep{kalashnikov2018qtopt,kalashnikov2021mt}. Existing work has also simulated trajectories to increase coverage, as was done in self-driving by perturbing the vehicle location \citep{bojarski2016end}, or in robotics using learned simulators \citep{rao2020rl}. For intuition, we provide an illustrative example of the noisy data that can help offline RL in Figure~\ref{fig:suboptimal_main}.

\begin{corollary}[Performance of conservative offline RL with noisy data]
\label{thm:lcb_rl_noisy}
If $\mu$ satisfies Condition~\ref{assumption:coverage}, and under Conditions~\ref{assumption:concentrability} and \ref{assumption:value_bounded}, the policy $\widehat{\pi}^*$ found by conservative offline RL can satisfy:
\begin{equation*}
    \mathsf{SubOpt}(\widehat{\pi}^*) \lesssim 
    \sqrt{\frac{H\iota}{bN}} + \frac{H\iota}{bN} + \sqrt{b\iota} + \frac{C^*|\states|\iota}{N}\,. 
\end{equation*}
 If we take $b \sim \sqrt{H}/N$, then $\mathsf{SubOpt}(\widehat{\pi}^*) \lesssim \tilde{\mathcal{O}}(\sqrt{H})$.
\end{corollary}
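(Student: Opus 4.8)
The plan is to derive this as a refinement of the proof of Theorem~\ref{thm:lcb_rl}, replacing the single crude use of the concentrability bound (Condition~\ref{assumption:concentrability}) by a two-part argument that exploits Condition~\ref{assumption:coverage}. Recall that the analysis behind Theorem~\ref{thm:lcb_rl}, after establishing conservatism ($\hat{V}(\bs) \le V^{\hat{\pi}^*}(\bs)$ for all $\bs$) and propagating the penalty $b(\bs,\ba)$ of Equation~\ref{eqn:bonus_expression} through the Bellman recursion, reduces $\mathsf{SubOpt}(\hat{\pi}^*)$ to (up to $\iota$ factors) a $d^*$-weighted sum of the bonus --- morally $H\cdot\mathbb{E}_{(\bs,\ba)\sim d^*}[b(\bs,\ba)]$, further sharpened by the variance-recursion/law-of-total-variance step that, together with Condition~\ref{assumption:value_bounded}, buys the $\sqrt{H}$ improvement --- plus a lower-order ``never-sampled'' remainder. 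The factor $C^*|\states|$ in Theorem~\ref{thm:lcb_rl} enters only when one bounds $d^*(\bs,\ba)/(n(\bs,\ba)\wedge 1)$ and $d^*(\bs,\ba)\,\mathbb{V}(\hat{P}(\cdot|\bs,\ba),\hat{V})/(n(\bs,\ba)\wedge 1)$ via $d^*(\bs,\ba)\le C^*\mu(\bs,\ba)$. I would keep the conservatism and pessimism-propagation steps verbatim and only change how this final weighted sum is controlled.

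First I would partition the state--action space into a well-covered part $\mathcal{X}_{\mathrm{cov}}=\{(\bs,\ba):d^*(\bs,\ba)\ge b/H\}$ and its complement $\mathcal{X}_{\mathrm{unc}}$. On $\mathcal{X}_{\mathrm{cov}}$, Condition~\ref{assumption:coverage} gives $\mu(\bs,\ba)\ge b$; since $bN\ge \log H$, a multiplicative Chernoff bound together with a union bound over the at most $|\states||\actions|$ such pairs shows $n(\bs,\ba)\gtrsim bN$ simultaneously for all $(\bs,\ba)\in\mathcal{X}_{\mathrm{cov}}$, with failure probability absorbed into $\iota$. Substituting $n(\bs,\ba)\wedge 1\gtrsim bN$ into the weighted-bonus sum and re-running the \emph{same} variance-recursion argument used for Theorem~\ref{thm:lcb_rl} --- so that the aggregated next-state variance and $\mathbb{E}_{d^*}[\hat{r}]$ are $\tilde{\mathcal{O}}(1/H)$ by Condition~\ref{assumption:value_bounded} --- the Bernstein terms $\sqrt{\mathbb{V}\iota/(n\wedge1)}$ and $\sqrt{\hat{r}\iota/(n\wedge1)}$ contribute $\sqrt{H\iota/(bN)}$, and the linear term $\iota/(n\wedge1)$ contributes $H\iota/(bN)$; these are the first two terms of the claimed bound.

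Next, for $\mathcal{X}_{\mathrm{unc}}$ I would not try to control $n(\bs,\ba)$; instead I would bound the bonus by its worst case, using that $\hat{V}$ and $\hat{r}$ are clipped to $[0,1]$ by Condition~\ref{assumption:value_bounded} (hence $\mathbb{V}(\hat{P}(\cdot|\bs,\ba),\hat{V}),\hat{r}(\bs,\ba)\le 1$ and $b(\bs,\ba)\lesssim\iota$), together with the fact that every such pair has $d^*(\bs,\ba)<b/H$ and $\sum_{(\bs,\ba)}d^*(\bs,\ba)=1$. Reusing the $s$-absorbing-MDP device of Theorem~\ref{thm:lcb_rl} to avoid a spurious $|\actions|$ factor and applying Cauchy--Schwarz over these pairs, the total contribution of $\mathcal{X}_{\mathrm{unc}}$ collapses to $\sqrt{b\iota}$ (which is itself $\tilde{\mathcal{O}}(1)$ since $b<1$, and hence never the bottleneck). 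Finally, the ``never-sampled'' remainder does not benefit from Condition~\ref{assumption:coverage} and is carried over unchanged as $C^*|\states|\iota/N$. Summing the four pieces yields the displayed bound; the specialization is then arithmetic: with $b\sim\sqrt{H}/N$ we get $bN\sim\sqrt{H}$, so $H\iota/(bN)\sim\sqrt{H}\iota$ dominates, while $\sqrt{H\iota/(bN)}\sim H^{1/4}\sqrt{\iota}$, $\sqrt{b\iota}\sim H^{1/4}\sqrt{\iota/N}$ and $C^*|\states|\iota/N$ are lower order, giving $\mathsf{SubOpt}(\hat{\pi}^*)\lesssim\tilde{\mathcal{O}}(\sqrt{H})$.

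The step I expect to be the main obstacle is the $\mathcal{X}_{\mathrm{unc}}$ accounting: bringing its contribution down to $\sqrt{b\iota}$ with no stray $|\states|$, $|\actions|$, or $H$ factors requires re-running the variance-recursion/absorbing-MDP bookkeeping of Theorem~\ref{thm:lcb_rl} \emph{restricted} to $\mathcal{X}_{\mathrm{unc}}$ rather than applying it globally, and checking that partitioning the final weighted sum does not disturb the telescoping used in the pessimism-propagation step (it should not, since the partition acts only on the last sum, not on the recursion). A secondary point is verifying that the Chernoff concentration for $n(\bs,\ba)$ is still valid at the boundary $b=\log H/N$ of the allowed range --- which is precisely why Condition~\ref{assumption:coverage} imposes $b\ge\log H/N$ --- and that the $\iota$ in the final bound is large enough to swallow the union-bound log factors.
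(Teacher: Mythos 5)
Your proposal follows essentially the same route as the paper's proof: keep the conservatism and pessimism-propagation machinery of Theorem~\ref{thm:lcb_rl} intact, partition state--action pairs according to whether $d^*(\bs,\ba)\ge b/H$ so that Condition~\ref{assumption:coverage} yields $n(\bs,\ba)\gtrsim bN$ on the covered part while the uncovered part contributes only an $O(b)$-type remainder, and re-run the variance recursion with the new per-step quantity $H\iota/(bN)+b$, carrying the never-sampled term over as $C^*|\states|\iota/N$. The only cosmetic differences (a Chernoff bound on $n(\bs,\ba)$ in place of the paper's expectation bound $\E{\data}{1/n(\bs,\ba)}\lesssim\iota/(N\mu(\bs,\ba))$, and extracting $\sqrt{b\iota}$ via Cauchy--Schwarz rather than through the recursion's square-root term) do not change the argument.
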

The bound in Corollary~\ref{thm:lcb_rl_noisy} has $\tilde{\mathcal{O}}(\sqrt{H})$ scaling compared to $\tilde{\mathcal{O}}(H)$ for BC.
Thus, when the data satisfies the practical coverage conditions {(more discussion in Appendix~\ref{app:intuitive})}, offline RL performs better in long-horizon tasks compared to BC with the same amount of expert data. 
\begin{tcolorbox}[colback=blue!6!white,colframe=black,boxsep=0pt,top=3pt,bottom=5pt]
\begin{protocol}
\label{guideline:noisy_data}
{Offline RL outperforms BC on expert data on long-horizon tasks, when provided with an equal amount of noisy-expert data. Thus, if noisy-expert data is easy to collect (e.g., through scripted policies or by first running standard behavioral cloning, and then storing data from evaluations of the behavior-cloned policy), doing so and then using offline RL can lead to better results.}
\end{protocol}
\end{tcolorbox}

\vspace{-0.2cm}
\subsection{Comparison of Generalized BC Methods and Offline RL}
\label{sec:generalized_bc}
\vspace{-0.2cm}
So far we have studied scenarios where offline RL can outperform na\"{i}ve BC. One might now wonder how offline RL methods perform relative to generalized BC methods that additionally use reward information to inform learning. We study two such approaches: \textbf{(1)} filtered BC~\citep{chen2021decision}, which only fits to the top $k$-percentage of trajectories in $\data$, measured by the total reward, and \textbf{(2)} BC with one-step  policy improvement~\citep{brandfonbrener2021offline}, which fits a Q-function for the behavior policy, then uses the values to perform one-step of policy improvement over the behavior policy.  
In this section, we aim to answer how these methods perform relative to RL.

\textbf{Filtered BC.} In expectation, this algorithm uses $\alpha N$ samples of the offline dataset $\data$ for $\alpha \in [0, 1]$ to perform BC on. This means that the upper bound (Theorem~\ref{thm:bc}) will have worse scaling in $N$. For $C^* = 1$, this leads to a strictly worse bound than regular BC. However, for suboptimal data, the filtering step could decrease $C^*$ by filtering out suboptimal trajectories, allowing filtered BC to outperform traditional BC. Nevertheless, from our analysis in Section~\ref{sec:noisy_expert_data}, offline RL is still preferred to filtered BC because RL can leverage the noisy data and potentially achieve $O(\sqrt{H})$ suboptimality, whereas even filtered BC would always incur a worse $O(H)$ suboptimality.  

\textbf{BC with policy improvement.} This algorithm utilizes the entire dataset to estimate the Q-value of the behavior policy, $\hat{Q}^{\hatbehavior}$, and performs one step of policy improvement using the estimated Q-function, typically via an advantage-weighted update: $\widehat{\pi}^1(\ba|\bs) = \hatbehavior(\ba|\bs) \exp (\eta H \widehat{A}^{\hatbehavior}(\bs, \ba)) / \mathbb{Z}_1(\bs)$. \emph{When would this algorithm perform poorly compared to offline RL?} Intutively, this would happen when multiple steps of policy improvement are needed to effectively discover high-advantage actions under the behavior policy. This is the case when the the behavior policy puts low density on high-advantage transitions. 
In Theorem~\ref{thm:one_step}, we show that more than one step of policy improvement can improve the policy under Condition~\ref{assumption:coverage}, for the special case of the softmax policy parameterization~\citep{agarwal2021theory}.
\begin{theorem}[One-step is worse than $k$-step policy improvement] 
\label{thm:one_step}
Assume that the learned policies are represented via a softmax parameterization (Equation 3, \citet{agarwal2021theory}). Let $\widehat{\pi}^k$ denote the policy obtained after $k$-steps of policy improvement using exponentiated advantage weights.
Then, under Condition~\ref{assumption:coverage}, the performance difference between $\widehat{\pi}^k$ and $\widehat{\pi}^1$ is lower-bounded by:
\begin{align*}
    J(\widehat{\pi}^k) - J(\widehat{\pi}^1) 
    \gtrsim  \frac{k}{H \eta} \mathbb{E}_{\bs \sim \mu} \left[ \frac{1}{k} \sum_{t=1}^k \log \mathbb{Z}_t(\bs) \right] - \sqrt{\frac{C^* H \iota}{N}}.  
\end{align*}
\end{theorem}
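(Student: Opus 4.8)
The plan is to recognize the $k$-step exponentiated-advantage updates as natural policy gradient (equivalently, entropic mirror descent on the per-state policy simplex) and to run the telescoping analysis of \citet{agarwal2021theory}, carrying the empirical-versus-true advantage gap as a separate error. The workhorse is the standard mirror-descent identity: since $\widehat\pi^{t+1}(\ba|\bs)\propto\widehat\pi^{t}(\ba|\bs)\exp(\eta H\,\widehat A^{\widehat\pi^t}(\bs,\ba))$, for every state $\bs$ and every comparator policy $\pi$,
\[
\eta H\,\langle\pi(\cdot|\bs),\widehat A^{\widehat\pi^t}(\bs,\cdot)\rangle=\kl\!\big(\pi(\cdot|\bs)\,\|\,\widehat\pi^t(\cdot|\bs)\big)-\kl\!\big(\pi(\cdot|\bs)\,\|\,\widehat\pi^{t+1}(\cdot|\bs)\big)+\log\mathbb{Z}_{t+1}(\bs),
\]
and, as $\widehat A^{\widehat\pi^t}$ is centered under $\widehat\pi^t$, Jensen's inequality gives $\log\mathbb{Z}_{t+1}(\bs)\ge 0$. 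I would instantiate the comparator as the final iterate $\widehat\pi^k$ (using $\pi^*$ instead is the natural alternative that plugs directly into $C^*$), take $\mathbb{E}_{\bs\sim d^{\widehat\pi^k}}$, and apply the performance difference lemma to turn $\langle\widehat\pi^k(\cdot|\bs),A^{\widehat\pi^t}(\bs,\cdot)\rangle$ (true advantage) into $(1-\gamma)(J(\widehat\pi^k)-J(\widehat\pi^t))$; replacing $A^{\widehat\pi^t}$ by $\widehat A^{\widehat\pi^t}$ costs a per-iteration error $\epsilon_t:=\mathbb{E}_{\bs\sim d^{\widehat\pi^k}}\langle\widehat\pi^k(\cdot|\bs),\widehat A^{\widehat\pi^t}(\bs,\cdot)-A^{\widehat\pi^t}(\bs,\cdot)\rangle$.

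Summing over $t=1,\dots,k-1$, the KL terms telescope to $\mathbb{E}_{d^{\widehat\pi^k}}\kl(\widehat\pi^k\|\widehat\pi^1)\ge0$ (dropped), and approximate monotone improvement of the iterates gives $J(\widehat\pi^t)\ge J(\widehat\pi^1)$ for $t\ge1$, so $\sum_t(J(\widehat\pi^k)-J(\widehat\pi^t))\le(k-1)(J(\widehat\pi^k)-J(\widehat\pi^1))$. Rearranging produces a bound of the form
\[
J(\widehat\pi^k)-J(\widehat\pi^1)\;\gtrsim\;\frac{1}{\eta(k-1)}\sum_{t=2}^{k}\mathbb{E}_{\bs\sim d^{\widehat\pi^k}}[\log\mathbb{Z}_t(\bs)]\;-\;\frac{H}{k-1}\sum_{t=1}^{k-1}\epsilon_t,
\]
and since each $\log\mathbb{Z}_t\ge0$ and $k\le H$, the coefficient $1/(\eta(k-1))$ may be weakened to $1/(H\eta)$ and the sum extended to $t=1$, recovering the claimed main term $\tfrac{k}{H\eta}\,\mathbb{E}[\tfrac1k\sum_t\log\mathbb{Z}_t]$ up to the change of measure to $\mu$.

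The two remaining steps both use Conditions~\ref{assumption:coverage} and \ref{assumption:concentrability}. For the error term, each $\epsilon_t$ is an off-policy evaluation error of $\widehat\pi^t$; a Bernstein-type policy-evaluation bound gives $O(\sqrt{\iota/(HN)})$ under $\mu$ (Condition~\ref{assumption:value_bounded} is what kills an extra $H$), a change of measure costing $C^*$ lifts it to $O(\sqrt{C^*\iota/(HN)})$, and multiplying the sum of $k-1$ such terms by $H/(k-1)$ gives exactly $\sqrt{C^*H\iota/N}$ with no residual $k$-dependence — the $1/(k-1)$ from the telescoped sum is what keeps the $k$ separate estimates from accumulating. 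For the $\log\mathbb{Z}_t$ terms, Condition~\ref{assumption:coverage} ensures $\mu$ covers the high-density part of $d^*$ while the uncovered remainder has mass $\lesssim|\states||\actions|b/H$, so the advantage estimates — hence the $\mathbb{Z}_t$ — are supported, up to vanishing mass, on $\mathrm{supp}(\mu)$, and one can pass between $\mathbb{E}_\mu$, $\mathbb{E}_{d^*}$, and $\mathbb{E}_{d^{\widehat\pi^k}}$ at the cost of absorbed constants.

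I expect this last change of measure to be the main obstacle: the per-step improvement naturally lives under the iterates' discounted visitation (or under $d^*$), while the statement is phrased under $\mu$, and making the exchange rigorous requires Condition~\ref{assumption:coverage} to bound how far these measures drift on the support where $\log\mathbb{Z}_t>0$ — exactly the phenomenon that makes one step insufficient: if $\behavior$ puts tiny mass on high-advantage transitions, a single reweighting cannot shift enough probability there, whereas repeated updates, reflected in the growth of $\sum_t\log\mathbb{Z}_t$, can. A secondary nuisance is matching the indexing ($\widehat\pi^1$ versus $\hatbehavior=\widehat\pi^0$ and the appearance of $\mathbb{Z}_1$) to the telescoped sum, which nonnegativity of $\log\mathbb{Z}_t$ and the slack $k\le H$ absorb.
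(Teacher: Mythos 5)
Your proposal is correct in substance but takes a genuinely different route through the natural-policy-gradient machinery of \citet{agarwal2021theory}. The paper uses the \emph{per-step improvement} lemma (their Lemma 5, reproduced as Lemma~\ref{lemma:lower_bound}): by the performance difference lemma in the empirical MDP and the identity $\eta H \widehat{A}^j = \log(\widehat{\pi}^{j+1}\mathbb{Z}_j/\widehat{\pi}^j)$, each update satisfies $\widehat{J}(\widehat{\pi}^{j+1}) - \widehat{J}(\widehat{\pi}^{j}) \geq \frac{1}{\eta}\mathbb{E}_{\bs\sim d^{\widehat{\pi}^{j+1}}}[\log \mathbb{Z}_j(\bs)]$; summing these telescopes directly to $\widehat{J}(\widehat{\pi}^{k}) - \widehat{J}(\widehat{\pi}^{1})$, and the statistical error enters only through the two conversions $J(\widehat{\pi}^{k})-\widehat{J}(\widehat{\pi}^{k})$ and $J(\widehat{\pi}^{1})-\widehat{J}(\widehat{\pi}^{1})$, each bounded by $\sqrt{C^*H\iota/N}$ via the Section~\ref{sec:noisy_expert_data} machinery. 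You instead run the \emph{comparator/regret} analysis (the Theorem 5.3--style argument of \citet{agarwal2021theory}): telescoping KL divergences against the fixed comparator $\widehat{\pi}^k$, invoking the performance difference lemma once per iteration, carrying $k-1$ separate advantage-estimation errors $\epsilon_t$, and then averaging. This reaches the same bound but requires three extra moves the paper avoids: approximate monotonicity of the \emph{true} returns $J(\widehat{\pi}^t)\geq J(\widehat{\pi}^1)$ (which itself costs additional evaluation-error terms), the $1/(k-1)$ averaging to prevent the $k$ OPE errors from accumulating, and the weakening of the $1/(\eta(k-1))$ prefactor to $1/(\eta H)$ using $k\le H$ and $\log\mathbb{Z}_t\ge 0$ --- a step that silently restricts the range of $k$, whereas the paper's direct telescoping yields a $1/\eta$ prefactor for any $k$ and only weakens it to match the theorem statement. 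What your route buys is that, with $\pi^*$ as comparator instead of $\widehat{\pi}^k$, it would naturally give convergence to the optimal in-support policy rather than merely the gap between iterates; for the claim actually stated it is heavier than necessary. Finally, you are right to flag the change of measure from the iterates' visitation distributions to $\mu$ in the $\log\mathbb{Z}_t$ term as the loose end: the paper has exactly the same gap (its appendix derivation ends with $\mathbb{E}_{\bs\sim d^{\widehat{\pi}^{j+1}}}$ while the theorem is stated with $\mathbb{E}_{\bs\sim\mu}$), so your treatment is, if anything, more candid on this point.
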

A proof of Theorem~\ref{thm:one_step} is provided in Appendix~\ref{app:proof_one_step}. This result implies that when the average exponentiated empirical advantage $1/k \sum_{i=1}^k \log \mathbb{Z}_t(\bs)$ is large enough (\ie $\geq c_0$ for some universal constant), which is usually the case when the behavior policy is highly suboptimal, then for $k = \mathcal{O}(H)$,
multiple steps of policy improvement will improve performance, \ie $J(\widehat{\pi}^k) - J(\widehat{\pi}^1) = \tilde{\mathcal{O}}(H - \sqrt{H/N})$, where the gap increases with a longer horizon. 
This is typically the case when the structure of the MDP allow for stitching parts of poor-performing trajectories. 
One example is in navigation, where trajectories that fail may still contain segments of a successful trajectory. 
\begin{tcolorbox}[colback=blue!6!white,colframe=black,boxsep=0pt,top=3pt,bottom=5pt]
\begin{protocol}
\label{guideline:generalized_bc}
{Using multiple policy improvement steps (i.e., full offline RL) can lead to greatly improved performance on long-horizon tasks, particularly when parts of various trajectories can be concatenated or stitched together to give better performance.}
\end{protocol}
\end{tcolorbox}

\vspace{-0.15in}
\section{Empirical Evaluation of BC and Offline RL}
\vspace{-0.15in}
Having characterized scenarios where offline RL methods can outperform BC in theory, we now validate our results empirically. Concretely, we aim to answer the following questions: \textbf{(1)} Does an existing offline RL method trained on expert data outperform BC on expert data in MDPs with few critical points?, \textbf{(2)} Can offline RL trained on noisy data outperform BC on expert data?, and \textbf{(3)} How does full offline RL compare to the reward-aware
BC methods studied in Section~\ref{sec:generalized_bc}? We first validate our findings on a tabular gridworld domain, and then study high-dimensional domains.

\textbf{Diagnostic experiments on a gridworld.} We first evaluate tabular versions of the BC and offline RL methods analyzed in Section~\ref{sec:simple_bounds} on sparse-reward $10 \times 10$ gridworlds environments ~\citep{fu2019diagnosing}.
Complete details about the setup can be found in Appendix~\ref{app:gridworld_details}. 
On a high-level, we consider three different environments, each with varying number of critical states, from ``Single Critical" with exactly one, to ``Cliffwalk" where every state is critical and veering off yields zero reward. The methods we consider are: naive BC (\bc), conservative RL (\rlc), policy-constraint RL (\rlpc), and generalized BC with one-step and k-step policy improvement (\bcpi, \bcpik). In the left plot of Figure~\ref{fig:gridworld}, we show the return (normalized by return of the optimal policy) across all the different environments for optimal data ($C^* = 1$) and data generated from the optimal policy but with a different initial state distribution ($C^* > 1$ but $\pi_\beta(\cdot|\bs) = \pi^*(\cdot|\bs)$). As expected from our discussion in Section~\ref{sec:expert_data}, BC performs best under $C^* = 1$, but {\rlc~ and \rlpc} performs much better when $C^* > 1$; also BC with one-step policy improvement outperforms naive BC for $C^* > 1$, but does not beat full offline RL. In Figure~\ref{fig:gridworld} (right), we vary $C^*$ by interpolating the dataset with one generated by a random policy, where $\alpha$ is the proportion of random data. RL performs much better over all BC methods, when the data supporting our analysis in Section~\ref{sec:noisy_expert_data}. Finally, BC with multiple policy improvement steps performs better than one step when the data is noisy, which validates Theorem~\ref{thm:one_step}. 

\begin{figure}[ht]
\centering
\vspace{-0.35cm}
\begin{minipage}{0.6\linewidth}
    \includegraphics[width=0.9\linewidth]{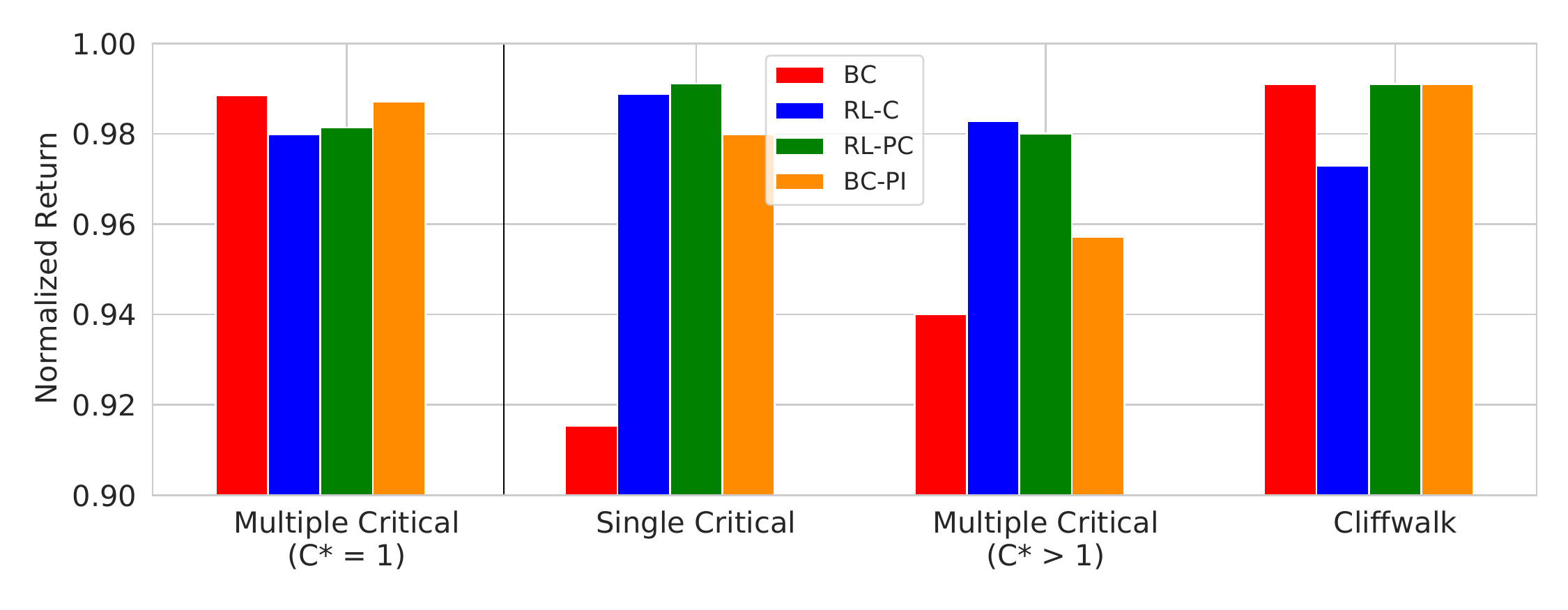}
\end{minipage}
\begin{minipage}{0.39\linewidth}
    \includegraphics[width=0.9\linewidth]{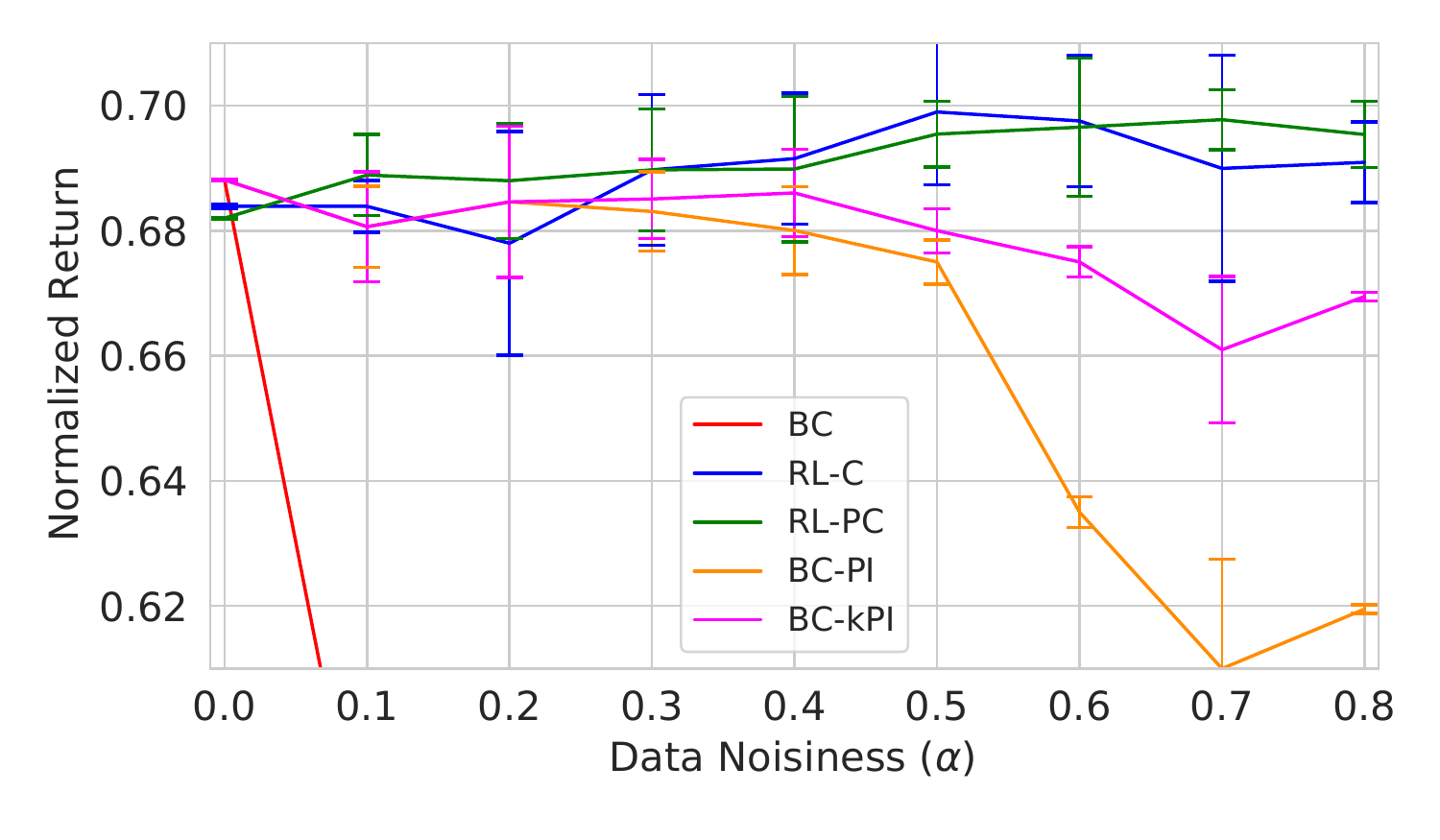}
\end{minipage}
\vspace{-0.5cm}
\caption{\footnotesize{\textbf{Offline RL vs BC on gridworld domains.} \emph{Left}: We compare offline RL and BC on three different gridworlds with varying number of critical points for expert and near-expert data. \emph{Right}: Taking the ``Multiple Critical" domain, we examine the effect of increasing the noisiness of the dataset by interpolating it with one generated by a random policy, and show that RL improves drastically with increased noise over BC.}}
\label{fig:gridworld}

\vspace{-0.1cm}
\end{figure}

\textbf{Evaluation in high-dimensional tasks.} Next, we turn to high-dimensional problems. We consider a diverse set of domains (shown on the right) and behavior policies that are representative of practical scenarios: multi-stage robotic manipulation tasks from state (Adroit domains from \citet{fu2020d4rl}) and image observations~\citep{singh2020cog}, antmaze navigation~\citep{fu2020d4rl},
and 7 Atari games~\citep{agarwal2019optimistic}. 
We use the scripted
\begin{wrapfigure}{r}{0.22\textwidth}
  \begin{center}
  \vspace{-0.05in}
    \includegraphics[width=0.99\linewidth]{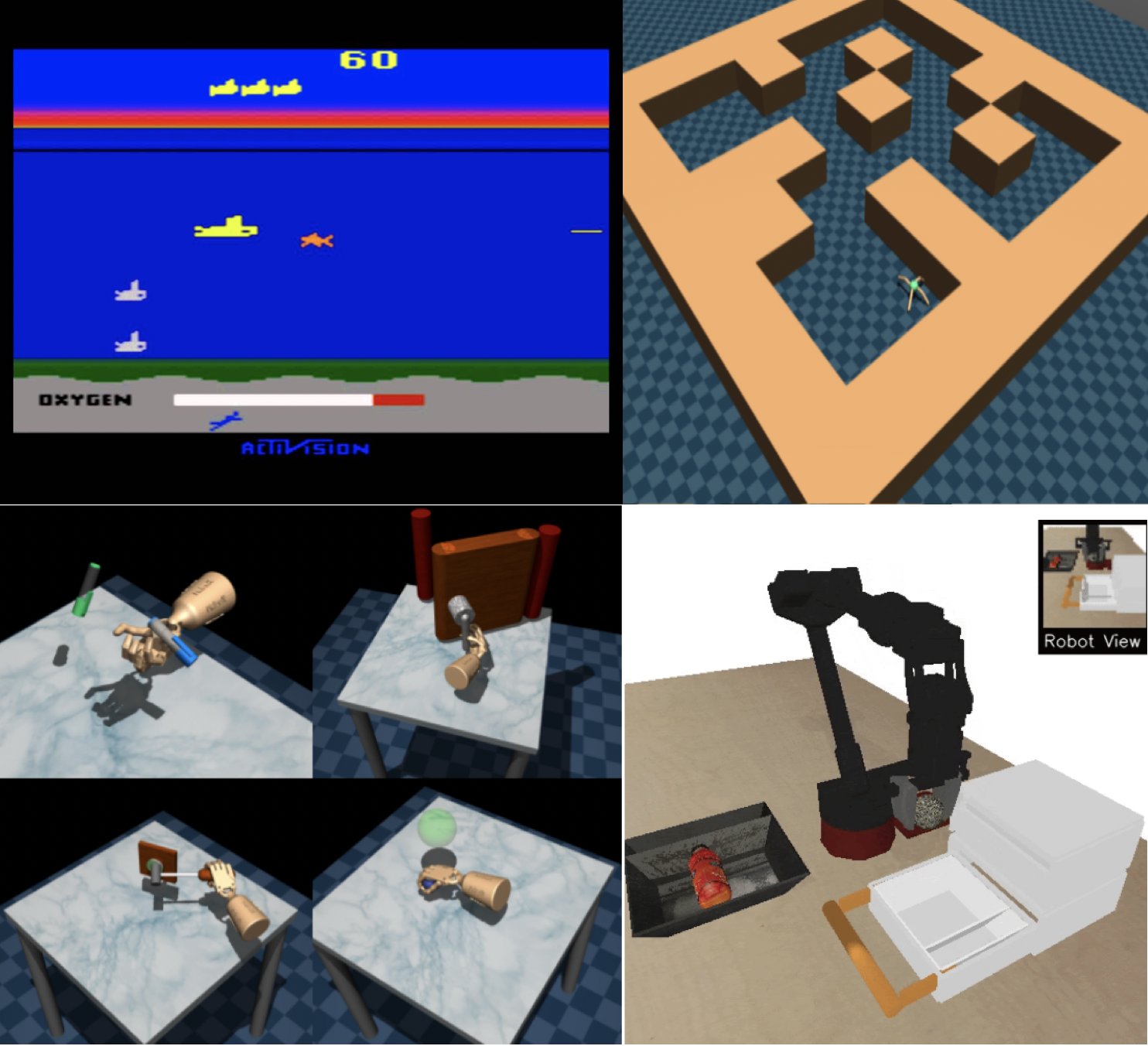}
    \vspace{-0.35in}
  \end{center}
\end{wrapfigure}
expert provided by \citet{fu2020d4rl} for antmaze and those provided by \citet{singh2020cog} for manipulation, an RL-trained expert for Atari, and human expert for Adroit~\citep{rajeswaran2018dapg}. 
We obtain suboptimal data using failed attempts from a noisy expert policy (\ie previous policies in the replay buffer for Atari, and noisy scripted experts for antmaze and manipulation). All these tasks utilize sparse rewards such that the return of any trajectory is bounded by a constant much smaller than the horizon. {We use CQL~\citep{kumar2020conservative} as a representative offline RL method}, and utilize \citet{brandfonbrener2021offline} as a representative BC-PI method.

\textbf{Tuning offline RL and BC.} Na\"ively running offline RL can lead to poor performance, as noted by prior works \citep{mandlekar2021what,florence2021implicit}. This is also true for BC, but, some solutions such as early stopping based on validation losses, can help improve performance. We claim that an offline tuning strategy is also crucial for offline RL. In our experiments we utilize the offline workflow proposed by \citet{kumar2021workflow} to perform policy selection, and address overfitting and underfitting, purely offline. While this workflow does not fully address all the tuning issues, we find that it is sufficient to improve performance. When the Q-values learned by CQL are extremely negative (typically on the Adroit domains), we utilize a capacity-decreasing dropout regularization with probability $0.4$ on the layers of the Q-function to combat overfitting. On the other hand, when the Q-values exhibit a relatively stable trend (\eg in Antmaze or Atari), we utilize the DR3 regularizer~\citep{anonymous2021value} to increase capacity. Consistent with prior work, we find that na\"ive offline RL generally performs worse than BC without offline tuning, but we find that \emph{tuned} offline RL can work well.
For tuning BC and BC-PI, we applied regularizers such as dropout on the policy to prevent overfitting in Adroit, and utilized a larger ResNet~\citep{he2016deep} architecture for the robotic manipulation tasks and Atari domains. For BC, we report the performance of the \emph{best} checkpoint found during training, giving BC an unfair advantage, but we still find that \emph{offline-tuned} offline RL can do better better. More details about tuning each algorithm can be found in Appendix~\ref{app:tuning}. 

\begin{table*}[t!]
\small{
\centering
\vspace*{.3cm}
\resizebox{0.96\textwidth}{!}{\begin{tabular}{l|r|r r r}
\toprule
\textbf{Domain / Behavior Policy} & \textbf{Task/Data Quality}& \textbf{BC} & \textbf{Na\"ive CQL} & \textbf{Tuned CQL}\\ \midrule
\textbf{AntMaze (scripted)} 
& Medium, Expert & 53.2\%$\pm$8.7\% & 20.8\% $\pm$ 1.0\% &  55.9\% $\pm$ 3.2\% \\ 
& Large, Expert & \textbf{4.83\%}$\pm$0.8\% & 0.0\% $\pm$ 0.0\% & 0.0\% $\pm$ 0.0\% \\ 
\cline{2-5}
& Medium, Expert w/ diverse initial  & 55.2\%$\pm$6.7\% & 19.0\% $\pm$ 5.2\%  & \textbf{67.0\%} $\pm$ 7.3\% \\
& Large, Expert w/ diverse initial  & 1.3\%$\pm$0.5\% & 0.0\% $\pm$ 0.0 & \textbf{5.1\%} $\pm$ 6.9\% \\ \midrule

\textbf{Manipulation (scripted)} 
& pick-place-open-grasp, Expert & 14.5\%$\pm$1.8\% & 12.3\%$\pm$5.3\% & \textbf{23.5\%}$\pm$6.0\% \\ 
& close-open-grasp, Expert & 17.4\%$\pm$3.1\% & 20.0\%$\pm$6.0\% & \textbf{49.7\%}$\pm$5.4\% \\
& open-grasp, Expert & 33.2\%$\pm$8.1\% & 22.8\%$\pm$5.3\% & \textbf{51.9\%}$\pm$6.8\%  \\ \midrule

\textbf{Adroit (Human)} 
& hammer-human, Expert & 71.0\% $\pm$ 9.3\% & 62.5\% $\pm$ 39.0\% & \textbf{78.1\%} $\pm$ 6.7\% \\  
& door-human, Expert & \textbf{86.3\%} $\pm$ 6.5\% & 70.3\% $\pm$ 27.2\% & 79.1\% $\pm$ 4.7\% \\
& pen-human, Expert & \textbf{73.0 }\% $\pm$ 9.1\% & 64.0\% $\pm$ 6.9\% & \textbf{74.1\%} $\pm$ 6.1\%\\
& relocate-human, Expert & 0.0\% $\pm$ 0.0\% & 0.0\% $\pm$ 0.0\% & 0.0\% $\pm$ 0.0\%  \\ 

\bottomrule
\end{tabular}}
\vspace{-0.1in}
\caption{\footnotesize{Offline CQL vs. BC with expert dataset compositions averaged over 3 seeds. While na\"ive offline CQL often performs comparable or worse than BC, the performance of offline RL improves drastically after offline tuning. Also note that offline RL can improve when provided with diverse initial states in the Antmaze domain. Additionally, note that offline-tuned offline RL outperforms BC significantly in the manipulation domains.}}
\label{tab:bc_expert_vs_offlinerl_expert}
}
\vspace{-0.3cm}
\end{table*}

\begin{figure}[t]
  \begin{center}
    \includegraphics[width=0.95\linewidth]{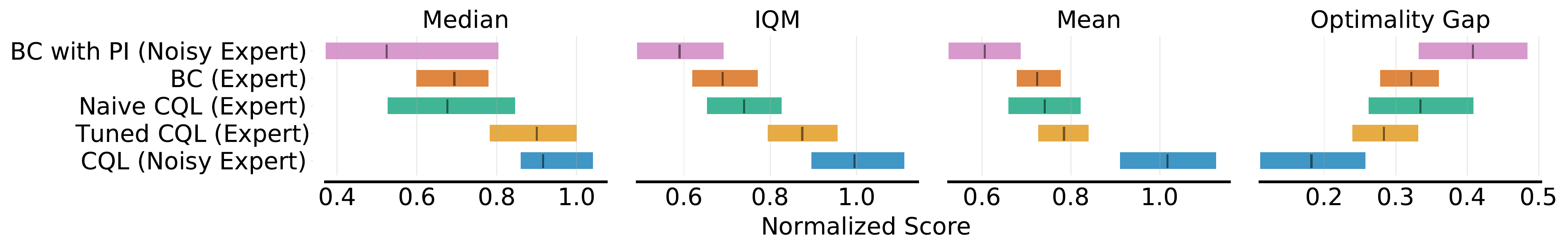}
    \vspace{-0.22in}
    \caption{\label{fig:iqm_atari}  \footnotesize{IQM performance of various algorithms evaluated on 7 Atari games under various dataset compositions (per game scores in Table~\ref{app:full_results}). Note that offline-tuned CQL with expert data (``Tuned CQL'') outperforms cloning the expert data (``BC (Expert)''), even though na\"ive CQL is comparable to BC in this setting. When CQL is provided with noisy-expert data, it significantly outperforms cloning the expert policy.}}    
  \end{center}
  \vspace{-0.3in}
\end{figure}
\textbf{Answers to questions (1) to (3).} For \textbf{(1)}, we run CQL and BC on expert data in each task, and present the comparison in Table~\ref{tab:bc_expert_vs_offlinerl_expert} and Figure~\ref{fig:iqm_atari}. While na\"ive CQL performs comparable or worse than BC in this case, after offline tuning, CQL outperforms BC. This tuning does not require any additional online rollouts. Note that while BC performs better or comparable to RL for antmaze (large) with expert data, it performs worse than RL when the data admits a more diverse initial state distribution such that $C^* \neq 1$, even though the behavior policy matches the expert.

\begin{wraptable}{r}{0.5\textwidth}
\small{
\centering
\vspace{-0.2in}
\resizebox{0.99\linewidth}{!}{\begin{tabular}{l|r r}
\toprule
 \textbf{Task}& \textbf{BC (Expert)} & \textbf{CQL (Noisy Expert)} \\ \midrule
pick-place-open-grasp & 14.5\% $\pm$ 1.8\% & \textbf{85.7\%} $\pm$ 3.1\%\\ 
close-open-grasp & 17.4\% $\pm$ 3.1\% & \textbf{90.3\%} $\pm$ 2.3\%\\
open-grasp  & 33.2\% $\pm$ 8.1\% & \textbf{92.4\%} $\pm$ 4.9\% \\
\bottomrule
\end{tabular}}
\vspace{-0.13in}
\caption{\footnotesize{CQL with noisy-expert data vs BC with expert data with equal dataset size on manipulation tasks. CQL outperforms BC as well as CQL with only expert data.}}
\label{tab:bc_expert_vs_offlinerl_subexpert}
}
\vspace{-0.3cm}
\end{wraptable}
For \textbf{(2)}, we compare offline RL trained on noisy-expert data with BC trained on on an equal amount of expert data, on domains where noisy-expert data is easy to generate: (a) manipulation domains (Table~\ref{tab:bc_expert_vs_offlinerl_subexpert}) and (b) Atari games (Figure~\ref{fig:iqm_atari}). Observe that CQL outperforms BC and also improves over only using expert data. The performance gap also increases with $H$, \ie open-grasp ($H=40$) vs pick-place-open-grasp ($H = 80$) vs Atari domains ($H=27000$). {This validates that some form of offline RL with noisy-expert data can outperform BC with expert data, particularly on long-horizon tasks}.

Finally, for \textbf{(3)}, we compare CQL to a representative BC-PI method~\citep{brandfonbrener2021offline} trained using noisy-expert data on Atari domains, which present multiple stitching opportunities. The BC-PI method estimates the Q-function of the behavior policy using SARSA and then performs one-step of policy improvement. The results in Figure~\ref{fig:iqm_atari} support what is predicted by our theoretical results, \ie BC-PI still performs significantly worse than CQL with noisy-expert data, even though we utilized online rollouts for tuning BC-PI and report the best hyperparameters found.

\vspace{-0.1in}
\section{Discussion}
\vspace{-0.1in}
We sought to understand {if offline RL is at all preferable over running BC, even provided with expert or near-expert data}. While in the worst case, both approaches attain similar performance on expert data, additional assumptions on the environment can provide certain offline RL methods with an advantage. We also show that running RL on noisy-expert, suboptimal data attains more favorable guarantees compared to running BC on expert data for the same task, using equal amounts of data. Empirically, we observe that offline-tuned offline RL can outperform BC on various practical problem domains, with different kinds of expert policies. {While our work is an initial step towards understanding when RL presents a favorable approach, there is still plenty of room for further investigation.} Our theoretical analysis can be improved to handle function approximation. {Understanding if offline RL is preferred over BC for other real-world data distributions is also important.} 
{Finally, our work focuses on analyzing cases where we might expect offline RL to outperform BC. An interesting direction is to understand cases where the opposite holds; such analysis would further contribute to this discussion.} 

\section*{Acknowledgements}
\vspace{-0.1in}
We thank Dibya Ghosh, Yi Su, Xinyang Geng, Tianhe Yu, Ilya Kostrikov and Michael Janner for informative discussions, Karol Hausman for providing feedback on an early version of this paper, and Bo Dai for answering some questions pertaining to \citet{ren2021nearly}. AK thanks George Tucker for informative discussions. We thank the members of RAIL at UC Berkeley for their support and suggestions. We thank anonymous reviewers for feedback on an early version of this paper. This research is funded in part by the DARPA Assured Autonomy Program, the Office of Naval Research, an EECS departmental fellowship, and in part by compute resources from Google Cloud.

\bibliography{iclr2022_conference}
\bibliographystyle{iclr2022_conference}

\newpage
\appendix
\part*{Appendices}

\section{Pseudocode for Algorithms}

\begin{algorithm}[H]
\begin{small}
  \caption{Conservative Offline RL Algorithm}\label{alg:vilcb}
  \begin{algorithmic}[1]
    \REQUIRE Offline dataset $\data$, discount factor $\gamma$, and confidence level $\delta$
    \STATE Compute $n(\bs, \ba)$ from $\data$, and estimate $\hat{r}(\bs, \ba)$, $\hat{P}(s'|\bs, \ba), \ \forall (\bs, \ba) \in \states \times \actions$
    \STATE Initialize $\hat{Q}(\bs, \ba) \leftarrow 0, \hat{V}(\bs) \leftarrow 0, \ \forall (\bs, \ba)$
    \FOR{$i=1, 2, \hdots, m$}
        \STATE Calculate $b(\bs, \ba)$ as:
        \begin{align*}
        b(\bs, \ba) \leftarrow \sqrt{\frac{\mathbb{V}(\hat{P}(\bs, \ba), \hat{V}) \log(|\states||\actions|m/\delta))}{(n(\bs, \ba) \wedge 1)}} + \sqrt{\frac{\hat{r}(\bs, \ba)\log(|\states||\actions|m/\delta)}{(n(\bs, \ba) \wedge 1)}} + \frac{\log(|\states||\actions|m/\delta)}{(n(\bs, \ba) \wedge 1)}
        \end{align*}
        \STATE Calculate $\hat{\pi}^*(\bs)$ as:
        \begin{align*}
            \hat{Q}(\bs, \ba) &\leftarrow \hat{r}(\bs, \ba) - b(\bs, \ba) + \gamma \hat{P}(\bs, \ba) \cdot \hat{V}\\
            \hat{V}(\bs) &\leftarrow \max_{a} \hat{Q}(\bs, \ba) \\
            \hat{\pi}^*(\bs) &\leftarrow \argmax_{a} \hat{Q}(\bs, \ba)
        \end{align*}
    \ENDFOR
    \STATE Return $\hat{\pi}^*$
  \end{algorithmic}
\end{small}
\end{algorithm}

\begin{algorithm}[H]
\begin{small}
  \caption{Policy-Constraint Offline RL Algorithm}\label{alg:vimbs}
  \begin{algorithmic}[1]
    \REQUIRE Offline dataset $\data$, discount factor $\gamma$, and threshold $b$
    \STATE Compute $n(\bs, \ba)$ from $\data$, and estimate $\hat{r}(\bs, \ba)$, $\hat{P}(s'|\bs, \ba)$, $\hat{\mu}(\bs, \ba), \ \forall (\bs, \ba) \in \states \times \actions$
    \STATE Compute $\zeta(\bs, \ba) \leftarrow \mathbb{I}\{\hat{\mu}(\bs, \ba) \geq b\}, \ \forall (\bs, \ba)$
    \STATE Initialize $ \hat{\pi}^*(\ba | \bs) \leftarrow \frac{1}{|\actions|}, \, \hat{Q}_{\zeta}^{\hat{\pi}^*}(\bs, \ba) \leftarrow 0, \, \hat{V}_{\zeta}^{\hat{\pi}^*}(\bs) \leftarrow 0, \ \forall (\bs, \ba)$
    \FOR{$\ell=1, 2, \hdots, k$}
    \FOR{$i=1, 2, \hdots, m$}
    \STATE Update $\hat{Q}_{\zeta}^{\hat{\pi}^*}(\bs, \ba), \hat{V}_{\zeta}^{\hat{\pi}^*}(\bs)$ as:
    \begin{align*}
        \hat{Q}_{\zeta}^{\hat{\pi}^*}(\bs, \ba) &\leftarrow \hat{r}(\bs, \ba) + \gamma \hat{P}(\bs, \ba) \cdot \hat{V}_{\zeta}^{\hat{\pi}^*} \\
        \hat{V}_{\zeta}^{\hat{\pi}^*}(\bs) &\leftarrow \sum_{\ba} \hat{\pi}^*(\ba | \bs) \zeta(\bs, \ba) \cdot \hat{Q}(\bs, \ba)
    \end{align*}
    \ENDFOR
    \STATE Compute $\hat{\pi}^*$ as:
    \begin{align*}
    \hat{\pi}^* \leftarrow \arg\max_{\pi} \E{\bs \sim \data}{\E{\ba \sim \pi'}{\zeta(\bs, \ba) \cdot \hat{Q}^\pi_\zeta(\bs, \ba)}}
    \end{align*}
    \ENDFOR
    \STATE Return $\hat{\pi}^*$.
  \end{algorithmic}
\end{small}
\end{algorithm}

\section{Proofs}
\label{sec:proofs}
\subsection{Proof of Theorem~\ref{thm:bc}}
\label{app:proof_bc}
Let $\pi_\beta$ be the behavior policy that we fit our learned policy $\hatbehavior$ to. Recall that the BC algorithm we analyze fits $\hatbehavior$ to choose actions according to the empirical dataset distribution for states that appear in dataset $\data$, and uniformly at random otherwise. We have
\begin{align*}
    \E{\data}{J(\pi^*) - J(\hatbehavior)} \leq
    J(\pi^*) - J(\pi_\beta) + \E{\data}{J(\pi_\beta) - J(\hatbehavior)}
\end{align*}
The following lemma from \citet{rajaraman2020toward} bounds the suboptimality from performing BC on a (potentially stochastic) expert, which we adapt below factoring in bounded returns of trajectories from Condition~\ref{assumption:value_bounded}. 
\begin{lemma}[Theorem 4.4, \citet{rajaraman2020toward}]
The policy returned by BC on behavior policy $\pi_\beta$ has expected error bounded as
\begin{align*}
 \E{\data}{J(\pi_\beta) - J(\hatbehavior)} \leq \frac{SH\log{N}}{N}\,,
\end{align*}
where $\pi_\beta$ could be stochastic.
\label{lem:bc}
\end{lemma}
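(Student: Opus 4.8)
The plan is to obtain Lemma~\ref{lem:bc} as the ``mimic-the-empirical-distribution'' behavioral cloning bound of \citet{rajaraman2020toward} (their Theorem~4.4), specialized through Condition~\ref{assumption:value_bounded}; their analysis never uses optimality of the demonstrator, so it carries over verbatim with the (possibly stochastic) behavior policy $\behavior$ in place of the expert, comparing $\hatbehavior$ directly to $\behavior$.

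First I would apply the performance-difference lemma to write $J(\behavior)-J(\hatbehavior)=\tfrac{1}{1-\gamma}\,\E{\bs\sim d^{\hatbehavior}}{\langle\behavior(\cdot|\bs)-\hatbehavior(\cdot|\bs),\,Q^{\behavior}(\bs,\cdot)\rangle}$, with $d^{\hatbehavior}$ the normalized discounted occupancy of $\hatbehavior$. The prefactor $\tfrac{1}{1-\gamma}=H$ is unavoidable, but Condition~\ref{assumption:value_bounded} forces $\|Q^{\behavior}\|_\infty\le 1$ rather than the generic $\le H$, and this is exactly the factor of $H$ saved relative to \citet{rajaraman2020toward}. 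The inner product is $0$ wherever $\hatbehavior$ reproduces $\behavior$, is at most $\|\behavior(\cdot|\bs)-\hatbehavior(\cdot|\bs)\|_1\le 2$ at states unseen in $\data$ (where $\hatbehavior$ is uniform), and is a lower-order finite-sample quantity at visited states; hence, up to that lower-order term, $\E{\data}{J(\behavior)-J(\hatbehavior)}\lesssim H\cdot\E{\data}{\sum_{\bs:\,n(\bs)=0} d^{\hatbehavior}(\bs)}$.

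Second I would bound the unseen-state occupancy. Coupling the rollouts of $\hatbehavior$ and $\behavior$ from $\rho$ so that they agree up to the first time $\hatbehavior$ reaches a state with $n(\bs)=0$, the two occupancies coincide until then, which gives $\E{\data}{d^{\hatbehavior}(\bs)\,\indicator\{n(\bs)=0\}}\lesssim d^{\behavior}(\bs)\,\prob{n(\bs)=0}$. Since $\data$ consists of $N$ i.i.d.\ samples from $\mu$, $\prob{n(\bs)=0}=(1-\mu(\bs))^{N}\le e^{-N\mu(\bs)}$; summing $\sum_{\bs} d^{\behavior}(\bs)\,e^{-N\mu(\bs)}$ over the $|\states|$ states (and paying one $\log N$ for the high-probability union bound controlling the visited-state term) is $\tilde{O}(|\states|\log N/N)$, so $\E{\data}{J(\behavior)-J(\hatbehavior)}\lesssim |\states|H\log N/N$.

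The main obstacle is justifying that the visited-state contribution is genuinely lower order: a naive concentration bound on $\|\behavior(\cdot|\bs)-\hatbehavior(\cdot|\bs)\|_1$ costs $\sqrt{|\actions|/n(\bs)}$ per state and would reintroduce an $|\actions|$ dependence, so one needs \citet{rajaraman2020toward}'s coupling/exchangeability argument (treating the empirical action frequency at $\bs$ as a resample of an independent demonstrator rollout) to show this term collapses — that is precisely why the final bound scales with $|\states|$ and not $|\actions|$. The remaining steps (the tail estimate $\prob{n(\bs)=0}\le e^{-N\mu(\bs)}$ and the summation) are routine. Outside this lemma, it then plugs into $\E{\data}{J(\pi^*)-J(\hatbehavior)}\le\big(J(\pi^*)-J(\behavior)\big)+\E{\data}{J(\behavior)-J(\hatbehavior)}$, where the data-independent term $J(\pi^*)-J(\behavior)\lesssim (C^*-1)H/2$ follows from the concentrability Condition~\ref{assumption:concentrability} via a performance-difference estimate in which the $d^*$-averaged total variation between $\pi^*(\cdot|\bs)$ and $\behavior(\cdot|\bs)$ is controlled by $C^*-1$, completing the proof of Theorem~\ref{thm:bc}.
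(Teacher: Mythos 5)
The paper offers no internal proof of Lemma~\ref{lem:bc}: it is imported directly from Theorem~4.4 of \citet{rajaraman2020toward}, with one factor of $H$ removed because Condition~\ref{assumption:value_bounded} caps $\|Q^{\behavior}\|_\infty$ at $1$ rather than $H$. Your sketch is a faithful reconstruction of that external argument and correctly locates both the saved factor of $H$ and the one genuinely delicate step --- showing that the visited-state error does not reintroduce an $|\actions|$ dependence --- which you, like the paper, ultimately defer to the coupling/exchangeability argument of \citet{rajaraman2020toward}, so the proposal matches the paper's (cited) approach.
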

Using Lemma~\ref{lem:bc}, we have $\E{\data}{J(\pi_\beta) - J(\hatbehavior)} \leq SH\iota/N$. What remains is bounding the suboptimality of the behavior policy, which we can upper-bound as
\begin{align*}
    J(\pi^*) - J(\pi_\beta) 
    &\leq \sum_{t = 0}^\infty \sum_{\bs} \gamma^{t} \prob{s_t = s} \E{\pi_\beta(\cdot \mid s)}{\mathbb{I}\{\ba \neq \pi_t^*(\bs)\}} \\
    &= 
    \frac{1}{2}\sum_{t = 0}^\infty \sum_{\bs} \gamma^{t} d_t^*(\bs) \sum_{\ba} \abs{\pi_\beta(\ba | \bs) - \mathbb{I}\{\ba = \pi_t^*(\bs)\}} \\
    &=
    \frac{1}{2}\sum_{t = 0}^\infty \sum_{(\bs, \ba)} \gamma^t \abs{d^*_t(\bs)\pi_\beta(\ba | \bs) - d^*_t(\bs, \ba)} \\
    &\leq
    \frac{C^* - 1}{2}  H\sum_{(\bs, \ba)} \mu(\bs, \ba) \\
    &= \frac{(C^* - 1)H}{2}\,,
\end{align*}
where we use the definition of $C^*$ in Condition~\ref{assumption:concentrability}. Taking the sum of both terms yields the desired result.

\subsection{Proof of Theorem~\ref{thm:lcb_rl}}
\label{app:proof_lcb}
In this section, we proof the performance guarantee for the conservative offline RL algorithm detailed in Algorithm~\ref{alg:vilcb}. 
Recall that the algorithm we consider builds upon empirical value iteration but subtracts a penalty during each $Q$-update. Specifically, we initialize $Q_0(\bs, \ba) = 0, V_0(\bs) = 0$ for all $(\bs, \ba)$. Let $n(\bs, \ba)$ be the number of times $(\bs, \ba)$ appeared in $\data$, and let $\hat{r}(\bs, \ba), \, \hat{P}(\bs, \ba)$ be the empirical estimates of their reward and transition probabilities. Then, for each iteration $i \in [m]$:
\begin{align*}
    \hat{Q}_i(\bs, \ba) &\leftarrow \hat{r}(\bs, \ba) \textcolor{red}{- b_i(\bs, \ba)} + \gamma \hat{P}(\bs, \ba) \cdot \hat{V}_{i -1}, \quad \text{for all $\bs, \ba$,} \\
    \hat{V}_i(\bs) &\leftarrow \max\{\hat{V}_{t - 1}(\bs), \max_{a} \hat{Q}_i(\bs, \ba)\}, \quad \text{for all $s$,}
\end{align*}
In our algorithm we define the penalty function as
\begin{align*}
    b_i(\bs, \ba) \leftarrow \sqrt{\frac{\mathbb{V}(\hat{P}(\bs, \ba), \hat{V}_{i -1}) \iota}{(n(\bs, \ba) \wedge 1)}} + \sqrt{\frac{\hat{r}(\bs, \ba)\iota}{(n(\bs, \ba) \wedge 1)}} + \frac{\iota}{(n(\bs, \ba) \wedge 1)}\,,
\end{align*}
where we let $\iota$ to capture all poly-logarithmic terms. As notation, we drop the subscript $i$ to denote the final $\hat{Q}$ and $\hat{V}$ at iteration $m$, where $m = H\log{N}$. 
Finally, the learned policy $\hat{\pi}^*$ satisfies $\hat{\pi}^*(\bs) \in \argmax_{a} \hat{Q}(\bs, \ba)$ for all $s$, if multiple such actions exist, then the policy samples an action uniformly at random.

\subsubsection{Technical Lemmas}

\begin{lemma}[Bernstein's inequality]
Let $X, \{X_i\}_{i = 1}^n$ be i.i.d random variables with values in $[0, 1]$, and let $\delta > 0$. Then we have
\begin{align*}
    \prob{\abs{\E{}{X} - \frac{1}{n} \sum_{i=1}^n X_i} > \sqrt{\frac{2\var{X}\log(2/\delta)}{n}} + \frac{\log(2/\delta)}{n}} \leq \delta\,.
\end{align*}
\label{lem:bernstein}
\end{lemma}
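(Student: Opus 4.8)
This is the classical Bernstein concentration inequality, so the plan is to run the standard exponential--moment (Chernoff) argument and then invert the resulting tail bound into the stated form. First I would center the variables: put $Y_i = X_i - \expec[X]$, so that $\abs{Y_i}\le 1$, $\expec[Y_i]=0$, and $\expec[Y_i^2] = \var{X} =: \sigma^2$. The elementary observation driving everything is that, because $\abs{Y_i}\le 1$, for every integer $k\ge 2$ we have $\expec[\abs{Y_i}^k] \le \expec[Y_i^2] = \sigma^2 \le \tfrac{k!}{2}\sigma^2(1/3)^{k-2}$; that is, the $Y_i$ satisfy a Bernstein moment condition with variance proxy $\sigma^2$ and scale $1/3$.

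Next I would bound the moment generating function. Summing the Taylor series termwise, for $0<\lambda<3$ one gets $\expec[e^{\lambda Y_i}] \le 1 + \tfrac{\sigma^2\lambda^2/2}{1-\lambda/3} \le \exp\!\left(\tfrac{\sigma^2\lambda^2/2}{1-\lambda/3}\right)$, so by independence $\expec[e^{\lambda\sum_i Y_i}] \le \exp\!\left(\tfrac{n\sigma^2\lambda^2/2}{1-\lambda/3}\right)$. Applying Markov's inequality to $e^{\lambda\sum_i Y_i}$ and optimizing over $\lambda$ gives the one--sided tail $\prob{\tfrac1n\sum_i Y_i > \varepsilon} \le \exp\!\left(-\tfrac{n\varepsilon^2}{2(\sigma^2+\varepsilon/3)}\right)$; the identical estimate applied to the variables $-Y_i$, together with a union bound over the two tails, yields $\prob{\abs{\tfrac1n\sum_i X_i - \expec[X]} > \varepsilon} \le 2\exp\!\left(-\tfrac{n\varepsilon^2}{2(\sigma^2+\varepsilon/3)}\right)$.

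Finally I would invert this. Writing $L := \log(2/\delta)$ and taking $\varepsilon = \sqrt{2\sigma^2 L/n} + L/n$, a short computation --- expand $\varepsilon^2$ and compare it term by term against $2\sigma^2 L + \tfrac23\varepsilon L$ --- shows $n\varepsilon^2 \ge 2\sigma^2 L + \tfrac23\varepsilon L$, i.e. $\tfrac{n\varepsilon^2}{2(\sigma^2+\varepsilon/3)} \ge L$, so the two--sided bound above is at most $2e^{-L} = \delta$, which is precisely the claim. There is no genuine obstacle here --- this is a textbook inequality --- and the only mildly delicate point is this last step: one must keep the cross term (and the $(L/n)^2$ term) when expanding $\varepsilon^2$, since these are exactly what absorb the linear-in-$\varepsilon$ contribution of the Bernstein denominator, and applying $\sqrt{a+b}\le\sqrt a+\sqrt b$ too aggressively would discard precisely that slack. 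Alternatively, one may simply cite the two--sided Bernstein bound displayed in the second paragraph and carry out only the inversion.
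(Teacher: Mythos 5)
Your proof is correct. The paper does not actually prove this lemma --- it is stated as a standard technical fact and used as a black box --- and your Chernoff/MGF derivation is the textbook argument for exactly this two-sided form. The only step where the specific constants matter is the inversion, and your check goes through: with $L=\log(2/\delta)$ and $\varepsilon=\sqrt{2\sigma^2L/n}+L/n$ one has $n\varepsilon^2 = 2\sigma^2L + 2L\sqrt{2\sigma^2L/n}+L^2/n \ge 2\sigma^2L+\tfrac{2}{3}L\varepsilon$, so the exponent is at most $-L$ and the two-sided tail is at most $\delta$, as claimed.
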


\begin{lemma}[Theorem 4, \citet{maurer2009bernstein}]
Let $X, \{X_i\}_{i = 1}^n$ with $n \geq 2$ be i.i.d random variables with values in $[0, 1]$. Define $\bar{X} = \frac{1}{n}\sum_{i = 1}^n X_i$ and $\widehat{\mathrm{Var}}(X) = \frac{1}{n} \sum_{i=1}^n (X_i - \bar{X})^2$. Let $\delta > 0$. Then we have
\begin{align*}
    \prob{\abs{\E{}{X} - \frac{1}{n} \sum_{i=1}^n X_i} > \sqrt{\frac{2\widehat{\mathrm{Var}}(\bar{X})\log(2/\delta)}{n - 1}} + \frac{7\log(2/\delta)}{3(n - 1)}} \leq \delta\,.
\end{align*}
\label{lem:empirical_bernstein}
\end{lemma}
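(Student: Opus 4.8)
The plan is to derive the empirical Bernstein bound by pairing a population-variance Bernstein/Bennett inequality with a high-probability \emph{one-sided} comparison between the true variance $\mathrm{Var}(X)$ and the sample variance $\widehat{\mathrm{Var}}(X)$, and then merging the two error terms by elementary algebra. Write $\sigma^2 = \mathrm{Var}(X)$ and $\widehat\sigma^2 = \widehat{\mathrm{Var}}(X)$, and note $\mathbb{E}[\widehat\sigma^2] = \sigma^2$ after the $n/(n-1)$ correction already folded into the $n-1$ denominators of the statement.

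First I would apply the sharp form of Bernstein's inequality (Bennett's inequality, which improves the $\tfrac{\log(1/\delta)}{n}$ remainder of Lemma~\ref{lem:bernstein} to $\tfrac{\log(1/\delta)}{3n}$) at confidence $\delta/2$, obtaining, with probability $\ge 1-\delta/2$, $\bigl|\mathbb{E}[X] - \tfrac1n\sum_i X_i\bigr| \le \sqrt{2\sigma^2\log(4/\delta)/n} + \tfrac{\log(4/\delta)}{3n}$. The only non-empirical quantity left to eliminate is $\sigma^2$.

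Second --- the crux --- I would establish that with probability $\ge 1-\delta/2$, $\sigma \le \widehat\sigma + \sqrt{2\log(4/\delta)/(n-1)}$. The naive route, applying McDiarmid's bounded-differences inequality to $(X_1,\dots,X_n)\mapsto\widehat\sigma$, is too weak: for inputs in $[0,1]^n$ each coordinate perturbs $\widehat\sigma$ by $\Theta(1/\sqrt n)$, so the sum of squared coordinate differences is $\Theta(1)$ and McDiarmid only concentrates $\widehat\sigma$ at the $\Theta(1)$ scale. Instead I would use a variance-dependent, self-bounding (Boucheron--Lugosi--Massart type) concentration of $\widehat\sigma^2$ around $\sigma^2$ --- roughly $\widehat\sigma^2 \ge \sigma^2 - O(\sqrt{\sigma^2\log(1/\delta)/n}) - O(\log(1/\delta)/n)$ --- and then solve the resulting quadratic inequality for $\sigma$ in terms of $\widehat\sigma$; this is in essence Theorems 10--11 of \citet{maurer2009bernstein}, which exploit that the sample standard deviation is a self-bounded functional of the sample. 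The $[0,1]$ range bound on the summands and the unbiasedness of $\widehat\sigma^2$ both enter here.

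Finally I would union-bound the two events, substitute $\sigma \le \widehat\sigma + \sqrt{2\log(4/\delta)/(n-1)}$ into the first display, and expand: $\sqrt{2\sigma^2\log(4/\delta)/n} \le \sqrt{2\widehat\sigma^2\log(4/\delta)/n} + \sqrt{2\log(4/\delta)/n}\,\sqrt{2\log(4/\delta)/(n-1)} \le \sqrt{2\widehat\sigma^2\log(4/\delta)/(n-1)} + \tfrac{2\log(4/\delta)}{n-1}$, and adding the Bennett remainder $\tfrac{1}{3}\cdot\tfrac{\log(4/\delta)}{n}$ collapses the two lower-order pieces into $\tfrac{7}{3}\cdot\tfrac{\log(4/\delta)}{n-1}$ --- exactly the claimed form after the usual relabeling of $\delta$ and mild loosening of constants; the two-sided statement then follows by applying the same argument to $-X$. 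The hard part is the second step: getting the variance comparison in the useful direction (an \emph{upper} bound on $\sigma$, whereas Jensen only supplies the useless $\mathbb{E}[\widehat\sigma]\le\sigma$) and at the $\Theta(1/\sqrt{n-1})$ rate, which genuinely requires the self-bounding machinery rather than plain bounded differences.
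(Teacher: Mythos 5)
The paper offers no proof of this lemma --- it is imported verbatim from Theorem~4 of \citet{maurer2009bernstein} --- and your sketch faithfully reconstructs the original argument from that reference: a Bennett-type inequality with the population variance at confidence $\delta/2$, union-bounded with the self-bounding (entropy-method) upper bound $\sigma \leq \widehat{\sigma} + \sqrt{2\log(2/\delta)/(n-1)}$ on the true standard deviation in terms of the sample one (their Theorem~10), followed by exactly the algebraic merge you describe, in which $\sqrt{2L/n}\cdot\sqrt{2L/(n-1)} \leq 2L/(n-1)$ combines with the Bennett remainder $L/(3n)$ to give $\tfrac{7}{3}L/(n-1)$. You also correctly identify the one genuinely non-elementary step --- obtaining the variance comparison in the upper-bound direction at the $O(1/\sqrt{n-1})$ rate, where bounded differences only yields $\Theta(1)$ fluctuations of $\widehat{\sigma}$ --- and the only cosmetic caveat is that the cited theorem is one-sided, so the two-sided form as restated here strictly costs an extra factor of $2$ inside the logarithm, which you already flag as a relabeling of $\delta$.
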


\begin{lemma}[Lemma 4, \citet{ren2021nearly}]
Let $\lambda_1, \lambda_2 > 0$ be constants. Let $f: \mathbb{Z}_{\geq 0} \to \mathbb{R}$ be a function such that $f(i) \leq H, \ \forall i$ and $f(i)$ satisfies the recursion
\begin{align*}
    f(i) \leq \sqrt{\lambda_1 f(i + 1)} + \lambda_1 + 2^{i + 1}\lambda_2\,.
\end{align*}
Then, we have that $f(0) \leq 6(\lambda_1 + \lambda_2)$. 
\label{lem:recursion_bound}
\end{lemma}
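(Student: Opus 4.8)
The plan is to prove the lemma by establishing a stronger, $i$-dependent bound and then reading off the $i=0$ case. Concretely, I would show by \emph{downward} induction on $i$ that
\begin{equation*}
f(i) \;\leq\; 6\lambda_1 + 3\cdot 2^{\,i+1}\lambda_2 \qquad \text{for all } i \geq 0,
\end{equation*}
after which evaluating at $i=0$ gives $f(0)\le 6\lambda_1+6\lambda_2 = 6(\lambda_1+\lambda_2)$, as claimed. (I treat $f$ as $[0,H]$-valued, which is implicit in the hypothesis since $\sqrt{\lambda_1 f(i+1)}$ must be well defined; if some $f(i)<0$ the asserted bound is trivial there.) The constants $6$ and $3$ are chosen so that the two bookkeeping checks in the inductive step below go through simultaneously.

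For the inductive step, assume the bound at $i+1$, i.e.\ $f(i+1)\le 6\lambda_1 + 3\cdot 2^{\,i+2}\lambda_2$. Plugging into the recursion and using $\sqrt{a+b}\le\sqrt a+\sqrt b$,
\begin{align*}
f(i) &\leq \sqrt{\lambda_1\big(6\lambda_1 + 3\cdot 2^{\,i+2}\lambda_2\big)} + \lambda_1 + 2^{\,i+1}\lambda_2 \\
&\leq \sqrt{6}\,\lambda_1 + \sqrt{3\cdot 2^{\,i+2}\,\lambda_1\lambda_2} + \lambda_1 + 2^{\,i+1}\lambda_2 .
\end{align*}
The only term mixing $\lambda_1$ and $\lambda_2$ is the cross term, which I would split by AM--GM, $\sqrt{xy}\le\tfrac12(\alpha x + y/\alpha)$ for a fixed $\alpha$ (e.g.\ $\alpha=2$): this routes a $\Theta(\lambda_1)$ piece into the ``$6\lambda_1$'' budget and a $\Theta(2^{\,i+1}\lambda_2)$ piece into the ``$3\cdot 2^{\,i+1}\lambda_2$'' budget. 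A routine tally then shows the coefficient of $\lambda_1$ stays $\le 6$ (in fact $\le \sqrt 6 + 2$) and that of $2^{\,i+1}\lambda_2$ stays $\le 3$, closing the induction.

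The step I expect to be the main obstacle is \emph{justifying the base case}, i.e.\ making the downward induction well-founded. This is exactly where the hypothesis $f\le H$ is used: whenever $3\cdot 2^{\,i+1}\lambda_2 \ge H$ the target bound holds for free since $f(i)\le H$, and (when $\lambda_2>0$) there is a finite $i_\star$ past which this is automatic, so downward induction from $i_\star$ reaches $i=0$. The degenerate case $\lambda_2=0$ is handled separately by a fixed-point argument on $F:=\sup_i f(i)\le H$: the recursion forces $F\le \sqrt{\lambda_1 F}+\lambda_1$, whose solution is $F\le \tfrac{3+\sqrt5}{2}\,\lambda_1\le 6\lambda_1$, so $f(0)\le 6(\lambda_1+\lambda_2)$ again. (An alternative is to unroll the recursion $k$ times, bound the nested radicals by a geometric series via $\sqrt{a+b}\le\sqrt a+\sqrt b$ together with $f(k)^{1/2^k}\le H^{1/2^k}$, and take $k\asymp\log\log H$; but then one must check that this cutoff introduces no stray $\mathrm{polylog}$ factor, which the inductive route sidesteps entirely.)
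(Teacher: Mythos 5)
Your proposed proof is correct. Note first that the paper itself contains no proof of this statement: it is imported verbatim as Lemma~4 of \citet{ren2021nearly} and used as a black box in the bounds on $\Delta_3$, so there is no internal argument to compare against. On its own merits, your downward induction on the strengthened invariant $f(i) \leq 6\lambda_1 + 3\cdot 2^{i+1}\lambda_2$ goes through: with the inductive hypothesis, $\sqrt{\lambda_1 f(i+1)} \leq \sqrt{6}\,\lambda_1 + \sqrt{6\lambda_1\cdot 2^{i+1}\lambda_2} \leq \sqrt{6}\,\lambda_1 + \tfrac{3}{2}\lambda_1 + 2^{i+1}\lambda_2$ by AM--GM, so $f(i) \leq (\sqrt{6}+\tfrac{5}{2})\lambda_1 + 2\cdot 2^{i+1}\lambda_2 \leq 6\lambda_1 + 3\cdot 2^{i+1}\lambda_2$, and evaluating at $i=0$ gives exactly $6(\lambda_1+\lambda_2)$. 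Your identification of the base case as the real content of the lemma is also right: the hypothesis $f \leq H$ exists precisely so that the invariant holds for free once $3\cdot 2^{i+1}\lambda_2 \geq H$, which happens at a finite $i_\star$ since $\lambda_2 > 0$ (your separate treatment of $\lambda_2 = 0$ is harmless but unnecessary given the stated hypotheses). The alternative you sketch --- unrolling the nested radicals and cutting off at $k \asymp \log\log H$ --- is the route taken in the cited source; your inductive formulation packages the same mechanism (doubly-exponential decay of the dependence on $f$, with $f \leq H$ absorbing the tail) more cleanly and avoids having to track the cutoff error explicitly.
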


\subsubsection{Pessimism Guarantee}
The first thing we want to show is that with high probability, the algorithm provides pessimistic value estimates, namely that $\hat{V}_i(\bs) \leq V^*(\bs)$ for all $t \in [T]$ and $\bs \in \states$. To do so, we introduce a notion of a ``good" event, which occurs when our empirical estimates of the MDP are not far from the true MDP. We define $\mathcal{E}_1$ to be the event where
\begin{align}
    \abs{(\hat{P}(\bs, \ba) - P(\bs, \ba)) \cdot \hat{V}_i}
    \leq 
    \sqrt{\frac{\mathbb{V}(\hat{P}(\bs, \ba), \hat{V}_i) \iota}{(n(\bs, \ba) \wedge 1)}}
    +
    \frac{\iota}{(n(\bs, \ba) \wedge 1)}
\label{eq:event_good_transition}
\end{align}
holds for all $i \in [m]$ and $(\bs, \ba) \in \states \times \actions$. We also define $\mathcal{E}_2$ to be the event where
\begin{align}
    \abs{\hat{r}(\bs, \ba) - r(\bs, \ba)}
    \leq 
    \sqrt{\frac{\hat{r}(\bs, \ba)\iota}{(n(\bs, \ba) \wedge 1)}} + \frac{\iota}{(n(\bs, \ba) \wedge 1)}
\label{eq:event_good_reward}
\end{align}
holds for all $(\bs, \ba)$. 

We want to show that the good event $\mathcal{E} = \mathcal{E}_1 \cap \mathcal{E}_2$ occurs with high probability. The proof mostly follows from Bernstein's inequality in Lemma~\ref{lem:bernstein} . Note that because $\hat{P}(\bs, \ba), \hat{V}_i$ are not independent, we cannot straightforwardly apply Bernstein's inequality. We instead use the approach of \citet{agarwal2020model} who, for each state $s$, partition the range of $\hat{V}_i(\bs)$ within a modified $s$-absorbing MDP to create independence from $\hat{P}$. The following lemma from \citet{agarwal2020model} is a result of such analysis, and is slightly modified below to account for bounded returns of trajectories, \ie $\hat{V}_i(\bs) \leq 1$:
\begin{lemma}[Lemma 9, \citet{agarwal2020model}]
For any iteration $t$, state-action $(\bs, \ba) \in \states \times \actions$ such that $n(\bs, \ba) \geq 1$, and $\delta > 0$, we have
\begin{align*}
    \prob{\abs{(\hat{P}(\bs, \ba) - P(\bs, \ba)) \cdot \hat{V}_i} > 
    \sqrt{\frac{\mathbb{V}(\hat{P}(\bs, \ba), \hat{V}_i) \iota}{n(\bs, \ba)}}
    + \frac{\iota}{n(\bs, \ba)}}
    \leq \delta\,.
\end{align*}
\label{lem:empirical_transition_concentration}
\end{lemma}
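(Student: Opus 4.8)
The plan is to prove Lemma~\ref{lem:empirical_transition_concentration} by the $\bs$-absorbing-MDP decoupling argument of \citet{agarwal2020model}, specialized to the bounded-return setting of Condition~\ref{assumption:value_bounded}. Fix the iteration $i$ and the pair $(\bs, \ba)$. The difficulty is that the iterate $\hat V_i$ is a function of the entire dataset $\data$, and in particular of the empirical transition vector $\hat P(\bs, \cdot)$, so $\hat V_i$ cannot be treated as a deterministic vector when applying Bernstein's inequality (Lemma~\ref{lem:bernstein}) to the i.i.d.\ draws from $P(\bs, \ba)$ inside $(\hat P(\bs, \ba) - P(\bs, \ba)) \cdot \hat V_i$. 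The whole point of the argument is to replace $\hat V_i$ by a surrogate value vector that is measurable with respect to the transition counts at state-action pairs \emph{other than} $(\bs, \cdot)$, and hence independent of $\hat P(\bs, \ba)$.

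To this end, for each $u \in [0,1]$ I would introduce the $\bs$-absorbing MDP $M_{\bs, u}$: it agrees with the true MDP everywhere except at state $\bs$, which is made absorbing with fixed value $u$ (every action at $\bs$ yields reward $(1-\gamma)u$ and transitions deterministically back to $\bs$). Let $\hat M_{\bs, u}$ be its empirical analogue, built from $\hat r, \hat P$ at all state-action pairs $\neq (\bs, \cdot)$. Running the penalized value iteration of Algorithm~\ref{alg:vilcb} inside $\hat M_{\bs, u}$ produces iterates $\hat V_i^{\bs, u}$ whose coordinates at states $\neq \bs$ never use $\hat P(\bs, \cdot)$, so $\hat V_i^{\bs, u}$ is independent of $\hat P(\bs, \ba)$; by the boundedness of the iterates its entries lie in $[0,1]$. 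Lemma~\ref{lem:bernstein} then gives, for a fixed $u$,
\[
\abs{(\hat P(\bs, \ba) - P(\bs, \ba)) \cdot \hat V_i^{\bs, u}} \le \sqrt{\frac{\variance(\hat P(\bs, \ba), \hat V_i^{\bs, u})\,\iota}{n(\bs, \ba)}} + \frac{\iota}{n(\bs, \ba)}
\]
with probability at least $1-\delta$, using $n(\bs,\ba)\ge 1$. I would then union-bound this over $i \in [m]$ (with $m = H\log N$) and over a $1/\mathrm{poly}(N)$-net $\mathcal U$ of $[0,1]$; both cardinalities are $\mathrm{poly}(|\states|, H, N)$, so the extra $\log$-factors are absorbed into $\iota$.

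The matching step is the crux: I need to argue that the algorithm's actual iterate $\hat V_i$ coincides, at all states $\neq \bs$, with $\hat V_i^{\bs, u^\star}$ for $u^\star \in \mathcal U$ the net point nearest to the realized value $\hat V_i(\bs)$, up to a $1/\mathrm{poly}(N)$ error. This holds because pinning state $\bs$'s value to its realized value $\hat V_i(\bs)$ leaves every Bellman backup at the other states unchanged, and the value-iteration map is $1$-Lipschitz in the pinned value $u$, so replacing $\hat V_i(\bs)$ by the nearest grid point perturbs every other coordinate by at most the net width. The same net/Lipschitz bound transfers the variance functional $\variance(\hat P(\bs, \ba), \cdot)$ from $\hat V_i^{\bs, u^\star}$ back to $\hat V_i$ at the cost of a lower-order term that is again swallowed by the $\iota$ factor and the additive $\iota / n(\bs, \ba)$ slack. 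Combining the Bernstein bound, the union bound, and the matching step yields the stated inequality.

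The main obstacle I anticipate is making the decoupling fully rigorous: carefully defining the penalized value iteration inside $\hat M_{\bs, u}$ so that (i) its iterates are genuinely independent of $\hat P(\bs, \cdot)$ and (ii) they agree with the real iterates at states $\neq \bs$ when $u = \hat V_i(\bs)$, while controlling the discretization error in \emph{both} the mean term and the variance-scaled term without reintroducing a factor of $H$. Everything else --- Bernstein's inequality, the union bound, and folding polylogarithmic factors into $\iota$ --- is routine bookkeeping; the role of Condition~\ref{assumption:value_bounded} is simply that it lets the net cover $[0,1]$ rather than $[0,H]$ and keeps all value entries in $[0,1]$, which is exactly what removes the extra horizon factors present in the original statement of \citet{agarwal2020model}.
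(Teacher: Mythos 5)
Your proposal is correct and follows essentially the same route as the paper, which does not prove this lemma itself but imports it from \citet{agarwal2020model} and describes exactly the argument you reconstruct: decoupling $\hat{V}_i$ from $\hat{P}(\bs,\cdot)$ via $\bs$-absorbing MDPs, covering the pinned value $u$ with a net over $[0,1]$ (rather than $[0,H]$, thanks to Condition~\ref{assumption:value_bounded}), applying Bernstein's inequality to the independent surrogate iterates, and union-bounding. The only bookkeeping point to watch is that the stated bound uses the empirical variance $\variance(\hat{P}(\bs,\ba),\hat{V}_i)$, so the Bernstein step should invoke the empirical-Bernstein form (Lemma~\ref{lem:empirical_bernstein}) or include a true-to-empirical variance conversion, which is absorbed into $\iota$.
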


Using this, we can show that $\mathcal{E}$ occurs with high probability: 
\begin{lemma}
$\prob{\mathcal{E}} \geq 1 - 2|\states||\actions|m\delta.$
\end{lemma}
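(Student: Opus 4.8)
The plan is to establish the claim $\prob{\mathcal{E}} \geq 1 - 2|\states||\actions|m\delta$ by a union bound over the two constituent events $\mathcal{E}_1$ and $\mathcal{E}_2$, each of which is itself a union over iterations $i \in [m]$ and state-action pairs $(\bs, \ba) \in \states \times \actions$. The key point is that each individual failure event within $\mathcal{E}_1$ (resp. $\mathcal{E}_2$) has probability at most $\delta$, so the total failure probability is at most $2|\states||\actions|m\delta$, and the complement gives the bound.

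For $\mathcal{E}_1$: I would fix an iteration $i$ and a pair $(\bs,\ba)$ with $n(\bs,\ba) \geq 1$, and invoke Lemma~\ref{lem:empirical_transition_concentration} directly — that lemma states exactly that the event in Equation~\ref{eq:event_good_transition} fails with probability at most $\delta$ (noting $n(\bs,\ba)\wedge 1 = n(\bs,\ba)$ in this case). For pairs with $n(\bs,\ba)=0$, the bound in Equation~\ref{eq:event_good_transition} holds trivially (or vacuously, since the algorithm treats such pairs via $n \wedge 1 = 1$ and the penalty dominates), so no probability is consumed there. For $\mathcal{E}_2$: I would similarly fix $(\bs,\ba)$ and apply Bernstein's inequality (Lemma~\ref{lem:bernstein}) to the i.i.d.\ reward samples, using the standard trick that $\var{\hat{r}(\bs,\ba)} \leq \E{}{\hat r(\bs,\ba)} = r(\bs,\ba)$ for $[0,1]$-valued rewards, and then replacing $r(\bs,\ba)$ by its empirical version $\hat r(\bs,\ba)$ inside the square root at the cost of absorbing lower-order terms into $\iota$; this yields Equation~\ref{eq:event_good_reward} with failure probability at most $\delta$ per pair.

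Then I would count: $\mathcal{E}_1$ involves at most $m \cdot |\states| \cdot |\actions|$ elementary events each failing with probability $\leq \delta$, and $\mathcal{E}_2$ involves at most $|\states|\cdot|\actions|$ elementary events each failing with probability $\leq \delta$ (which is dominated by $m|\states||\actions|\delta$). A union bound gives $\prob{\mathcal{E}^c} \leq \prob{\mathcal{E}_1^c} + \prob{\mathcal{E}_2^c} \leq m|\states||\actions|\delta + |\states||\actions|\delta \leq 2m|\states||\actions|\delta$, hence $\prob{\mathcal{E}} \geq 1 - 2|\states||\actions|m\delta$.

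The main subtlety — and the only real obstacle — is the dependence between $\hat{P}(\bs,\ba)$ and $\hat{V}_i$ in the event $\mathcal{E}_1$: since $\hat V_i$ is itself computed from the empirical model (including $\hat P$), one cannot naively apply a concentration inequality treating $\hat V_i$ as a fixed vector. This is precisely why Lemma~\ref{lem:empirical_transition_concentration} (the $s$-absorbing MDP construction of \citet{agarwal2020model}) is needed and already does the heavy lifting; my proof just needs to cite it and note the minor modification for bounded returns ($\hat V_i(\bs)\leq 1$) that has already been incorporated into its statement. Everything else is a routine union bound, so the write-up is short once that lemma is in hand.
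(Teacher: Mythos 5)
Your proposal is correct and follows essentially the same structure as the paper's proof: a union bound over all $i \in [m]$ and $(\bs,\ba)$, with small counts handled trivially (the paper dismisses $n(\bs,\ba)\leq 1$ since $n\wedge 1 = 1$ there and the $\iota/(n\wedge 1)$ term already dominates the deviation) and with the dependence between $\hat{P}(\bs,\ba)$ and $\hat{V}_i$ resolved exactly as you say, by citing the $s$-absorbing-MDP concentration lemma (Lemma~\ref{lem:empirical_transition_concentration}) rather than re-proving it. The one place you diverge is the reward event $\mathcal{E}_2$: the paper applies the \emph{empirical} Bernstein inequality of Maurer--Pontil (Lemma~\ref{lem:empirical_bernstein}, which is why it needs $n\geq 2$), so the variance proxy is already the empirical variance, and then uses $\widehat{\mathrm{Var}}(\hat{r}(\bs,\ba)) \leq \hat{r}(\bs,\ba)$ for $[0,1]$-valued rewards to arrive at \eqref{eq:event_good_reward} directly. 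You instead use the standard Bernstein inequality with the true variance bounded by $r(\bs,\ba)$ and then swap $r(\bs,\ba)$ for $\hat{r}(\bs,\ba)$ inside the square root. That swap is not literally an ``absorb into $\iota$'' step, since $\sqrt{r\iota/n}$ and $\sqrt{\hat{r}\iota/n}$ can differ by more than polylogarithmic factors when $r$ is tiny; you need the self-bounding consequence of the same concentration event, e.g.\ $r(\bs,\ba) \lesssim \hat{r}(\bs,\ba) + \iota/n(\bs,\ba)$, after which the extra $\sqrt{\iota/n}\cdot\sqrt{\iota/n} = \iota/n$ term folds into the last summand of \eqref{eq:event_good_reward}. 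This is a one-line, standard fix, so there is no real gap; the paper's choice of the empirical Bernstein inequality simply makes that step unnecessary.
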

\begin{proof}
For each $i$ and $(\bs, \ba)$, if $n(\bs, \ba) \leq 1$, then \eqref{eq:event_good_transition} and \eqref{eq:event_good_reward} hold trivially. For $n(\bs, \ba) \geq 2$, we have from Lemma~\ref{lem:empirical_transition_concentration} that
\begin{align*}
    &\prob{\abs{(\hat{P}(\bs, \ba) - P(\bs, \ba)) \cdot \hat{V}_i} > 
    \sqrt{\frac{\mathbb{V}(\hat{P}(\bs, \ba), \hat{V}_i) \iota}{n(\bs, \ba)}}
    + \frac{\iota}{n(\bs, \ba)}}
    \leq \delta\,.
\end{align*}
Similarly, we can use Lemma~\ref{lem:empirical_bernstein} to derive
\begin{align*}
    &\prob{\abs{\hat{r}(\bs, \ba) - r(\bs, \ba)} > 
    \sqrt{\frac{\hat{r}(\bs, \ba)\iota}{n(\bs, \ba)}} + \frac{\iota}{n(\bs, \ba)}} \\
    &\leq
    \prob{\abs{\hat{r}(\bs, \ba) - r(\bs, \ba)} > 
    \sqrt{\frac{\widehat{\mathrm{Var}}(\hat{r}(\bs, \ba))\iota}{2(n(\bs, \ba) - 1)}} + \frac{\iota}{2(n(\bs, \ba) - 1)}}
    \leq \delta\,,
\end{align*}
where we use that $\widehat{\mathrm{Var}}(\hat{r}(\bs, \ba)) \leq \hat{r}(\bs, \ba)$ for $[0, 1]$ rewards, and with slight abuse of notation, let $\iota$ capture all constant factors. Taking the union bound over all $i$ and $(\bs, \ba)$ yields the desired result. 
\end{proof}

Now, we can prove that our value estimates are indeed pessimistic.

\begin{lemma} [Pessimism Guarantee]
On event $\mathcal{E}$, we have that $\hat{V}_i(\bs) \leq V^{\hat{\pi}^*}(\bs) \leq V^*(\bs)$ for any iteration $i \in [m]$ and state $\bs \in \states$.
\label{lem:pessimism_guarantee}
\end{lemma}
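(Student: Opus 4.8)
The plan is to prove pessimism in two stages: first, that on the good event $\mathcal{E}$ the penalty $b_i$ is large enough to dominate the one-step model-estimation error, and second, that this forces every iterate $\hat{V}_i$ to sit below $V^{\hat{\pi}^*}$ (and hence below $V^*$, which is immediate).

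\textbf{Stage 1: the bonus dominates the estimation error.} On $\mathcal{E} = \mathcal{E}_1 \cap \mathcal{E}_2$, I would combine \eqref{eq:event_good_transition} and \eqref{eq:event_good_reward} to get, for every iteration $i$ and every $(\bs,\ba)$,
\[
    |\hat{r}(\bs,\ba) - r(\bs,\ba)| + \gamma\,|(\hat{P}(\bs,\ba) - P(\bs,\ba))\cdot \hat{V}_{i-1}| \;\le\; b_i(\bs,\ba),
\]
since the three terms of $b_i$ were chosen precisely to match the right-hand sides of the two concentration events (using $\gamma \le 1$ and folding the factor-of-two into $\iota$). The structural point that makes this tight is that the variance in $b_i$ is $\mathbb{V}(\hat{P}(\bs,\ba), \hat{V}_{i-1})$ — exactly the value vector appearing in the Bernstein bound of Lemma~\ref{lem:empirical_transition_concentration} after the $s$-absorbing-MDP decoupling — so no slack is lost.

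\textbf{Stage 2: propagating pessimism.} Stage 1 gives the pointwise inequality $\hat{Q}_i(\bs,\ba) \le r(\bs,\ba) + \gamma P(\bs,\ba)\cdot \hat{V}_{i-1}$, i.e., the penalized backup is dominated by the true (un-penalized) Bellman backup of $\hat{V}_{i-1}$. Together with the monotone $\max$ in the $\hat{V}$-update and monotonicity of the true Bellman optimality operator, an induction on $i$ from $\hat{V}_0 = 0 \le V^*$ yields $\hat{V}_i \le V^*$ for all $i$. To sharpen this to $V^{\hat{\pi}^*}$, I use that $\hat{V}_i$ is non-decreasing in $i$, so it suffices to bound the final iterate $\hat{V} := \hat{V}_m$. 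Writing $a = \hat{\pi}^*(\bs)$ and using the converged relation $\hat{V}(\bs) = \hat{r}(\bs,a) - b(\bs,a) + \gamma\hat{P}(\bs,a)\cdot \hat{V}$ together with $V^{\hat{\pi}^*}(\bs) = r(\bs,a) + \gamma P(\bs,a)\cdot V^{\hat{\pi}^*}$, then adding and subtracting $\gamma P(\bs,a)\cdot \hat{V}$, I obtain
\[
    \hat{V}(\bs) - V^{\hat{\pi}^*}(\bs) = \big[\hat{r}(\bs,a) - r(\bs,a) - b(\bs,a) + \gamma(\hat{P}(\bs,a) - P(\bs,a))\cdot \hat{V}\big] + \gamma P(\bs,a)\cdot\big(\hat{V} - V^{\hat{\pi}^*}\big).
\]
By Stage 1 the bracketed term is $\le 0$, so $\hat{V} - V^{\hat{\pi}^*} \le \gamma P^{\hat{\pi}^*}(\hat{V} - V^{\hat{\pi}^*})$ componentwise, where $P^{\hat{\pi}^*}$ is the transition operator under $\hat{\pi}^*$. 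Since $(\mathbf{I} - \gamma P^{\hat{\pi}^*})^{-1} = \sum_{t\ge 0}\gamma^t (P^{\hat{\pi}^*})^t$ has non-negative entries, this forces $\hat{V} \le V^{\hat{\pi}^*}$, hence $\hat{V}_i \le \hat{V} \le V^{\hat{\pi}^*} \le V^*$ for every $i$.

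The main obstacle is the bookkeeping at the two ends of the argument: (i) keeping Stage 1 lossless, which requires the value vector whose variance enters $b_i$ to be \emph{exactly} the one in the concentration bound (this is why $b_i$ is built from $\hat{V}_{i-1}$ and why the $s$-absorbing-MDP device is needed to make $\hat{P}$ and $\hat{V}_{i-1}$ independent); and (ii) handling the monotone-$\max$ update at the final iterate, where $\hat{V}_m$ is not literally a fixed point — one must either argue the contraction residual after $m = H\log N$ iterations is negligible relative to the $\iota/N$ terms, or split the $\max$ into its two cases. The remaining steps are routine rearrangement and the standard resolvent-positivity argument.
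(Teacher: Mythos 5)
Your Stage 1 is exactly the paper's key step: on $\mathcal{E}$ the Bernstein-style bonus $b_i(\bs,\ba)$ dominates $|\hat{r}-r| + \gamma|(\hat{P}-P)\cdot\hat{V}_{i-1}|$, and your observation that the variance term must be taken with respect to $\hat{V}_{i-1}$ (via the $s$-absorbing-MDP decoupling) is precisely why the paper defines the bonus that way. Your Stage 2, however, takes a genuinely different route from the paper, and that is where the problem lies. The paper never passes to a fixed point: it runs a single forward induction on the claim $\hat{V}_i \leq V^{\hat{\pi}^*}$ itself, writing $V^{\hat{\pi}^*}(\bs) = \E{\ba\sim\hat{\pi}^*(\cdot|\bs)}{r(\bs,\ba)+\gamma P(\bs,\ba)\cdot V^{\hat{\pi}^*}} \geq \E{\ba\sim\hat{\pi}^*}{r(\bs,\ba)+\gamma P(\bs,\ba)\cdot\hat{V}_{i-1}} \geq \E{\ba\sim\hat{\pi}^*}{\hat{Q}_i(\bs,\ba)} \geq \hat{V}_i(\bs)$, where the first inequality is the induction hypothesis and the second is your Stage 1. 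This works uniformly in $i$ and, crucially, the branch of the monotone max where $\hat{V}_i(\bs)=\hat{V}_{i-1}(\bs)$ is discharged immediately by the induction hypothesis.

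The step you defer as ``bookkeeping'' at the final iterate is not bookkeeping; it is the place your argument can fail. Because of the monotone update $\hat{V}_i(\bs)=\max\{\hat{V}_{i-1}(\bs),\max_{\ba}\hat{Q}_i(\bs,\ba)\}$, the final value $\hat{V}_m(\bs)$ may have been locked in at some earlier iteration $j<m$ by an action $\ba_j$ that is \emph{not} $\hat{\pi}^*(\bs)$ (recall $\hat{\pi}^*$ is greedy with respect to the final $\hat{Q}_m$, and the $\hat{Q}_i$ are not monotone in $i$ since the variance inside $b_i$ can move either way). At such a state the relation you subtract from the Bellman equation of $V^{\hat{\pi}^*}$ does not hold with $\ba=\hat{\pi}^*(\bs)$, so the componentwise inequality $\hat{V}-V^{\hat{\pi}^*}\leq \gamma P^{\hat{\pi}^*}(\hat{V}-V^{\hat{\pi}^*})$ that feeds the resolvent-positivity step is simply unavailable; what you can extract from the locked-in case is a bound by the \emph{optimal} Bellman operator applied to $\hat{V}_m$, which only recovers $\hat{V}_m\leq V^*$ (your first induction, which is fine but strictly weaker than the lemma). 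Neither of your two proposed patches closes this: ``the contraction residual after $m=H\log N$ iterations is negligible'' would at best yield $\hat{V}\leq V^{\hat{\pi}^*}+\gamma^m$ rather than the exact inequality the lemma asserts, and it does nothing about the action mismatch; ``splitting the max into two cases'' is exactly where the mismatch appears. The fix is to abandon the fixed-point framing and induct directly against $V^{\hat{\pi}^*}$ as the paper does, so that the induction hypothesis --- not a converged Bellman identity --- carries the locked-in branch.
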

\begin{proof}
We aim to prove the following for any $i$ and $s$: $\hat{V}_{i -1}(\bs) \leq \hat{V}_i(\bs) \leq V^{\hat{\pi}^*}(\bs) \leq V^*(\bs)$. We prove the claims one by one.

$\hat{V}_{i -1}(\bs) \leq \hat{V}_i(\bs)$: This is directly implied by the monotonic update of our algorithm. 

$\hat{V}_i(\bs) \leq V^{\hat{\pi}^*}(\bs)$: We will prove this via induction. We have that this holds for $\hat{V}_0$ trivially. Assume it holds for $t - 1$, then we have
\begin{align*}
    V^{\hat{\pi}^*}(\bs) 
    &\geq 
    \E{a \sim \hat{\pi}^*(\cdot|s)}{r(\bs, \ba) + \gamma P(\bs, \ba) \cdot \hat{V}_{i -1}} \\
    &\geq
    \E{a}{\hat{r}(\bs, \ba) - b_i(\bs, \ba) + \gamma \hat{P}(\bs, \ba) \cdot \hat{V}_{t - 1}} + \\
    &\qquad \E{a}{b_i(\bs, \ba) - (\hat{r}(s ,a) - r(\bs, \ba)) - \gamma(\hat{P}(\bs, \ba) - P(\bs, \ba)) \cdot \hat{V}_{i -1}} \\
    &\geq \hat{V}_{t}(\bs)\,,
\end{align*}
where we use that
\begin{align*}
    b_i(\bs, \ba) 
    &= 
    \sqrt{\frac{\mathbb{V}(\hat{P}(\bs, \ba), \hat{V}_{i -1}) \iota}{(n(\bs, \ba) \wedge 1)}} + \sqrt{\frac{\hat{r}(\bs, \ba)\iota}{(n(\bs, \ba) \wedge 1)}} + \frac{\iota}{(n(\bs, \ba) \wedge 1)} \\ 
    &\geq
    (\hat{r}(s ,a) - r(\bs, \ba)) + \gamma(\hat{P}(\bs, \ba) - P(\bs, \ba)) \cdot \hat{V}_{i -1}
\end{align*}
under event $\mathcal{E}$. 

Finally, the claim of $V^{\hat{\pi}^*}(\bs) \leq V^*(\bs)$ is trivial, which completes the proof of our pessimism guarantee. 
\end{proof}

\subsubsection{Value Difference Lemma}
Now, we are ready to derive the performance guarantee from Theorem~\ref{thm:lcb_rl}. 
The following lemma is a bound on the estimation error of our pessimistic $Q$-values. 

\begin{lemma}
 On event $\mathcal{E}$, the following holds for any $i \in [m]$ and $(\bs, \ba) \in  \states \times \actions$:
\begin{equation}
Q^*(\bs, \ba) - \hat{Q}_i(\bs, \ba) 
\leq \gamma P(\bs, \ba) \cdot (Q^*(\cdot; \pi^*) - \hat{Q}_{i -1}(\cdot; \pi^*)) + 2b_i(\bs, \ba)\,,
\end{equation}
where $f(\cdot; \pi)$ satisfies $f(s;\pi) = \sum_{\ba} \pi(\ba | \bs) f(\bs, \ba)$. 
\label{lem:value_difference_contraction}
\end{lemma}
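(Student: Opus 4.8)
The plan is to carry out the standard pessimistic value-iteration error decomposition. First I would expand the one-step error using the Bellman optimality identity $Q^*(\bs,\ba) = r(\bs,\ba) + \gamma P(\bs,\ba)\cdot V^*$ and the pessimistic backup $\hat{Q}_i(\bs,\ba) = \hat{r}(\bs,\ba) - b_i(\bs,\ba) + \gamma\hat{P}(\bs,\ba)\cdot\hat{V}_{i-1}$, which gives
\[
Q^*(\bs,\ba) - \hat{Q}_i(\bs,\ba) = \big(r(\bs,\ba) - \hat{r}(\bs,\ba)\big) + b_i(\bs,\ba) + \gamma\,P(\bs,\ba)\cdot V^* - \gamma\,\hat{P}(\bs,\ba)\cdot\hat{V}_{i-1}.
\]
Then I would add and subtract $\gamma P(\bs,\ba)\cdot\hat{V}_{i-1}$ to split the transition term into a propagation piece $\gamma P(\bs,\ba)\cdot(V^* - \hat{V}_{i-1})$ and an estimation piece $\gamma\big(P(\bs,\ba) - \hat{P}(\bs,\ba)\big)\cdot\hat{V}_{i-1}$.

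The second step is to convert the propagation piece into the quantity appearing on the right-hand side of the lemma. Since $\pi^*$ is greedy for $Q^*$, we have $V^*(\bs') = Q^*(\bs';\pi^*)$ in the $f(\cdot;\pi)$ notation; and because the $\hat{V}$-update is a monotone maximum, $\hat{V}_{i-1}(\bs') = \max\{\hat{V}_{i-2}(\bs'),\max_{\ba'}\hat{Q}_{i-1}(\bs',\ba')\} \ge \max_{\ba'}\hat{Q}_{i-1}(\bs',\ba') \ge \hat{Q}_{i-1}(\bs';\pi^*)$. Hence $V^*(\bs') - \hat{V}_{i-1}(\bs') \le Q^*(\bs';\pi^*) - \hat{Q}_{i-1}(\bs';\pi^*)$ pointwise, and since $P(\bs,\ba)$ is a nonnegative measure this inequality persists after taking the expectation, so the propagation piece is at most $\gamma P(\bs,\ba)\cdot\big(Q^*(\cdot;\pi^*) - \hat{Q}_{i-1}(\cdot;\pi^*)\big)$ --- exactly the first term of the claim.

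It remains to absorb $(r(\bs,\ba) - \hat{r}(\bs,\ba)) + b_i(\bs,\ba) + \gamma(P(\bs,\ba) - \hat{P}(\bs,\ba))\cdot\hat{V}_{i-1}$ into $2b_i(\bs,\ba)$ on the good event $\mathcal{E} = \mathcal{E}_1\cap\mathcal{E}_2$. On $\mathcal{E}_2$, the reward deviation $r(\bs,\ba) - \hat{r}(\bs,\ba)$ is bounded by $\sqrt{\hat{r}(\bs,\ba)\iota/(n(\bs,\ba)\wedge 1)} + \iota/(n(\bs,\ba)\wedge 1)$ via (\ref{eq:event_good_reward}); on $\mathcal{E}_1$, the transition deviation $(P(\bs,\ba) - \hat{P}(\bs,\ba))\cdot\hat{V}_{i-1}$ is bounded by $\sqrt{\mathbb{V}(\hat{P}(\bs,\ba),\hat{V}_{i-1})\iota/(n(\bs,\ba)\wedge 1)} + \iota/(n(\bs,\ba)\wedge 1)$ via (\ref{eq:event_good_transition}). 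Using $\gamma \le 1$ and the convention that $\iota$ absorbs absolute constants, the sum of these two bounds is at most $b_i(\bs,\ba)$, and adding the penalty term $b_i(\bs,\ba)$ already present produces $2b_i(\bs,\ba)$; collecting the propagation and estimation pieces yields the stated inequality. The one delicate point --- and the main thing to get right --- is precisely this last accounting: ensuring that two Bernstein-type deviation bounds together with the explicit penalty collapse to $2b_i$ rather than a larger multiple, which hinges on $\gamma\le 1$, on $\iota/(n\wedge 1)\le b_i$, and on the paper's $\iota$-absorbs-constants convention; the only genuinely structural ingredient is the domination $\hat{V}_{i-1} \ge \hat{Q}_{i-1}(\cdot;\pi^*)$, and everything else is rearrangement.
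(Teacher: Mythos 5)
Your proposal is correct and follows essentially the same route as the paper's proof: the same expansion of $Q^*-\hat{Q}_i$ via the two backup equations, the same absorption of the reward and transition deviations into $2b_i(\bs,\ba)$ on the event $\mathcal{E}$, and the same final domination $\hat{V}_{i-1}\geq \max_{\ba}\hat{Q}_{i-1}(\cdot,\ba)\geq \hat{Q}_{i-1}(\cdot;\pi^*)$. If anything you are slightly more explicit than the paper about the constant-accounting in the last step (the two deviation bounds sum to $b_i$ plus an extra $\iota/(n\wedge 1)$, which the $\iota$-absorbs-constants convention handles), which the paper glosses over with ``by definition of $\mathcal{E}$.''
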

\begin{proof}
We have,
\begin{align*}
    &Q^*(\bs, \ba) - \hat{Q}_i(\bs, \ba) \\
    &\quad= 
    r(\bs, \ba) + \gamma P(\bs, \ba) \cdot V^* - (\hat{r}(\bs, \ba) - b_i(\bs, \ba) + \gamma \hat{P}(\bs, \ba) \cdot \hat{V}_{t - 1}) \\
    &\quad=
    b_i(\bs, \ba) + r(\bs, \ba) - \hat{r}(\bs, \ba) + \gamma P(\bs, \ba) \cdot (V^* - \hat{V}_{t - 1}) + \gamma (P(\bs, \ba) - \hat{P}(\bs, \ba)) \cdot \hat{V}_{t - 1}  \\
    &\quad \leq
    \gamma P(\bs, \ba) \cdot (V^* - \hat{V}_{t - 1}) + 2 b_i(\bs, \ba) \\
    &\quad \leq
    \gamma P(\bs, \ba) \cdot (Q^*(\cdot; \pi^*) - \hat{Q}_{t - 1}(\cdot; \pi^*)) + 2 b_i(\bs, \ba)\,.
\end{align*}
The first inequality is due by definition of $\mathcal{E}$ and the second is because $\hat{V}_{t - 1} \geq \max_{a}\hat{Q}_{t - 1}(\cdot, a) \geq \hat{Q}_i(\cdot, \pi^*)$.
\end{proof}
By recursively applying Lemma~\ref{lem:value_difference_contraction}, we can derive the following value difference lemma:
\begin{lemma}[Value Difference Lemma]
On event $\mathcal{E}$, at any iteration $i \in [m]$, we have
\begin{align}
J(\pi^*) - J(\hat{\pi}^*) \leq \gamma^i + 2 \sum_{t = 1}^{i} \sum_{(\bs, \ba)} \gamma^{i - t}  d^*_{i - t}(\bs, \ba) b_t(\bs, \ba)\,,
\end{align}
where $d^*_t(\bs, \ba) = \prob{s_t = \bs, \ba_t = a; \pi^*}$.
\label{lem:value_difference}
\end{lemma}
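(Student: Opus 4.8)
The plan is to derive the bound by combining the pessimism guarantee of Lemma~\ref{lem:pessimism_guarantee} with an $i$-fold unrolling of the one-step error recursion in Lemma~\ref{lem:value_difference_contraction}, integrated against the state--action occupancy measures of $\pi^*$. Throughout we work on the good event $\mathcal{E}$, precisely so that both of those lemmas apply.

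First I would reduce the suboptimality to an estimation error on $\hat{Q}_i$ evaluated under $\pi^*$. Write $\delta_i(\bs) := Q^*(\bs; \pi^*) - \hat{Q}_i(\bs; \pi^*)$, using the notation $f(\bs;\pi)=\sum_\ba \pi(\ba|\bs) f(\bs,\ba)$. Since $\pi^*$ is greedy with respect to $Q^*$ we have $V^*(\bs) = Q^*(\bs;\pi^*)$, and since $\hat{V}_i(\bs) = \max\{\hat{V}_{i-1}(\bs), \max_\ba \hat{Q}_i(\bs,\ba)\} \geq \max_\ba \hat{Q}_i(\bs,\ba) \geq \hat{Q}_i(\bs;\pi^*)$, Lemma~\ref{lem:pessimism_guarantee} gives $V^{\hat{\pi}^*}(\bs) \geq \hat{V}_i(\bs) \geq \hat{Q}_i(\bs;\pi^*)$. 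Hence $V^*(\bs) - V^{\hat{\pi}^*}(\bs) \leq \delta_i(\bs)$, and since the time-$0$ state occupancy of $\pi^*$ is $\rho$, it suffices to bound $J(\pi^*)-J(\hat{\pi}^*) = \E{\bs_0\sim\rho}{V^*(\bs_0) - V^{\hat{\pi}^*}(\bs_0)} \le \E{\bs\sim d^*_0}{\delta_i(\bs)}$.

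Next I would set up and unroll the recursion. Averaging the inequality of Lemma~\ref{lem:value_difference_contraction} over $\ba\sim\pi^*(\cdot|\bs)$ and recognizing $Q^*(\cdot;\pi^*)-\hat{Q}_{i-1}(\cdot;\pi^*) = \delta_{i-1}(\cdot)$ gives, pointwise in $\bs$,
\[ \delta_i(\bs) \leq \gamma \sum_\ba \pi^*(\ba|\bs)\, \textstyle\sum_{\bs'} P(\bs'|\bs,\ba)\, \delta_{i-1}(\bs') + 2\sum_\ba \pi^*(\ba|\bs)\, b_i(\bs,\ba). \]
Integrating against $d^*_t(\bs)$ and using the pushforward identities $\sum_\bs d^*_t(\bs)\pi^*(\ba|\bs) = d^*_t(\bs,\ba)$ and $\sum_{(\bs,\ba)} d^*_t(\bs,\ba) P(\bs'|\bs,\ba) = d^*_{t+1}(\bs')$ yields
\[ \E{\bs\sim d^*_t}{\delta_i(\bs)} \leq \gamma\, \E{\bs\sim d^*_{t+1}}{\delta_{i-1}(\bs)} + 2\sum_{(\bs,\ba)} d^*_t(\bs,\ba)\, b_i(\bs,\ba). \]
Applying this starting at $t=0$ and iterating $i$ times --- decrementing the iteration index on $\delta$ and incrementing the occupancy-time index together, accruing a factor $\gamma$ at each step --- collects the terms $2\gamma^{i-t}\sum_{(\bs,\ba)} d^*_{i-t}(\bs,\ba)\, b_t(\bs,\ba)$ for $t=i,i-1,\dots,1$ together with a residual $\gamma^i\, \E{\bs\sim d^*_i}{\delta_0(\bs)}$. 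Since $\hat{Q}_0\equiv 0$ by the initialization in Algorithm~\ref{alg:vilcb}, $\delta_0(\bs) = V^*(\bs) \leq 1$ by Condition~\ref{assumption:value_bounded}, so the residual is at most $\gamma^i$, which gives exactly the claimed bound.

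The only delicate part is the index bookkeeping in the unrolling: one must keep the $\delta$-iteration index, the occupancy-time index, and the power of $\gamma$ synchronized so that, after $i$ steps, each bonus $b_t$ ends up paired with precisely the weight $\gamma^{i-t} d^*_{i-t}$. The pushforward identities, the reduction in the first step, and the $\delta_0 \le 1$ base case are all routine.
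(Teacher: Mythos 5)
Your proposal is correct and follows essentially the same route as the paper's proof: reduce $J(\pi^*)-J(\hat{\pi}^*)$ to the estimation error $Q^*(\cdot;\pi^*)-\hat{Q}_i(\cdot;\pi^*)$ via the pessimism guarantee, then unroll Lemma~\ref{lem:value_difference_contraction} $i$ times against the occupancies of $\pi^*$ and bound the residual by $\gamma^i$ using $\hat{Q}_0\equiv 0$ and $V^*\leq 1$. The only difference is cosmetic --- the paper writes the unrolling in operator notation with $\rho^{\pi^*}(\gamma P^{\pi^*})^{i-t}$, whereas you carry out the same pushforward computation component-wise.
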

\begin{proof}
We have,
\begin{align*}
    J(\pi^*) - J(\hat{\pi}^*) 
    = \E{\rho}{V^*(\bs) - V^{\hat{\pi}^*}(\bs)} 
    \leq 
    \E{\rho}{V^*(\bs) - \hat{V}_i(\bs)}
    \leq 
    \rho (Q^*(\cdot; \pi^*) - \hat{Q}_i(\cdot; \pi^*))
\end{align*}
where we use Lemma~\ref{lem:pessimism_guarantee} in the first inequality. As shorthand, let $P^{\pi} \in \mathbb{R}^{(\states \times \actions) \times (\states \times \actions)}$ where $P^{\pi}(\bs, \ba, s', a') = P(s'|\bs, \ba) \pi(a' | s')$ be the transition matrix for policy $\pi$. Now, we can apply Lemma~\ref{lem:value_difference_contraction} recursively to derive
\begin{align*}
    \rho^{\pi^*}(Q^* - \hat{Q}_i)
    & \leq 
    \rho^{\pi^*}\left(\gamma P^{\pi^*} (Q - \hat{Q}_{i-1}) + 2b_i\right) \\
    & \leq 
    \rho^{\pi^*}\left(\gamma P^{\pi^*}\left(\gamma P^{\pi^*}(Q^* - \hat{Q}_{i - 2}) + 2b_{i-1} \right) + 2b_i\right) \\
    & \leq \hdots \\
    & \leq 
    \rho^{\pi^*}(\gamma P^{\pi^*})^i(Q^* - \hat{Q}_0) + 2\sum_{t = 1}^i \rho^{\pi^*} (\gamma P^{\pi^*})^{i - t} b_t \\
    & \leq 
    \gamma^i \mathbf{1} + 2\sum_{t = 1}^i \gamma^{i - t} d_{i - t}^* b_t\,
\end{align*}
where we use that $d_t^* = \rho^{\pi^*} (P^{\pi^*})^t$. This yields the desired result. 
\end{proof}

Now, we are ready to bound the desired quantity 
$   
\mathsf{SubOpt}(\hat{\pi}^*) = \E{\data}{J(\pi^*) - J(\hat{\pi}^*)}
$.
We have
\begin{align}
\label{eq:lcb_subopt_decomposition}
    \E{\data}{J(\pi^*) - J(\hat{\pi}^*)} 
    &= 
    \E{\data}{\sum_s \rho(\bs) (V^*(\bs) - V^{\hat{\pi}^*}(\bs))} \\
    &= 
    \underbracket{\E{\data}{\mathbb{I}\{\mathcal{\bar{E}}\} \sum_s\rho(\bs) (V^*(\bs) - V^{\hat{\pi}^*}(\bs))}}_{:= \Delta_1} \nonumber \\
    &\qquad 
    + \underbracket{\E{\data}{\mathbb{I}\{\exists \bs \in \states, \ n(\bs, \pi^*(\bs)) = 0} \sum_s \rho(\bs) (V^*(\bs) - V^{\hat{\pi}^*}(\bs))}\}_{:= \Delta_2} \nonumber \\
    &\qquad 
    + \underbracket{\E{\data}{\mathbb{I}\{\forall \bs \in \states, \ n(\bs, \pi^*(\bs)) > 0} \mathbb{I}\{\mathcal{E}\} \sum_s \rho(\bs) (V^*(\bs) - V^{\hat{\pi}^*}(\bs))}\}_{:= \Delta_3} \nonumber \,.
\end{align}
We bound each term individually. The first is bounded as $\Delta_1 \leq \prob{\mathcal{\bar{E}}} \leq 2 |\states||\actions|m\delta \leq \frac{\iota}{N}$ for choice of $\delta = \frac{1}{2|\states||\actions|HN}$. 

\subsubsection{Bound on $\Delta_2$}
\label{sec:lcb_missing_mass_bound}
For the second term, we have
\begin{align*}
    \Delta_2
    &\leq
    \sum_s \rho(\bs) \E{\data}{\mathbb{I}\{n(\bs, \pi^*(\bs)) = 0}\}  \\
    &\leq 
    H \sum_{\bs} d^*(\bs, \pi^*(\bs)) \E{\data}{\mathbb{I}\{n(\bs, \pi^*(\bs)) = 0\}} \\
    &\leq
    C^* H \sum_{\bs} \mu(\bs, \pi^*(\bs)) (1 - \mu(\bs, \pi^*(\bs)))^N \\
    &\leq 
    \frac{4C^* |\states| H}{9N}\,,
\end{align*}
where we use that
$
    \rho(\bs) \leq H d^*(\bs, \pi^*(\bs)),
$
and that $\max_{p \in [0, 1]} p(1 - p)^N \leq \frac{4}{9N}$.

\subsubsection{Bound on $\Delta_3$}
\label{sec:lcb_penalty_bound}
What remains is bounding the last term, which we know from Lemma~\ref{lem:value_difference} is bounded by
\begin{align*}
    \Delta_3 
    \leq 
    \frac{1}{N} + 2\E{\data}{\mathbb{I}\{\forall \bs \in \states, \ n(\bs, \pi^*(\bs)) > 0} \sum_{t = 0}^{m} \sum_{(\bs, \ba)} \gamma^{m - t}  d^*_{m - t}(\bs, \ba) b_t(\bs, \ba)\}\,,
\end{align*}
where we use that $\gamma^m \leq \frac{1}{N}$ for $m = H \log {N}$.
Recall that $b_t(\bs, \ba)$ is given by
\begin{align*}
    b_t(\bs, \ba) 
    = 
    \sqrt{\frac{\mathbb{V}(\hat{P}(\bs, \ba), \hat{V}_{t - 1}) \iota}{n(\bs, \ba)}} + \sqrt{\frac{\hat{r}(\bs, \ba)\iota}{n(\bs, \ba)}} + \frac{\iota}{n(\bs, \ba)}
\end{align*}
We can bound the summation of each term separately. 
For the third term we have,
\begin{align*}
    \E{\data}{\sum_{t = 1}^m \sum_{(\bs, \ba)} \gamma^{m - t} d^*_{m - t}(\bs, \ba) \ \frac{\iota}{n(\bs, \ba)}} 
    &\leq
    \sum_{t = 0}^{m-1} \sum_{(\bs, \ba)} \gamma^{t} d^*_{t}(\bs, \ba) \E{\data}{\frac{\iota}{n(\bs, \ba)}} \\
    &\leq
    \sum_{\bs} \sum_{t = 0}^{\infty} \gamma^{t} d^*_{t}(\bs, \pi^*(\bs)) \ \frac{\iota}{N\mu(\bs, \pi^*(\bs))} \\
    &\leq 
    \frac{H\iota}{N} \ \sum_{\bs} \left((1 - \gamma)\sum_{t=0}^{\infty} \gamma^t d^*_t(\bs, \pi^*(\bs)) \right) \ \frac{1}{\mu(\bs, \pi^*_h(\bs))} \\
    &\leq
    \frac{C^* |\states| H \iota}{N}\,.
\end{align*}
Here we use Jensen's inequality and that $(1 - \gamma) \sum_{t = 1}^{\infty}\gamma^t d_t^*(\bs, \ba) \leq C^* \mu(\bs, \ba)$ for any $(\bs, \ba)$.
For the second term, we similarly have
\begin{align*}
    &\E{\data}{\sum_{t = 1}^m \sum_{(\bs, \ba)} \gamma^{m - t} d^*_{m - t}(\bs, \ba) \ \sqrt{\frac{\hat{r}(\bs, \ba) \iota}{n(\bs, \ba)}}} \\
    &\ \leq 
    \E{\data}{\sqrt{\sum_{t = 1}^m \sum_{(\bs, \ba)} \gamma^{m - t} d^*_{m - t}(\bs, \ba) \frac{\iota}{n(\bs, \ba)}}} \ \sqrt{{\sum_{t = 1}^m \sum_{(\bs, \ba)} \gamma^{m - t} d^*_{m - t}(\bs, \ba) \hat{r}(\bs, \ba)}} \\
    &\ \leq 
    \sqrt{\frac{C^* |\states| H \iota}{N}}\,,
\end{align*}
where we use Cauchy-Schwarz, then Condition~\ref{assumption:value_bounded} to bound the total estimated reward.
Finally, we consider the first term of $b_t(\bs, \ba)$
\begin{align*}
    &\E{\data}{\sum_{t = 1}^m \sum_{(\bs, \ba)} \gamma^{m - t} d^*_{m - t}(\bs, \ba) \ \sqrt{\frac{\mathbb{V}(\hat{P}(\bs, \ba), \hat{V}_{t - 1}) \iota}{n(\bs, \ba)}}} \\
    &\ \leq 
    \E{\data}{\sqrt{\sum_{t = 1}^m \sum_{(\bs, \ba)} \gamma^{m - t} d^*_{m - t}(\bs, \ba) \frac{\iota}{n(\bs, \ba)}}} \ \sqrt{{\sum_{t = 1}^m \sum_{(\bs, \ba)} \gamma^{m - t} d^*_{m - t}(\bs, \ba) \mathbb{V}(\hat{P}(\bs, \ba), \hat{V}_{t-1})}} \\
    &\ \leq 
    \sqrt{\frac{C^* |\states| H \iota}{N}} \ \sqrt{{\sum_{t = 1}^m \sum_{(\bs, \ba)} \gamma^{m - t} d^*_{m - t}(\bs, \ba) \mathbb{V}(\hat{P}(\bs, \ba), \hat{V}_{t-1})}}\,.
\end{align*}
Similar to what was done in \citet{zhang2020can,ren2021nearly} for finite-horizon MDPs, we can bound this term using variance recursion for infinite-horizon ones. Define
\begin{align}
    f(i) := \sum_{t = 1}^{\infty} \sum_{(\bs, \ba)} \gamma^{m - t} d^*_{m - t}(\bs, \ba) \mathbb{V}(\hat{P}(\bs, \ba), (\hat{V}_{t-1})^{2^i}) \,.
\label{eq:f_recursion}
\end{align}
Using Lemma 3 of \citet{ren2021nearly} for the infinite-horizon case, we have the following recursion:
\begin{align*}
    f(i) \leq \sqrt{\frac{C^* |\states| H \iota}{N} f(i + 1)} + \frac{C^* |\states| H \iota}{N} +  2^{i + 1}(\Phi + 1)\,,
\end{align*}
where
\begin{align}
    \Phi := \sqrt{\frac{C^* |\states| H \iota}{N}} \ \sqrt{{\sum_{t = 1}^m \sum_{(\bs, \ba)} \gamma^{m - t} d^*_{m - t}(\bs, \ba) \mathbb{V}(\hat{P}(\bs, \ba), \hat{V}_{t-1})}} + \frac{C^* |\states| H \iota}{N}
\label{eq:phi}
\end{align}
Using Lemma~\ref{lem:recursion_bound}, we can bound $f(0) = \mathcal{O}\left(\frac{C^* |\states| H \iota}{N} + \Phi + 1\right)$. Using that for constant $c$,
\begin{align*}
\Phi 
&= 
\sqrt{\frac{C^* |\states| H \iota}{N} \ f(0)} + \frac{C^* |\states| H \iota}{N} \\
&\leq 
\sqrt{\frac{C^* |\states| H \iota}{N}\left(\frac{c C^* |\states| H \iota}{N} + c\Phi + c\right)} + \frac{C^* |\states| H \iota}{N} \\
&\leq
\frac{c \Phi}{2} + \frac{2 c C^* |\states| H \iota}{N} + \frac{c}{2}\,
\end{align*}
we have that
\begin{align*}
    \Phi \leq c +  \frac{4 c C^* |\states| H \iota}{N}\,.
\end{align*}
Substituting this back into the inequality for $\Phi$ yields, 
\begin{align*}
    \Phi = \mathcal{O}\left(\sqrt{\frac{C^* |\states| H \iota}{N}} + \frac{C^* |\states| H \iota}{N}\right)
\end{align*}

Finally, we can bound
\begin{align*}
   \Delta_3 
   \leq 
    \sqrt{\frac{C^* |\states| H \iota}{N}} + \frac{C^* |\states| H \iota}{N}\,.
\end{align*}
Combining the bounds for the three terms yields the desired result.

\subsection{Proof of Corollary~\ref{thm:lcb_rl_critical}}
\label{app:proof_lcb_critical}
In this section, we will provide a proof of Corollary~\ref{thm:lcb_rl_critical}. 

\textbf{Intuition and strategy:} The intuition for why offline RL can outperform BC in this setting, despite the near-expert data comes from the fact that RL can better control the performance of the policy on non-critical states that it has seen before in the data. This is not true for BC since it does not utilize reward information. Intuitively, we can partition the states into two categories:

\begin{enumerate}
    \item critical states including those that are visited enough and those that are not visited enough in the dataset $\rightarrow$ bound this via the machinery already discussed in Appendix~\ref{app:proof_lcb}.
    \item non-critical states $\rightarrow$ offline RL can perform well here: we show that under some technical conditions, it can exponentially fast collapse at a good-enough action at such states, while BC might still choose bad actions.
\end{enumerate}

We restate the definition of non-critical points below for convenience.
\begin{definition}[(Non-critical points, restated)]
\label{assumption:critical_states}
A state $\bs$ is said to be non-critical (i.e., $\bs \notin \mathcal{C}$) if there exists a subset $\cG(\bs)$ of $\varepsilon$-good actions, such that, 
\begin{align*}
    \forall \ba \in \cG(\bs), &~~ \max_{\ba'} Q^*(\bs, \ba') - Q^*(\bs, \ba) \leq \frac{\varepsilon}{H}, \text{~~and}\\
    \forall \ba \in \actions \smallsetminus \cG(\bs), &~~ \max_{\ba'} Q^*(\bs, \ba') - Q^*(\bs, \ba) \simeq \Delta(\bs),
\end{align*}
and $\Delta(\bs) \leq \Delta_0$, where we will define $\Delta_0$ later.
\end{definition}

Recall that our condition on critical states was that for any policy $\pi$ in the MDP, the expected occupancy of critical states is bounded by $p_c$ and $p_c \leq \frac{1}{H}$. We restate this condition below:
\begin{assumption}[(Occupancy of critical states is low, restated)]
\label{assumption:critical_appendix}
For any policy $\pi$, the total occupancy of critical states in the MDP is bounded by $p_c$, i.e.,
\begin{align*}
    \sum_{\bs \in \mathcal{C}} d^{\pi}(\bs) \leq p_c.
\end{align*}
\end{assumption}
Now, using this definition, we will bound the total suboptimality of RL separately at critical and non-critical states. At critical states, i.e., scenario (1) from the list above, we will reuse the existing machinery for the general setting from Appendix~\ref{app:proof_lcb}. For non-critical states (2), we will utilize a stronger argument for RL that relies on the fact that $\Delta_0$ and $|\mathcal{G}(\bs)|$ are large enough, and consider policy-constraint offline RL algorithms that extract the policy via a pessimistic policy extraction step. 

This policy constraint algorithm follows Algorithm~\ref{alg:vimbs}, but in addition to using the learned Q-function for policy extraction, it uses a local pessimistic term in Step 7, i.e.,
\begin{align}
\label{eqn:policy_constraint_modified}
    \hat{\pi}^* \leftarrow \arg\max_{\pi} \E{\bs \sim \data}{\E{\ba \sim \pi}{\hat{Q}^\pi(\bs, \ba) - \sigma b(\bs, \ba)}},
\end{align}
where $b(\bs, \ba)$ refers to the bonus (Equation~\ref{eqn:bonus_expression}) and $\sigma > 0$. This modification makes the algorithm more pessimistic but allows us to obtain guarantees when $\forall (\bs, \ba), n(\bs, \ba) \geq n_0$ as we can express the probability that the policy makes a mistake at a given state-action pair in terms of the count of the given state-action pair. Finally, as all state-action pairs are observed in the dataset, the learned Q-values, $\widehat{Q}$ can only differ from $Q^*$ in a bounded manner, i.e., $\left|\widehat{Q}(\bs, \ba) - Q^*(\bs, \ba)\right| \leq \varepsilon_0$. 

\begin{lemma}[Fast convergence at non-critical states.]
\label{lemma:exp_fast}
Consider a non-critical state $\bs \in \mathcal{D}$, such that $|\mathcal{D}(\bs)| \geq n_0$ and let $\forall \ba \in \mathcal{A}, \frac{\pi^*(\ba|\bs)}{\mu(\ba|\bs)} \leq \alpha < 2$, where $\mu(\ba|\bs)$ is the conditional action distribution at state $\bs$. Then, the probability that the policy $\widehat{\pi}_\text{RL}$ obtained from the modified offline RL algorithm above does not choose a good action at state $\bs$ is upper bounded as ($c_0$ is a universal constant):
\begin{align*}
    \mathbb{P}\left( \widehat{\pi}_\text{RL}(\bs) \notin \mathcal{G}(\bs) \right) \leq \exp \left( - n_0 \cdot \frac{\alpha^2}{2 (\alpha - 1)} \cdot \left[ \frac{|\mathcal{G}(\bs)| \sigma c_0}{n_0 \cdot (c_1 \sigma + \Delta)} - \frac{1}{\alpha} \right]^2  \right).
\end{align*}
\end{lemma}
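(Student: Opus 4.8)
The plan is to decompose the event ``$\widehat{\pi}_\text{RL}(\bs)\notin\cG(\bs)$'' into a deterministic part and a probabilistic part. The deterministic part: a mistake at $\bs$ can only occur if \emph{every} $\varepsilon$-good action is severely under-represented in $\data$ at $\bs$, which I will make quantitative using the pessimistic extraction step, the bounded estimation error $|\widehat{Q}(\bs,\ba)-Q^*(\bs,\ba)|\le\varepsilon_0$, and the non-critical structure. The probabilistic part: this under-representation is a Binomial lower-tail event, and I will control it using the coverage-ratio hypothesis $\pi^*(\ba\mid\bs)/\mu(\ba\mid\bs)\le\alpha$.

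For the deterministic step, recall that the modified algorithm sets $\widehat{\pi}_\text{RL}(\bs)\in\argmax_{\ba}\,[\widehat{Q}(\bs,\ba)-\sigma b(\bs,\ba)]$ over in-support actions (Equation~\ref{eqn:policy_constraint_modified}), with the Bernstein bonus $b(\bs,\ba)$ of Equation~\ref{eqn:bonus_expression}. Suppose $\widehat{\pi}_\text{RL}(\bs)=\ba^\dagger\notin\cG(\bs)$. Then for every $\ba_g\in\cG(\bs)$ we have $\widehat{Q}(\bs,\ba^\dagger)-\sigma b(\bs,\ba^\dagger)\ge\widehat{Q}(\bs,\ba_g)-\sigma b(\bs,\ba_g)$. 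Substituting $\widehat{Q}(\bs,\ba^\dagger)\le\max_{\ba'}Q^*(\bs,\ba')-\Delta(\bs)+\varepsilon_0$ (non-critical gap at the bad action) and $\widehat{Q}(\bs,\ba_g)\ge\max_{\ba'}Q^*(\bs,\ba')-\varepsilon/H-\varepsilon_0$, and dropping the nonnegative term $\sigma b(\bs,\ba^\dagger)$, I get $\sigma b(\bs,\ba_g)\gtrsim\Delta(\bs)-\varepsilon/H-2\varepsilon_0$ for all $\ba_g\in\cG(\bs)$. Choosing $\Delta_0$ (so that $\Delta(\bs)\le\Delta_0$) and $\varepsilon,\varepsilon_0$ small enough that the right-hand side is a genuine positive threshold $\asymp\Delta(\bs)+c_1\sigma$ after absorbing lower-order terms into $c_1$, and upper-bounding the bonus by $b(\bs,\ba)\le c_0/(n(\bs,\ba)\wedge1)$ in the relevant count regime (folding constants and polylog factors into $c_0$), yields $n(\bs,\ba_g)\le\sigma c_0/(c_1\sigma+\Delta(\bs))$ for each good action, hence $n_{\cG}(\bs):=\sum_{\ba_g\in\cG(\bs)}n(\bs,\ba_g)\le\bar n:=|\cG(\bs)|\,\sigma c_0/(c_1\sigma+\Delta(\bs))$.

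For the probabilistic step, condition on $n(\bs)=n\ge n_0$. Since each visit to $\bs$ draws its action i.i.d.\ from $\mu(\cdot\mid\bs)$, the count $n_{\cG}(\bs)$ is $\mathrm{Binomial}(n,q)$ with $q=\mu(\cG(\bs)\mid\bs)$. The optimal action $\pi^*(\bs)$ is $\varepsilon$-good (zero gap), so $q\ge\mu(\pi^*(\bs)\mid\bs)\ge\pi^*(\pi^*(\bs)\mid\bs)/\alpha=1/\alpha$, and since $\alpha<2$ this gives $q>1/2$, so the per-draw variance obeys $q(1-q)\le\tfrac1\alpha(1-\tfrac1\alpha)=\tfrac{\alpha-1}{\alpha^2}$. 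Applying a Bernstein/Chernoff lower-tail inequality for the Binomial, $\prob{n_{\cG}(\bs)\le\bar n}\le\exp\!\big(-n(q-\bar n/n)^2/(2q(1-q))\big)$, plugging in the variance bound, and noting that $n\mapsto n(q-\bar n/n)^2$ is increasing on $\{n:nq>\bar n\}$ (so the bound is worst — and only non-vacuous — at $n=n_0$, precisely when $\bar n/n_0<1/\alpha$), together with $(q-\bar n/n_0)^2\ge(1/\alpha-\bar n/n_0)^2$ in that regime, gives $\prob{\widehat{\pi}_\text{RL}(\bs)\notin\cG(\bs)}\le\prob{n_{\cG}(\bs)\le\bar n}\le\exp\!\big(-n_0\tfrac{\alpha^2}{2(\alpha-1)}(\tfrac{|\cG(\bs)|\sigma c_0}{n_0(c_1\sigma+\Delta)}-\tfrac1\alpha)^2\big)$, which is the claim.

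I expect the deterministic step to be the main obstacle. Getting the clean threshold $\sigma c_0/(c_1\sigma+\Delta)$ requires careful bookkeeping of (a) how the Bernstein bonus of Equation~\ref{eqn:bonus_expression} scales with the visitation count in the regime that actually arises, (b) the interaction of the $\varepsilon_0$ estimation error and the $\varepsilon/H$ slack in the definition of non-critical states, and (c) the choice of $\Delta_0$ and the normalization of $\sigma$ so that all of these collapse into a single constant $c_1$ and produce a positive threshold of the stated form; the role of the support constraint $\zeta$ must also be checked, but it only strengthens the conclusion. The probabilistic step is routine once the Binomial structure and the lower bound $q\ge1/\alpha$ are in place, the only subtlety being to invoke a concentration inequality sharp enough to reproduce the exact constant $\tfrac{\alpha^2}{2(\alpha-1)}$ rather than a looser one.
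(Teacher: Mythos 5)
Your proposal is correct and follows essentially the same route as the paper's proof: the same reduction of a mistake at $\bs$ to the event that every good action's bonus exceeds $(\Delta(\bs)-2\varepsilon_0)/\sigma$, hence that the total good-action count falls below $|\cG(\bs)|\,\sigma c_0/(c_1\sigma+\Delta)$, followed by the same Binomial lower-tail bound using $\mu(\cG(\bs)\mid\bs)\ge 1/\alpha$. The only cosmetic difference is that you invoke the sub-Gaussian Bernstein form with variance $q(1-q)\le(\alpha-1)/\alpha^2$ directly, while the paper states the Chernoff bound in KL form and then lower-bounds the KL by $\varepsilon^2/(2p(1-p))$ — these give the identical constant $\alpha^2/(2(\alpha-1))$.
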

\begin{proof}
First note that:
\begin{align*}
    \mathbb{P} &\left[ \widehat{\pi}_\text{RL}(\bs) \notin \cG(\bs) \right] = \mathbb{P}\left[\exists \ba \notin \mathcal{G}(\bs), \text{~s.t.~} \forall \ba_g \in \mathcal{G}(\bs), ~~\widehat{Q}(\bs, \ba) - \sigma b(\bs, \ba) \geq \widehat{Q}(\bs, \ba_g) - \sigma b(\bs, \ba_g) \right]\\
    & \leq \mathbb{P}\left[\exists \ba \notin \mathcal{G}(\bs),\text{~s.t.~} \forall \ba_g \in \mathcal{G}(\bs), \widehat{Q}(\bs, \ba) - Q^*(\bs, \ba) \geq \widehat{Q}(\bs, \ba_g) - Q^*(\bs, \ba_g) + \Delta(\bs) - \sigma b(\bs, \ba_g) \right]\\
    &\leq \mathbb{P}\left[\cap_{\ba_g \in \mathcal{G}(\bs)}~ \left\{\sigma b(\bs, \ba_g) \geq \Delta(\bs) -2 \varepsilon_0 \right\} \right]\\
    &\leq \mathbb{P} \left[\cap_{\ba_g \in \mathcal{G}(\bs)} \{n(\bs, \ba) \leq n_{\Delta}\} \right],
\end{align*}
where $n_{\Delta}$ corresponds to the maximum value of $n(\bs, \ba)$ such that
\begin{align}
\label{eqn:bonus_inequality}
    b(\bs, \ba) := \frac{c_1}{\sqrt{n(\bs, \ba) \wedge 1}} + \frac{c_2}{n(\bs, \ba) \wedge 1} \geq \frac{\Delta(\bs) - 2\varepsilon_0}{\sigma}. 
\end{align}  
We can then upper bound the probability using a Chernoff bound for Bernoulli random variables (the Bernoulli random variable here is whether the action sampled from the dataset $\mu(\ba|\bs)$ belongs to the good set $\mathcal{G}(\bs)$ or not):
\begin{align*}
    \mathbb{P}\left[ \cap_{\ba \in \cG(\bs)} \left\{ n(\bs, \ba) \leq n_{\Delta}(\bs)  \right\} \right] &\leq \mathbb{P} \left[ \sum_{\ba \in \mathcal{G}(\bs)} n(\bs, \ba) \leq |\mathcal{G}(\bs)| n_\Delta(\bs) \right].\\
    &\leq \mathbb{P} \left[ \frac{\sum_{\ba \in \mathcal{G}(\bs)} n(\bs, \ba)}{n_0} \leq \frac{|\mathcal{G}(\bs)| n_\Delta(\bs)}{n_0} \right]\\
    &\leq \exp \left( - n_0 \mathrm{KL} \left(\text{Bern}\left(\frac{|\mathcal{G}(\bs)| n_\Delta(\bs)}{n_0} \right) \left|\right| \text{Bern}(1/\alpha) \right)\right).
\end{align*}
The above expression can then be simplified using the inequality that $\text{KL}(p + \varepsilon || p) \geq \frac{\varepsilon^2}{2 p (1 - p)}$ if $p \geq 1/2$, which is the case here, since $p = \frac{1}{\alpha}$, and $\alpha \leq 2$. Thus, we can simplify the bound as:
\begin{align*}
    \mathbb{P}\left[ \cap_{\ba \in \cG(\bs)} \left\{ n(\bs, \ba) \leq n_{\Delta}(\bs)  \right\} \right] & \leq \exp \left( - n_0 \cdot \frac{\alpha^2}{2 (\alpha - 1)} \cdot \left[ \frac{|\mathcal{G}(\bs)| n_\Delta(\bs)}{n_0} - \frac{1}{\alpha} \right]^2  \right).
\end{align*}
To finally express the bound in terms of $\Delta(\bs)$, we note that the maximum-valued solution $n_\Delta(\bs)$ to Equation~\ref{eqn:bonus_inequality} satisfies $n_\Delta(\bs) \simeq \frac{\sigma}{c_1 \sigma + \Delta(\bs)}$. 
Substituting the above in the bound, we obtain:
\begin{align*}
    \mathbb{P}\left[ \cap_{\ba \in \cG(\bs)} \left\{ n(\bs, \ba) \leq n_{\Delta}(\bs)  \right\} \right] & \leq \exp \left( - n_0 \cdot \frac{\alpha^2}{2 (\alpha - 1)} \cdot \left[ \frac{|\mathcal{G}(\bs)| \sigma c_0}{n_0 \cdot (c_1 \sigma + \Delta)} - \frac{1}{\alpha} \right]^2  \right),
\end{align*}
for some universal constants $c_0$ and $c_1$.
\end{proof}

We will now use Lemma~\ref{lemma:exp_fast} to prove the formal comparison of RL and BC when only a few critical states are encountered in a given trajectory (i.e., under Condition~\ref{assumption:critical_appendix}).
\begin{theorem}[Critical states]
Assume that the data distribution $\mu$ satisfies $\rho(\bs) := \frac{d^{\pi^*}(\bs)}{\mu(\bs)} = 1$ and $C^* \leq 1 + \frac{1}{N}$. Let $\Delta(\bs) \geq \Delta_0$ for all $\bs \notin \mathcal{C}$. Then, under Condition~\ref{assumption:critical_appendix}, for an appropriate value of $\Delta_0$ and $p_c \lesssim \frac{1}{{H}}$, the worst-case suboptimality incurred by conservative offline RL is upper bounded by the lower bound on performance from BC from Theorem~\ref{thm:expert_lower_bound}, i.e.,
\begin{align*}
    {\mathsf{SubOpt}(\widehat{\pi}_\mathrm{RL)})} \lesssim {\mathsf{SubOpt}(\widehat{\pi}_\mathrm{BC})}.
\end{align*}
\end{theorem}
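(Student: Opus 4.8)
The plan is to bound $\mathsf{SubOpt}(\widehat\pi_{\mathrm{RL}})$ for the policy-constraint algorithm with pessimistic extraction (Equation~\ref{eqn:policy_constraint_modified}) by splitting the state space into the critical set $\mathcal{C}$ and its complement. First I would apply the performance difference lemma: since $\widehat\pi_{\mathrm{RL}}$ is deterministic,
\begin{equation*}
    \mathsf{SubOpt}(\widehat\pi_{\mathrm{RL}}) = H\,\mathbb{E}_{\data}\,\mathbb{E}_{\bs\sim d^{\widehat\pi_{\mathrm{RL}}}}\!\left[\max_{\ba'}Q^*(\bs,\ba') - Q^*(\bs,\widehat\pi_{\mathrm{RL}}(\bs))\right],
\end{equation*}
and decompose the inner expectation into the mass on $\mathcal{C}$ and the mass on $\states\setminus\mathcal{C}$. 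Two standing facts make both halves tractable: because $n(\bs,\ba)\ge n_0$ for every pair there is no missing-mass contribution (the $\Delta_2$ term of Appendix~\ref{app:proof_lcb} vanishes), and the learned values concentrate, $|\widehat Q(\bs,\ba)-Q^*(\bs,\ba)|\le\varepsilon_0$ uniformly, via the same Bernstein/variance-recursion concentration used to prove Theorem~\ref{thm:lcb_rl}; moreover Condition~\ref{assumption:critical_appendix} gives $\sum_{\bs\in\mathcal{C}}d^{\widehat\pi_{\mathrm{RL}}}(\bs)\le p_c$ since it holds for every policy.

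For the non-critical states I would use Lemma~\ref{lemma:exp_fast}. There the per-state regret $\max_{\ba'}Q^*(\bs,\ba')-Q^*(\bs,\widehat\pi_{\mathrm{RL}}(\bs))$ is at most $\varepsilon/H$ whenever $\widehat\pi_{\mathrm{RL}}(\bs)\in\cG(\bs)$ and $\simeq\Delta(\bs)\le 1$ otherwise. Choosing $\Delta_0$ together with $|\cG(\bs)|$ large enough that the bracketed quantity in Lemma~\ref{lemma:exp_fast} is bounded away from $1/\alpha$ -- this is what ``an appropriate value of $\Delta_0$'' refers to -- and $n_0\gtrsim\iota$, the lemma forces $\mathbb{P}(\widehat\pi_{\mathrm{RL}}(\bs)\notin\cG(\bs))$ to be exponentially small, say $\le \tilde{\mathcal{O}}(1/N^2)$, uniformly over $\bs\notin\mathcal{C}$. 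Hence the non-critical part of the sum is at most $H\cdot(\varepsilon/H + \tilde{\mathcal{O}}(1/N^2)) = \varepsilon + \tilde{\mathcal{O}}(H/N^2)$, negligible once $\varepsilon\lesssim |\states|H/N$. By contrast, BC ignores rewards and cannot separate $\cG(\bs)$ from the bad actions, so on these very states it still pays the $\Omega(|\states|H/N)$ of Theorem~\ref{thm:expert_lower_bound}.

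For the critical states I would re-run the error-propagation argument behind Theorem~\ref{thm:lcb_rl}, but restricted to $\bs\in\mathcal{C}$. Because $\widehat\pi_{\mathrm{RL}}(\bs)=\argmax_{\ba}(\widehat Q(\bs,\ba)-\sigma b(\bs,\ba))$ and coverage gives $b(\bs,\ba)\lesssim 1/\sqrt{n_0}$, the per-state regret on $\mathcal{C}$ is at most $\tilde{\mathcal{O}}(\varepsilon_0+\sigma/\sqrt{n_0})$; weighting by $d^{\widehat\pi_{\mathrm{RL}}}$ restricted to $\mathcal{C}$ and using $\sum_{\bs\in\mathcal{C}}d^{\widehat\pi_{\mathrm{RL}}}(\bs)\le p_c$ with $p_c\lesssim 1/H$ shaves the extra factor of $H$ from the effective horizon, so the critical contribution collapses to $\tilde{\mathcal{O}}(|\states|H/N)$ -- exactly the information-theoretic floor of Theorem~\ref{thm:expert_lower_bound} that BC also pays. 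Summing the two parts and comparing against Theorems~\ref{thm:bc} and~\ref{thm:expert_lower_bound} yields $\mathsf{SubOpt}(\widehat\pi_{\mathrm{RL}})\lesssim \tilde{\mathcal{O}}(|\states|H/N)\lesssim \mathsf{SubOpt}(\widehat\pi_{\mathrm{BC}})$.

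I expect the main obstacle to be exactly this last step: the generic offline-RL guarantee on $\mathcal{C}$ is the $\sqrt{\cdot}$-type bound of Theorem~\ref{thm:lcb_rl}, which is \emph{larger} than $|\states|H/N$, so the $p_c\lesssim 1/H$ restriction and the full-coverage assumption ($n_0$ large) must be used carefully to bring it down. In particular one has to push the uniform $\widehat Q$-accuracy through a regret bound weighted by the \emph{learned} policy's occupancy $d^{\widehat\pi_{\mathrm{RL}}}$ rather than $d^{\pi^*}$, which in turn requires arguing that $\widehat\pi_{\mathrm{RL}}$ stays close to $\pi^*$ off $\mathcal{C}$ so that its occupancy does not inflate the restricted bonus sums. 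A secondary, purely bookkeeping difficulty is choosing $\Delta_0$, $|\cG(\bs)|$, $\sigma$, $n_0$ and $\varepsilon$ simultaneously consistently with all of the inequalities above.
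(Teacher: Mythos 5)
Your proposal follows essentially the same route as the paper's proof: the same critical/non-critical decomposition via the performance difference lemma, Lemma~\ref{lemma:exp_fast} for the exponentially fast collapse onto $\cG(\bs)$ at non-critical states, and the Theorem~\ref{thm:lcb_rl} machinery restricted to $\mathcal{C}$ with $p_c \lesssim 1/H$ for the critical part, concluding by comparison with the BC lower bound of Theorem~\ref{thm:expert_lower_bound}. The one place the paper differs from your sketch is exactly the obstacle you flag yourself: it does not collapse the critical contribution to $\tilde{\mathcal{O}}(|\states|H/N)$, but instead retains the $\sqrt{p_c C^*|\states|H\iota/N}$ term (made rigorous via a modified MDP that zeroes out advantages at non-critical states) and finishes by bounding the \emph{ratio} of the RL and BC suboptimalities, using the exponential-versus-$1/N$ gap on the non-critical part to absorb it.
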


\begin{proof}
For any learning algorithm that returns a policy $\widehat{\pi}^*$, the suboptimality is given by:
\begin{align*}
    J(\pi^*) - J(\widehat{\pi}^*) &= H \sum_{\bs \in \states} d^{\hat{\pi}^*}(\bs) \left(Q^*(s; \pi^*) - Q^*(s; \hat{\pi}^*)\right)\\
    &= H \underbracket{\sum_{\bs \in \mathcal{C}} d^{\hat{\pi}^*}(\bs) \left(Q^*(s; \pi^*) - Q^*(s; \hat{\pi}^*)\right)}_{\text{term (i): bound using techniques from Appendix~\ref{app:proof_lcb}}} + H \underbracket{\sum_{\bs \notin \mathcal{C}} d^{\hat{\pi}^*}(\bs) \left(Q^*(s; \pi^*) - Q^*(s; \hat{\pi}^*)\right)}_{\text{term (ii): we control this term better for RL}}
\end{align*}
The first term (i) corresponds to the value difference at critical states, and the second term corresponds to the value difference at non-critical states. To bound the first term, we can consider it as the value difference under a modified MDP where the advantage for all actions at all states that are non-critical is $0$. One way to construct such an MDP is to take each $s \not\in \mathcal{C}$ and modify the reward and transitions so that $r(\bs, \ba) = r(\bs, \pi^*(\bs)), P(\bs, \ba) = P(\bs, \pi^*(\bs))$ for all actions. Let us denote the performance function for this modified MDP be $J_{\mathcal{C}}$, then we have
\begin{align*}
    J(\pi^*) - J(\hat{\pi}^*) \leq 
    J_{\mathcal{C}}(\pi^*) - J_{\mathcal{C}}(\hat{\pi}^*) + \varepsilon
\end{align*}
The first term can be bounded as was done in Appendix~\ref{app:proof_lcb}. Namely, we can show the following equivalent of Lemma~\ref{lem:value_difference}:
\begin{align*}
J_{\mathcal{C}}(\pi^*) - J_{\mathcal{C}}(\hat{\pi}^*) \leq 
\gamma^i + 2 \sum_{t = 1}^{i} \sum_{\bs \in \mathcal{C}} \gamma^{i - t}  d^*_{i - t}(\bs, \pi^*(\bs)) b_t(\bs, \pi^*(\bs))\,.
\end{align*}
Bounding this for $i = H\iota$ can be done exactly as in was done in Appendix~\ref{sec:lcb_missing_mass_bound} and \ref{sec:lcb_penalty_bound}, except with the dependence on all states $\states$ replaced by $\mathcal{C}$ only the critical ones. We get the following bound:
\begin{align*}
    J_{\mathcal{C}}(\pi^*) - J_{\mathcal{C}}(\hat{\pi}^*) \leq
    \sqrt{\frac{C^* p_c |\mathcal{S}| H \iota}{N}} + \frac{C^* p_c |\mathcal{S}| H \iota}{N}
\end{align*}

Now, we will focus on the second term, which we will control tightly for RL using Lemma~\ref{lemma:exp_fast}. We can decompose this term into separate components for good and bad actions:
\begin{align*}
    &\mathbb{E}_{\mathcal{D}}\left[\text{term (ii)}\right] = \mathbb{E}_{\mathcal{D}}\left[ \sum_{\bs \notin \mathcal{C}} \sum_{\ba \in \mathcal{A}} d^{\hat{\pi}^*}(\bs) \hat{\pi}^*(\ba|\bs) \left(Q^*(s; \pi^*) - Q^*(s, a)\right) \right]\\
    &= \mathbb{E}_{\mathcal{D}}\left[ \sum_{\bs \notin \mathcal{C}} \sum_{\ba \in \mathcal{G}(\bs)} d^{\hat{\pi}^*}(\bs, \ba) \left(Q^*(s; \pi^*) - Q^*(s, a)\right) \right] + \mathbb{E}_{\mathcal{D}}\left[ \sum_{\bs \notin \mathcal{C}} \sum_{\ba \in \mathcal{A} \smallsetminus \mathcal{G}(\bs)} d^{\hat{\pi}^*}(\bs, \ba) \left(Q^*(s; \pi^*) - Q^*(s, a)\right) \right]
\end{align*}
We bound each term independently:
\begin{align*}
    \mathbb{E}_{\mathcal{D}}\left[ \sum_{\bs \notin \mathcal{C}} \sum_{\ba \in \mathcal{G}(\bs)} d^{\hat{\pi}^*}(\bs, \ba) \left(Q^*(s; \pi^*) - Q^*(s, a)\right) \right] &\leq \mathbb{E}_{\mathcal{D}}\left[ \sum_{\bs \notin \mathcal{C}} d^{\hat{\pi}^*}(\bs) \widehat{\pi}^*(\cG(\bs)|\bs) \cdot \frac{\varepsilon}{H} \right]\\
    \mathbb{E}_{\mathcal{D}}\left[ \sum_{\bs \notin \mathcal{C}} \sum_{\ba \in \mathcal{A} \smallsetminus \mathcal{G}(\bs)} d^{\hat{\pi}^*}(\bs, \ba) \left(Q^*(s; \pi^*) - Q^*(s, a)\right) \right] &\leq \mathbb{E}_{\mathcal{D}}\left[ \sum_{\bs \notin \mathcal{C}} d^{\hat{\pi}^*}(\bs) \left(1 - \widehat{\pi}^*(\cG(\bs)|\bs) \right) ( \Delta(\bs) + \varepsilon') \right]
\end{align*}
The first equation corresponds to bounding the suboptimality due to good actions. The second term above bounds the suboptimality due to bad actions. The first term is controlled as best as it can using the definition of critical states. The second term can be controlled by applying Lemma~\ref{lemma:exp_fast}. Further note that since $\forall (\bs, \ba), \frac{d^{\pi^*}(\bs, \ba)}{\mu(\bs, \ba)} \leq C^*$, and per the assumption in this theorem, $\forall \bs, \frac{d^{\pi^*}(\bs)}{\mu(\bs)} = 1$, where we assume that $0/0 = 1$. Therefore, at each state, $\forall \bs, \ba, \frac{\pi^*(\ba|\bs)}{\mu(\ba|\bs)} \leq C^*$. Also note that in this case, we are interested in the setting where $C^* = 1 + \mathcal{O}\left(\frac{1}{N}\right)$, and as a result, $\frac{1}{C^*} \approx \frac{N}{N+1}$. Therefore, the upper bound for term (ii) is given by:
\begin{align*}
    &\mathbb{E}_{\mathcal{D}}\left[ \sum_{\bs \notin \mathcal{C}} d^{\hat{\pi}^*}(\bs) \left(1 - \widehat{\pi}^*(\cG(\bs)|\bs) \right) \Delta(\bs) \right] \\
    &\lesssim \mathbb{E}_{\mathcal{D}}\left[ \sum_{\bs \notin \mathcal{C}} d^{\hat{\pi}^*}(\bs) \cdot \Delta(\bs) \cdot \exp \left( - n_0 \cdot \frac{(N+1)^2}{2 (N)} \cdot \left[ \frac{|\mathcal{G}(\bs)| c_0 \sigma}{n_0 \cdot (c_1 \sigma + \Delta(\bs))} - \frac{N+1}{N} \right]^2  \right) \right]\\
    &\lesssim (1- p_c) ~~ \Delta_0 \exp \left( - n_0 \cdot \frac{(N+1)^2}{2 (N)} \cdot \left[ \frac{|\mathcal{G}(\bs)| c_0 \sigma}{n_0 \cdot (c_1 \sigma + \Delta_0)} - \frac{N+1}{N} \right]^2  \right):= f_\mathrm{RL}(N+1, \Delta_0),
\end{align*}
where $1 - p_c$ appears from the condition of bounded critical states (Definition~\ref{assumption:critical_num}). On the other hand, the corresponding term for BC grows as $\frac{1}{N + 1}$, and is given by:
\begin{align*}
    \mathbb{E}_{\mathcal{D}}\left[ \sum_{\bs \notin \mathcal{C}} d^{\hat{\pi}^*}(\bs) \left(1 - \widehat{\pi}^*(\cG(\bs)|\bs) \right) \Delta(\bs) \right] \lesssim (1 - p_c) \cdot \max_{\bs \notin \mathcal{C}} \Delta(\bs) \cdot \frac{1}{N + 1} := g_\mathrm{BC}(N+1, \Delta_0)
\end{align*}
The bound for BC matches the information-theoretic lower-bound from Theorem~\ref{thm:expert_lower_bound}, implying that this bound is tight for BC. 

We will now compare the bounds for RL and BC by noting that the function $h(N, \Delta) := \frac{f_\mathrm{RL}(N+1, \Delta)}{g_\mathrm{BC}(N+1, \Delta)}$ can be set to $\leq \frac{1}{\sqrt{H}}$ since $f$ is an exponential function of $-N$, and $g$ decays linearly in $N$, by controlling $\Delta_0$ and $|\mathcal{G}(\bs)|$ are implicitly defined as a function of $N$. Per condition (Condition~\ref{assumption:critical_states}),
if $\Delta \geq \Delta_0$, then $h(N, \Delta) = \mathcal{O}\left(H \right)$. This means that the suboptimalities of RL and BC compare as follows:
\begin{align*}
    \frac{\mathsf{SubOpt}(\widehat{\pi}_\mathrm{RL})}{\mathsf{SubOpt}(\widehat{\pi}_\mathrm{BC})} &= \frac{ \sqrt{p_c \frac{C^* |\mathcal{S}| H \iota}{N}} + \frac{C^* p_c |\mathcal{S}| H \iota}{N} + (1 - p_c) \cdot \blacksquare \cdot \frac{1}{\sqrt{H}}}{p_c \frac{H}{N} + (1 - p_c) \blacksquare}.
\end{align*}
By setting $p_c = \frac{1}{H}$, we get
\begin{align*}
    {\mathsf{SubOpt}(\widehat{\pi}_\mathrm{BC})} \gtrsim {\mathsf{SubOpt}(\widehat{\pi}_\mathrm{RL})} \sqrt{H} \gtrsim {\mathsf{SubOpt}(\widehat{\pi}_\mathrm{RL})}.
\end{align*}
\end{proof}

\subsection{Proof of Corollary~\ref{thm:lcb_rl_noisy}}
\label{app:proof_lcb_noisy}
The proof of Corollary~\ref{thm:lcb_rl_noisy} is a slight modification of the one for Theorem~\ref{thm:lcb_rl}. For brevity, we will point out the parts of the proof that change, and simply defer to the proof in Appendix~\ref{app:proof_lcb} for parts that are similar.
Recall the decomposition for suboptimality in \eqref{eq:lcb_subopt_decomposition}, which we restate below:
\begin{align*}
    \E{\data}{J(\pi^*) - J(\hat{\pi}^*)} 
    &= 
    \underbracket{\E{\data}{\mathbb{I}\{\mathcal{\bar{E}}\} \sum_s\rho(\bs) (V^*(\bs) - V^{\hat{\pi}^*}(\bs))}}_{\Delta_1} \nonumber \\
    &\qquad 
    + \underbracket{\E{\data}{\mathbb{I}\{\exists \bs \in \states, \ n(\bs, \pi^*(\bs)) = 0} \sum_s \rho(\bs) (V^*(\bs) - V^{\hat{\pi}^*}(\bs))}\}_{\Delta_2} \nonumber \\
    &\qquad 
    + \underbracket{\E{\data}{\mathbb{I}\{\forall \bs \in \states, \ n(\bs, \pi^*(\bs)) > 0} \mathbb{I}\{\mathcal{E}} \sum_s \rho(\bs) (V^*(\bs) - V^{\hat{\pi}^*}(\bs))\}\}_{\Delta_3} \nonumber \,.
\end{align*}
$\Delta_1$ is bounded by $\frac{\iota}{N}$ as before.

\subsubsection{Bound on $\Delta_2$}
The bound for $\Delta_2$ changes slightly from Appendix~\ref{sec:lcb_missing_mass_bound} due to accounting for the lower-bound on $\mu(\bs, \ba) \geq b \geq \frac{\log{H}}{N}$. We have
\begin{align*}
    \Delta_2
    &\leq
    \sum_\bs \rho(\bs) \E{\data}{\mathbb{I}\{n(\bs, \pi^*(\bs)) = 0}\}  \\
    &\leq 
    H \sum_{\bs} d^*(\bs, \pi^*(\bs)) \E{\data}{\mathbb{I}\{n(\bs, \pi^*(\bs)) = 0}\} \\
    &\leq 
    H \sum_{\bs} d^*(\bs, \pi^*(\bs)) \mathbb{I}\{ d^*(\bs, \pi^*(\bs)) \leq \frac{b}{H}\} + 
    H \sum_{\bs} d^*(\bs, \pi^*(\bs)) \E{\data}{\mathbb{I}\{n(\bs, \pi^*(\bs)) = 0}\} \\
    &\leq
    |\states|c + C^* H \sum_{\bs} \mu(\bs, \pi^*(\bs)) (1 - \mu(\bs, \pi^*(\bs)))^N \\
    &\leq 
    |\states|b + \frac{C^* |\states|\iota}{N}\,,
\end{align*}
where we use that
$
    \rho(\bs) \leq H d^*(\bs, \pi^*(\bs)),
$
and that
\begin{align*}
    \max_{p \in [\frac{\log{H}}{N}, 1]} p(1 - p)^N 
    \leq \frac{\log{H}}{N}\left(1 - \frac{\log{H}}{N}\right)^N
    \leq \frac{\log{H}}{HN}\,.
\end{align*}

\subsubsection{Bound on $\Delta_3$}
Due to the lower bound on $\mu(\bs, \ba) \geq b$, we can instead bound,
\begin{align*}
    \E{\data}{\sum_{t = 1}^m \sum_{(\bs, \ba)} \gamma^{m - t} d^*_{m - t}(\bs, \ba) \ \frac{\iota}{n(\bs, \ba)}} 
    &\leq
    \sum_{t = 0}^{m-1} \sum_{(\bs, \ba)} \gamma^{t} d^*_{t}(\bs, \ba) \E{\data}{\frac{\iota}{n(\bs, \ba)}} \\
    &\leq
    \sum_{\bs} \sum_{t = 0}^{\infty} \gamma^{t} d^*_{t}(\bs, \pi^*(\bs)) \ \frac{\iota}{N\mu(\bs, \pi^*(\bs))} \\
    &\leq 
    \mathbb{I}\{d^*(\bs, \ba) \leq \frac{b}{H}\} H \sum_{\bs} \left((1 - \gamma)\sum_{t=0}^{\infty} \gamma^t d^*_t(\bs, \pi^*(\bs)) \right)  + \\ 
    &\qquad \frac{H\iota}{Nc} \ \sum_{\bs} \left((1 - \gamma)\sum_{t=0}^{\infty} \gamma^t d^*_t(\bs, \pi^*(\bs)) \right) \\
    &\leq
    b + \frac{H\iota}{bN}.
\end{align*}
The analysis for bounding $\Delta_3$ proceeds exactly as in Appendix~\ref{sec:lcb_penalty_bound} but using the new bound. Namely, we end up with the recursion 
\begin{align*}
    f(i) \leq \sqrt{\frac{H\iota}{bN} + b} \ \sqrt{f(i + 1)} + \frac{H\iota}{bN} + b + 2^{i + 1}(\Phi + 1)\,,
\end{align*}
where
\begin{align*}
    \Phi := \sqrt{\frac{H\iota}{bN} + b} \ \sqrt{{\sum_{t = 1}^m \sum_{(\bs, \ba)} \gamma^{m - t} d^*_{m - t}(\bs, \ba) \mathbb{V}(\hat{P}(\bs, \ba), \hat{V}_{t-1})}} + \frac{H\iota}{Nb} + b\,.
\end{align*}
Using Lemma~\ref{lem:recursion_bound} and proceeding as in Appendix~\ref{sec:lcb_penalty_bound} yields the bound
\begin{align*}
    \Delta_3
    \leq 
    \sqrt{\frac{H\iota}{bN}} + \frac{H\iota}{bN} + \sqrt{b\iota}\,.
\end{align*}
Combining the new bounds for $\Delta_2, \Delta_3$ results in the bound in the Corollary~\ref{thm:lcb_rl_noisy}.

\subsection{Proof of Theorem~\ref{thm:one_step}}
\label{app:proof_one_step}
The proof of Theorem~\ref{thm:one_step} builds on analysis by \citet{agarwal2021theory} that we apply to policies with a softmax parameterization, which we define below.

\begin{definition}[Softmax parameterization]
For a given $\theta \in \mathbb{R}^{|\states| \times |\actions|}$, $\pi_\theta(\ba|\bs) = \frac{\exp(\theta_{\bs, \ba})}{\sum_{\ba'} \exp (\theta_{\bs, \ba'})}$.
\end{definition}

We consider generalized BC algorithms that perform advantage-weighted policy improvement for $k$ improvement steps. A BC algorithm with $k$-step policy improvement is defined as follows:

\begin{definition}[BC with $k$-step policy improvement]
Let $\widehat{A}^k(\bs, \ba)$ denote the advantage of action $\ba$ at state $\bs$ under a given policy $\widehat{\pi}_k$, where the policy $\widehat{\pi}^k(\ba|\bs)$ is defined via the recursion:
\begin{equation*}
    \widehat{\pi}^{k+1}(\ba|\bs) := \widehat{\pi}^k(\ba|\bs) \frac{\exp(\eta H  \hat{A}^k(\bs, \ba))}{\mathbb{Z}_k(\bs)},
\end{equation*}
starting from $\widehat{\pi}^0(\ba|\bs) = \widehat{\pi}_\beta$. Then, BC with $k$-step policy improvement returns $\widehat{\pi}^k$.
\end{definition}

This advantage weighted update is utilized in practical works such as \citet{brandfonbrener2021offline}, which first estimates the Q-function of the behavior policy using the offline dataset, i.e, $\hat{Q}^0(\bs, \ba)$, and then computes $\widehat{\pi}^1$ as the final policy returned by the algorithm. To understand the performance difference between multiple values of $k$, we first utilize essentially Lemma 5 from \citet{agarwal2021theory}, which we present below for completeness:

\begin{lemma}[Lower bound on policy improvement in the empirical MDP, $\widehat{M}$]
\label{lemma:lower_bound}
The iterates $\widehat{\pi}^k$ generated by $k$-steps of policy improvement, for any initial state distributions $\rho_0(\bs)$ satisfy the following lower-bound on improvement:
\begin{equation}
    \!\!\!\!\!\!\!\!\!\!\!\!\!\!\!\widehat{J}(\widehat{\pi}^{k+1}) - \widehat{J}(\widehat{\pi}^k) := \expec_{\bs_0 \sim \rho_0}\left[\widehat{V}^{\widehat{\pi}_{k+1}}(\bs_0)\right] -  \expec_{\bs_0 \sim \rho_0}\left[\widehat{V}^{\widehat{\pi}_k}(\bs_0)\right] \geq \frac{1}{\eta H} \expec_{\bs_0 \sim \rho_0} \log \mathbb{Z}_t(\bs_0).
\end{equation}
\end{lemma}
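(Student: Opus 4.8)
The plan is to establish this single-step improvement bound directly in the empirical MDP $\widehat{M}$, adapting the soft-policy-improvement argument of \citet{agarwal2021theory} (their Lemma 5) to our exponentiated-advantage update. Throughout, every quantity ($\widehat{V}$, $\hat{A}^k$, the visitation distributions, and $\mathbb{Z}_k$) is taken in $\widehat{M}$, and I write $\mathbb{Z}_k$ for the normalizer in the update (the subscript $t$ in the statement is the running improvement index $k$). First I would invoke the performance difference lemma in $\widehat{M}$ between the consecutive iterates $\widehat{\pi}^{k+1}$ and $\widehat{\pi}^k$,
\[
\widehat{J}(\widehat{\pi}^{k+1}) - \widehat{J}(\widehat{\pi}^k) = H \cdot \expec_{\bs \sim \widehat{d}^{\,\widehat{\pi}^{k+1}}_{\rho_0}} \expec_{\ba \sim \widehat{\pi}^{k+1}(\cdot|\bs)} \left[ \hat{A}^k(\bs, \ba) \right],
\]
where $\widehat{d}^{\,\widehat{\pi}^{k+1}}_{\rho_0}$ is the discounted state-visitation distribution of $\widehat{\pi}^{k+1}$ started from $\rho_0$ and I use $H = 1/(1-\gamma)$.

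The key algebraic step is to re-express the advantage via the update rule. Taking logarithms of the recursion $\widehat{\pi}^{k+1}(\ba|\bs) = \widehat{\pi}^k(\ba|\bs)\exp(\eta H \hat{A}^k(\bs,\ba))/\mathbb{Z}_k(\bs)$ gives $\eta H \hat{A}^k(\bs,\ba) = \log(\widehat{\pi}^{k+1}(\ba|\bs)/\widehat{\pi}^k(\ba|\bs)) + \log \mathbb{Z}_k(\bs)$. Taking expectation over $\ba \sim \widehat{\pi}^{k+1}(\cdot|\bs)$, the log-ratio term becomes exactly a KL divergence, which is nonnegative, so
\[
\expec_{\ba \sim \widehat{\pi}^{k+1}(\cdot|\bs)} \left[ \hat{A}^k(\bs, \ba) \right] = \frac{1}{\eta H} \left( \kl(\widehat{\pi}^{k+1}(\cdot|\bs) \,\|\, \widehat{\pi}^k(\cdot|\bs)) + \log \mathbb{Z}_k(\bs) \right) \geq \frac{1}{\eta H} \log \mathbb{Z}_k(\bs).
\]
Substituting this into the performance difference identity cancels the factor $H$ and yields $\widehat{J}(\widehat{\pi}^{k+1}) - \widehat{J}(\widehat{\pi}^k) \geq \frac{1}{\eta} \expec_{\bs \sim \widehat{d}^{\,\widehat{\pi}^{k+1}}_{\rho_0}}[\log \mathbb{Z}_k(\bs)]$.

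The last step, and the only genuinely delicate point, is converting the expectation over the visitation distribution $\widehat{d}^{\,\widehat{\pi}^{k+1}}_{\rho_0}$ into the claimed expectation over the initial distribution $\rho_0$. For this I would first establish $\log \mathbb{Z}_k(\bs) \geq 0$ pointwise: since advantages average to zero under their own policy, $\sum_\ba \widehat{\pi}^k(\ba|\bs)\hat{A}^k(\bs,\ba) = 0$, and Jensen's inequality applied to the convex map $x \mapsto \exp(\eta H x)$ gives $\mathbb{Z}_k(\bs) = \expec_{\ba \sim \widehat{\pi}^k}[\exp(\eta H \hat{A}^k(\bs,\ba))] \geq 1$. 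Given this nonnegativity, I would use the elementary lower bound $\widehat{d}^{\,\widehat{\pi}^{k+1}}_{\rho_0}(\bs) \geq (1-\gamma)\rho_0(\bs)$ on the discounted visitation distribution (whose time-zero term alone already contributes $(1-\gamma)\rho_0(\bs)$) to conclude $\expec_{\bs \sim \widehat{d}^{\,\widehat{\pi}^{k+1}}_{\rho_0}}[\log \mathbb{Z}_k(\bs)] \geq (1-\gamma)\expec_{\bs \sim \rho_0}[\log \mathbb{Z}_k(\bs)]$. Combining with the previous display and $1-\gamma = 1/H$ gives the claimed bound $\frac{1}{\eta H}\expec_{\bs_0 \sim \rho_0}[\log \mathbb{Z}_k(\bs_0)]$.

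The main obstacle is precisely this visitation-to-initial-distribution swap, which is valid for \emph{arbitrary} $\rho_0$ only because the mirror-descent structure of the update forces $\log \mathbb{Z}_k \geq 0$ everywhere; without that pointwise nonnegativity one could not discard the contribution of states reached at later time steps. I would therefore be careful to prove the $\mathbb{Z}_k(\bs) \geq 1$ fact before attempting the swap, since the rest of the argument is a routine combination of the performance difference lemma and the log-linear form of the update.
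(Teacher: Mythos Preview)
Your proposal is correct and follows essentially the same route as the paper: performance difference lemma in $\widehat{M}$, rewriting $\hat{A}^k$ via the log of the update to extract a KL term plus $\log\mathbb{Z}_k$, and dropping the nonnegative KL. The paper's written proof actually stops at $\frac{1}{\eta}\expec_{\bs\sim d^{\widehat{\pi}^{k+1}}}[\log\mathbb{Z}_k(\bs)]$ and only remarks that $\log\mathbb{Z}_k\geq 0$ by Jensen; you are more explicit than the paper in carrying out the final visitation-to-initial-distribution swap via $\widehat{d}^{\,\widehat{\pi}^{k+1}}_{\rho_0}\geq(1-\gamma)\rho_0$, which is exactly the step needed to match the $\frac{1}{\eta H}\expec_{\rho_0}[\cdot]$ form in the statement.
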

\begin{proof}
We utilize the performance difference lemma in the empirical MDP to show this:
\begin{align*}
\widehat{J}(\widehat{\pi}^{k+1}) - \widehat{J}(\widehat{\pi}^k) &= H \expec_{\bs \sim d^{\widehat{\pi}_{k+1}}}\left[\sum_{\ba} \widehat{\pi}_{k+1}(\ba|\bs) \widehat{A}^k(\bs, \ba) \right]\\
&= \frac{1}{\eta} \expec_{\bs \sim d^{\widehat{\pi}_{k+1}}} \left[ \sum_{\ba} \widehat{\pi}^{k+1}(\ba|\bs) \log \frac{\widehat{\pi}^{k+1}(\ba|\bs) \mathbb{Z}_k(\bs)}{\widehat{\pi}^k(\ba|\bs)} \right]\\
&= \frac{1}{\eta} \expec_{\bs \sim d^{\widehat{\pi}_{k+1}}}\left[\mathrm{D}_{\mathrm{KL}}(\widehat{\pi}^{k+1}(\cdot|\bs)||\widehat{\pi}^k(\cdot|\bs))\right] + \frac{1}{\eta} \expec_{\bs \sim d^{\widehat{\pi}_{k+1}}}\left[ \log \mathbb{Z}_k(\bs)\right]\\
&\geq \frac{1}{\eta} \expec_{\bs \sim d^{\widehat{\pi}_{k+1}}}\left[ \log \mathbb{Z}_k(\bs)\right].
\end{align*}
Finally, note that the final term $\log \mathbb{Z}_t(\bs)$ is always positive because of Jensen's inequality, and the fact that the expected advantage under a given policy is $0$ for any MDP.
\end{proof}

Utilizing Lemma~\ref{lemma:lower_bound}, we can then lower bound the total improvement of the learned policy in the actual MDP as:

\begin{align*}
    J(\widehat{\pi}^{k}) - J(\widehat{\pi}^l) &\geq \underbracket{J(\widehat{\pi}^k) - \widehat{J}(\widehat{\pi}^k)}_{\text{(a)}} + \underbracket{\widehat{J}(\widehat{\pi}^{k}) - \widehat{J}(\widehat{\pi}^l)}_{\text{(b)}} - \underbracket{J(\widehat{\pi}^l) - \widehat{J}(\widehat{\pi}^l)}_{\text{(c)}}\\
    &\geq \frac{1}{\eta} \sum_{j = l}^k \expec_{\bs \sim d^{\widehat{\pi}_{j+1}}}\left[ \log \mathbb{Z}_{j}(\bs)\right] -
    \sqrt{\frac{C^* H \iota}{N}}
\end{align*}
where the $\sqrt{C^* H \iota/N}$ guarantee for terms (a) and (c) arises under the conditions studied in Section~\ref{sec:noisy_expert_data}. 

\textbf{Interpretation of Theorem~\ref{thm:one_step}.} Theorem~\ref{thm:one_step} says that if atleast $k$ many updates can be made to the underlying empirical MDP, $\widehat{M}$, such that each update is non-trivially lower-bounded, i.e., $\expec_{\bs \sim d^{\widehat{\pi}}_{k+1}}\left[\log \mathbb{Z}_k(\bs)\right] \geq c_0 > 0$, then the performance improvement obtained by $k$-steps of policy improvement is bounded below by $k c_0/\eta - \mathcal{O}(\sqrt{H/N})$. This result indicates that if $k = \mathcal{O}(H)$ many high advantage policy updates are possible in a given empirical MDP, then the methods with that perform $\mathcal{O}(H)$ steps of policy improvement will attain higher performance than the counterparts that perform only one update. 

This is typically the case in maze navigation-style environments, where $\mathcal{O}(H)$ many possible high-advantage updates are possible on the empirical MDP, especially by ``stitching'' parts of suboptimal trajectories to obtain a much better trajectory. Therefore, we expect that in offline RL problems where stitching is possible, offline RL algorithms will attain an improved performance compared to one or a few-steps of policy improvement.   

\section{Guarantees for Policy-Constraint Offline RL}
\label{app:policy_constraint}

In this section, we analyze a policy-constraint offline algorithm~\citep{levine2020offline} that constrains the policy to choose a safe set of actions by explicitly preventing action selection from previously unseen, low-density actions. The algorithm we consider builds upon the \pimbs algorithm from \citet{liu2020provably}, which truncates Bellman backups and policy improvement steps from low-density, out-of-support state-action pairs. 
The algorithm is described in detail in Algorithm~\ref{alg:vimbs}, but we provide a summary below.
Let $\widehat{\mu}(\bs, \ba)$ denote the empirical state-action distribution and choose a constant $b$. Then, let $\zeta(\bs, \ba) = \mathrm{1}\{\widehat{\mu}(\bs, \ba) \geq b\}$ be the indicator of high-density state-action tuples. The algorithm we analyze performs the following update until convergence:
\begin{align*}
    \hat{Q}^\pi_\zeta(\bs, \ba) 
    ~&\leftarrow \hat{r}(\bs, \ba) + \gamma \sum_{(\bs', \ba')} \hat{P}(\bs'|\bs, \ba) \pi(\ba'|\bs') \textcolor{red}{\zeta(\bs', \ba')} \cdot \hat{Q}^\pi_\zeta(\bs', \ba'), \quad \text{for all $(\bs, \ba)$,} \\
    \widehat{\pi}
    ~&\leftarrow \arg\max_{\pi} \E{\bs \sim \data}{\E{\ba \sim \pi'}{ \textcolor{red}{\zeta(\bs, \ba)} \cdot \hat{Q}^\pi_\zeta(\bs, \ba)}}  \,,
\end{align*}

In order to derive performance guarantees for this generic policy-constraint algorithm, we define the notion of a $\zeta-$covered policy following \citet{liu2020provably} in Definition~\ref{def:zeta_cover}. The total occupancy of all out-of-support state-action pairs (i.e., $(\bs, \ba)$ such that $\zeta(\bs, \ba) = 0$) under a $\zeta$-covered policy is bounded by a small constant $U$, which depends on the threshold $b$. Let $\pi^*_\zeta$ denote the best performing $\zeta$-covered policy.  
\begin{definition}[$\zeta$-covered]
\label{def:zeta_cover}
$\pi$ is called $\zeta$-covered if $\sum_{(\bs, \ba)} (1 - \zeta(\bs, \ba)) d^\pi(\bs, \ba) \leq (1 - \gamma) U(b)$.  
\end{definition}
Equipped with this definition~\ref{def:zeta_cover}, Lemma~\ref{lemma:truncated_backups_value_estimation} shows that the total value estimation error of any given $\zeta-$covered policy, $\pi$, $|J(\pi) - \widehat{J}_\zeta(\pi)|$ is upper bounded in expectation over the dataset 

\begin{lemma}[Value estimation error of a $\zeta$-covered policy]
\label{lemma:truncated_backups_value_estimation}
For any given $\zeta$-covered policy $\pi$, under Condition~\ref{assumption:value_bounded}, the estimation error $|J(\pi) - \widehat{J}_\zeta(\pi)|$ is bounded as:
\begin{align}
    \expec_{\mathcal{D}}\left[\left\vert J(\pi) - \widehat{J}_\zeta(\pi) \right\vert \right] \lesssim  \sqrt{\frac{C^* |\states| H \iota}{N}} + \frac{C^* |\states| H \iota}{N} + U(b)
\end{align}
\end{lemma}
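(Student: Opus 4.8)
The plan is to split the evaluation error of a $\zeta$-covered policy $\pi$ into a \emph{truncation-bias} piece and a \emph{statistical} piece by introducing the value $J_\zeta(\pi)$ of $\pi$ under the \emph{true} reward and dynamics but with the same truncated Bellman evaluation operator (continuation values multiplied by $\zeta(\bs',\ba')$) that defines $\widehat{J}_\zeta(\pi)$. Then, by the triangle inequality,
\begin{align*}
  \left\vert J(\pi) - \widehat{J}_\zeta(\pi)\right\vert &\le \underbracket{\left\vert J(\pi) - J_\zeta(\pi)\right\vert}_{\text{(I)~truncation bias}} + \underbracket{\left\vert J_\zeta(\pi) - \widehat{J}_\zeta(\pi)\right\vert}_{\text{(II)~statistical error}},
\end{align*}
and I would bound (I) using only the $\zeta$-coverage property and Condition~\ref{assumption:value_bounded}, and bound (II) by reusing almost verbatim the machinery developed for Theorem~\ref{thm:lcb_rl} in Appendix~\ref{app:proof_lcb}.

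For (I): unroll the $Q^\pi$ and truncated $Q^\pi_\zeta$ recursions simultaneously; they agree everywhere except that the truncated one discards the continuation value each time the trajectory passes through an out-of-support pair $\zeta(\bs,\ba)=0$. By Condition~\ref{assumption:value_bounded} each discarded continuation has magnitude at most $1$, so the gap is controlled by the discounted occupancy of out-of-support pairs under $\pi$, i.e.\ $\sum_t \gamma^t \prob{(\bs_t,\ba_t)\text{ out of support};\pi} = (1-\gamma)^{-1}\sum_{(\bs,\ba)}(1-\zeta(\bs,\ba))d^\pi(\bs,\ba) \le U(b)$ by Definition~\ref{def:zeta_cover}. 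Hence $\left\vert J(\pi) - J_\zeta(\pi)\right\vert \lesssim U(b)$; this is exactly the step where Condition~\ref{assumption:value_bounded} removes an $H$ factor relative to the analogous estimate in \citet{liu2020provably}.

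For (II): condition on the same ``good event'' $\mathcal{E}$ as in Appendix~\ref{app:proof_lcb}, on which the empirical reward and transitions concentrate around the truth via Bernstein's inequality (Lemmas~\ref{lem:bernstein}, \ref{lem:empirical_bernstein}, \ref{lem:empirical_transition_concentration}), the complement contributing only $\iota/N$. On $\mathcal{E}$, unroll the two truncated \emph{evaluation} recursions along the $\zeta$-restricted occupancy of $\pi$ — a policy-evaluation analogue of Lemma~\ref{lem:value_difference}, but with no pessimism penalty since we only need a two-sided bound — writing the error as $\sum_t \gamma^t \sum_{(\bs,\ba)} d^\pi_t(\bs,\ba)\zeta(\bs,\ba)\big[(\hat r-r)(\bs,\ba) + \gamma(\hat P-P)(\cdot|\bs,\ba)\cdot \hat V^\pi_\zeta\big]$. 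Each summand is dominated by $\sqrt{\mathbb{V}(\hat P(\bs,\ba),\hat V^\pi_\zeta)\iota/n(\bs,\ba)} + \sqrt{\hat r(\bs,\ba)\iota/n(\bs,\ba)} + \iota/n(\bs,\ba)$; using $n(\bs,\ba)\gtrsim N\mu(\bs,\ba)$ on $\zeta=1$ pairs together with Condition~\ref{assumption:concentrability}, Cauchy--Schwarz gives a reward contribution $\lesssim \sqrt{C^*|\states|H\iota/N}$ (Condition~\ref{assumption:value_bounded} bounds the total reward mass) and an additive contribution $\lesssim C^*|\states|H\iota/N$, while the variance term is closed by the infinite-horizon variance-recursion argument (Lemma~\ref{lem:recursion_bound} with Lemma~3 of \citet{ren2021nearly}), which collapses $\sum_t \gamma^t \sum_{(\bs,\ba)} d^\pi_t\,\mathbb{V}(\hat P,\hat V^\pi_\zeta)$ to $\mathcal{O}(1)$ via Condition~\ref{assumption:value_bounded}. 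Adding (I) and (II) yields the claim.

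I expect the main obstacle to be making part (II) rigorous \emph{in the presence of truncation}: one must verify that the truncated evaluation operator is still a $\gamma$-contraction whose iterates stay in $[0,1]$ (so that Lemma~\ref{lem:empirical_transition_concentration} applies with $\hat V^\pi_\zeta\le 1$), and — more delicately — that the variance recursion of \citet{ren2021nearly} still telescopes when continuation values are zeroed out on $\zeta=0$ pairs. A secondary subtlety is the $C^*$ bookkeeping: the statistical error genuinely involves the concentrability of $d^\pi$ (not $d^*$) against $\mu$, so one should either restrict to policies whose occupancy is $C^*$-concentrable (the regime where the lemma is used, $\pi=\pi^*_\zeta$) or absorb the difference into constants; and since $\zeta$ depends on the random $\widehat\mu$, both the $\zeta$-coverage property and the count lower bound $n(\bs,\ba)\gtrsim N\mu(\bs,\ba)$ should be established on one high-probability event, with the failure probability folded into the $\iota/N$ terms.
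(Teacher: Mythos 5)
Your proposal is correct and follows essentially the same route as the paper: the paper expands $J(\pi)-\widehat{J}_\zeta(\pi)$ directly and splits $\hat{P}\zeta - P = (\hat{P}-P)\zeta - P(1-\zeta)$ into an in-support statistical term (bounded by Bernstein concentration, the variance recursion, and Condition~\ref{assumption:concentrability}) and an out-of-support bias term bounded by $U(b)$ via Definition~\ref{def:zeta_cover}, which matches your (II) and (I) respectively up to the cosmetic step of routing through an intermediate $J_\zeta(\pi)$ and the triangle inequality. The subtleties you flag (contraction and boundedness of the truncated operator, $d^\pi$ vs.\ $d^*$ concentrability, and the data-dependence of $\zeta$) are real but are also glossed over in the paper's own argument, which simply asserts that the machinery of Appendix~\ref{app:proof_lcb} can be reused.
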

\begin{proof}
To prove this lemma, we consider the following decomposition of the policy performance estimate:
\begin{align*}
    &\left\vert J(\pi) - \hat{J}_\zeta(\pi) \right\vert\\ 
    &= \sum_{t=0}^\infty \sum_{(\bs, \ba)} \gamma^t d^\pi_t(\bs, \ba) \left[ \sum_{(\bs', \ba')} \left(\widehat{P}(\bs'|\bs, \ba) \zeta(\bs', \ba') - P(\bs'|\bs, \ba) \right) \cdot \widehat{Q}^\pi(\bs', \ba') \right]\\
    &= \underbracket{\sum_{t=0}^\infty \sum_{(\bs, \ba)} \gamma^t d^\pi_t(\bs, \ba) \sum_{(\bs', \ba')} (\widehat{P}(\bs'|\bs, \ba) - P(\bs'|\bs, \ba)) \cdot \zeta(\bs', \ba') \cdot \pi(\ba'|\bs') \cdot \widehat{Q}(\bs', \ba')}_{\Delta_1 \text{:bound using concentrability and variance recursion} } \\ 
    & ~~~~~~~~~~~~~~~~~~~~~~~~~~~~~~~~~~~~+ \underbracket{\sum_{t=0}^\infty \gamma^t d^\pi_t(\bs, \ba) \sum_{(\bs', \ba')} P(\bs'|\bs, \ba) \cdot (1 - \zeta(\bs', \ba')) \cdot \pi(\ba'|\bs') \cdot \hat{Q}^\pi(\bs', \ba')}_{\Delta_2 \text{:bias due to leaving support; upper bounded due to $\zeta$-cover}}
\end{align*}
To bound the inner summation over $(\bs', \ba')$ in term (a), we can apply Lemma~\ref{lem:empirical_transition_concentration} since $\hat{P}(\bs'|\bs, \ba)$ and $\zeta(\bs', \ba')$ are not independent, to obtain a horizon-free bound. Finally, we use Condition~\ref{assumption:concentrability} to bound the density ratios, in expectation over the randomness in dataset $\mathcal{D}$, identical to the proof for the conservative lower-confidence bound method from before. Formally, using Lemma~\ref{lem:empirical_transition_concentration}, we get, with high probability $\geq 1 - \delta$:
\begin{align*}
    \forall (\bs, \ba) \text{~~s.t.~~} n(\bs, \ba) \geq 1, \ \left\vert \left(\hat{P}(\bs, \ba) - P(\bs, \ba) \right) \cdot \hat{V}^\pi_\zeta \right\vert \leq  \sqrt{\frac{\mathbb{V}(\hat{P}(\bs, \ba), \hat{V}^\pi_\zeta) \iota}{n(\bs, \ba)}}
    + \frac{\iota}{n(\bs, \ba)},
\end{align*}
where we utilized the fact that $\hat{V}^\pi_\zeta \leq \hat{V}^\pi \leq 1$ due to Condition~\ref{assumption:value_bounded}. For bounding $\Delta_2$, we note that this term is bounded by the definition of $\zeta$-covered policy:
\begin{align}
    \Delta_2 \leq \sum_{t=0}^\infty \gamma^t (1 - \gamma) U(b) \leq U(b).
\end{align}
Thus, the overall policy evaluation error is given by:
\begin{align}
\label{eqn:policy_constraint}
    \left\vert J(\pi) - \hat{J}_\zeta(\pi) \right\vert \lesssim \sum_{t=0}^\infty \gamma^t d^\pi_t(\bs, \ba) \left[\sqrt{\frac{\mathbb{V}(\hat{P}(\bs, \ba), \hat{V}^\pi_\zeta) \iota}{n(\bs, \ba)}}
    + \frac{\iota}{n(\bs, \ba)}\right] + U(b).
\end{align}
Equation~\ref{eqn:policy_constraint} mimics the $\Phi$ term in \eqref{eq:phi} that is bounded in Section~\ref{sec:lcb_penalty_bound}, with an additional offset $U(b)$. Hence, we can reuse the same machinery to show the bound in expectation over the randomness in the dataset, which completes the proof.
\end{proof}

Using Lemma~\ref{lemma:truncated_backups_value_estimation}, we can now that the policy constraint algorithm attains a favorable guarantee when compared to the best policy that is $\zeta$-covered:
\begin{theorem}[Performance of policy-constraint offline RL]
\label{thm:truncated_backups}
Under Condition~\ref{assumption:value_bounded}, the policy $\widehat{\pi}^*$ incurs bounded suboptimality against the best $\zeta$-covered policy, with high probability $\geq 1 - \delta$:  
\begin{align*}
    \expec_{\mathcal{D}}\left[J(\pi^*_\zeta) - J(\widehat{\pi}^*)\right] \lesssim \sqrt{\frac{ C^* |\states| H \iota }{N}} + \frac{C^* |\states| H \iota}{N} + 2 U(b)\,. 
\end{align*}
\end{theorem}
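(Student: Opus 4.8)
The plan is to reduce Theorem~\ref{thm:truncated_backups} to Lemma~\ref{lemma:truncated_backups_value_estimation} by the standard ``optimality of the empirical maximizer'' argument. All the technical work — the Bernstein concentration on $\widehat{P}$, the variance-recursion bound on the $\Phi$-type term, and the concentrability bookkeeping — has already been carried out inside the proof of that lemma, so the remaining step is short. First I would record the optimization fact underlying Algorithm~\ref{alg:vimbs}: the algorithm performs policy iteration inside the $\zeta$-truncated empirical MDP, so its output $\widehat{\pi}^*$ is the optimal policy of that MDP, and hence $\widehat{J}_\zeta(\widehat{\pi}^*) = \max_\pi \widehat{J}_\zeta(\pi) \ge \widehat{J}_\zeta(\pi^*_\zeta)$, where $\pi^*_\zeta$ is the best $\zeta$-covered policy.

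Next I would write the telescoping decomposition
\begin{align*}
J(\pi^*_\zeta) - J(\widehat{\pi}^*) = &\underbrace{\left(J(\pi^*_\zeta) - \widehat{J}_\zeta(\pi^*_\zeta)\right)}_{\text{(i)}} + \underbrace{\left(\widehat{J}_\zeta(\pi^*_\zeta) - \widehat{J}_\zeta(\widehat{\pi}^*)\right)}_{\text{(ii)}} \\
&+ \underbrace{\left(\widehat{J}_\zeta(\widehat{\pi}^*) - J(\widehat{\pi}^*)\right)}_{\text{(iii)}}.
\end{align*}
Term (ii) is $\le 0$ by the optimization fact above. Terms (i) and (iii) are each an instance of the value-estimation-error quantity $|J(\pi) - \widehat{J}_\zeta(\pi)|$, so each is bounded in expectation over $\data$ by $\sqrt{C^*|\states|H\iota/N} + C^*|\states|H\iota/N + U(b)$ via Lemma~\ref{lemma:truncated_backups_value_estimation}. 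Adding the two copies produces the $2U(b)$ offset, and after absorbing universal constants into $\lesssim$ this is exactly the stated bound; the ``high probability $\ge 1-\delta$'' qualifier is inherited from the good event $\mathcal{E}$ used inside the lemma.

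The one point that requires genuine care — and the main obstacle — is verifying that Lemma~\ref{lemma:truncated_backups_value_estimation} actually applies to \emph{both} policies appearing in (i) and (iii), i.e. that $\pi^*_\zeta$ and $\widehat{\pi}^*$ are each $\zeta$-covered in the sense of Definition~\ref{def:zeta_cover}. For $\pi^*_\zeta$ this holds by construction. For $\widehat{\pi}^*$ I would argue it directly from the truncation: in the policy-improvement objective $\sum_\ba \pi(\ba|\bs)\,\zeta(\bs,\ba)\,\widehat{Q}^\pi_\zeta(\bs,\ba)$ the out-of-support actions receive zero weight, and since rewards lie in $[0,1]$ the truncated value $\widehat{Q}^\pi_\zeta$ is nonnegative, so the $\arg\max$ places mass only on actions with $\zeta(\bs,\ba)=1$ whenever such an action exists; the residual occupancy that $\widehat{\pi}^*$ can still place on $\{\zeta=0\}$ pairs is then controlled by the density threshold $b$ exactly as in \citet{liu2020provably}, which is precisely how $U(b)$ is defined. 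I would state this as a one-line claim (or cite the corresponding lemma of \citet{liu2020provably}) before invoking Lemma~\ref{lemma:truncated_backups_value_estimation}; the rest of the proof is then a two-line combination.
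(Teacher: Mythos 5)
Your decomposition is the same one the paper uses: split $J(\pi^*_\zeta) - J(\widehat{\pi}^*)$ into two evaluation-error terms plus an empirical-optimality term that is nonpositive, and you correctly trace the $2U(b)$ offset to the two evaluation errors. You also rightly identify that Lemma~\ref{lemma:truncated_backups_value_estimation} requires both policies to be $\zeta$-covered, and your argument for $\widehat{\pi}^*$ (the truncated improvement objective places no mass on $\zeta=0$ actions, so the residual out-of-support occupancy is controlled by the threshold $b$) is the right one and matches \citet{liu2020provably}.

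The gap is in term (iii). Lemma~\ref{lemma:truncated_backups_value_estimation} is stated and proved for a \emph{fixed} (``given'') $\zeta$-covered policy: its bound is an expectation over $\data$, and the concentrability step inside its proof treats the occupancy $d^\pi_t(\bs,\ba)$ as a constant and applies $\mathbb{E}_{\data}\left[1/n(\bs,\ba)\right] \lesssim 1/(N\mu(\bs,\ba))$, which is only legitimate when $d^\pi_t$ does not itself depend on the dataset. For $\pi = \widehat{\pi}^*$ the occupancy measure is a function of $\data$, so the lemma cannot be invoked verbatim; $\zeta$-coverage of $\widehat{\pi}^*$ is necessary but not sufficient. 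This is precisely why the paper applies the lemma only to the data-agnostic $\pi^*_\zeta$ and then, for the data-dependent policy, re-runs the recursion machinery of Appendix~\ref{sec:lcb_penalty_bound} --- whose concentration events, via Lemma~\ref{lem:empirical_transition_concentration} and the $s$-absorbing-MDP construction, hold uniformly over data-dependent value functions, and whose value-difference recursion is arranged so that only the occupancy of a fixed comparator policy appears in the final sum. To close your proof you would need either to redo that recursion for term (iii), as the paper does, or to upgrade Lemma~\ref{lemma:truncated_backups_value_estimation} to a statement holding uniformly over all $\zeta$-covered policies; the rest of your argument is sound.
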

To prove this theorem, we use the result of Lemma~\ref{lemma:truncated_backups_value_estimation} for the fixed policy, that is agnostic of the dataset, and then again use the recursion as before to bound the value of the data-dependent policy. The latter uses Lemma~\ref{lem:empirical_transition_concentration} and ends up attaining a bound previously found in Appendix~\ref{sec:lcb_penalty_bound}, which completes the proof of this Theorem. When the term $U(b)$ is small, such that $U(b) \leq \mathcal{O}(H^{0.5 - \varepsilon})$ for $\varepsilon > 0$, then we find that the guarantee in Theorem~\ref{thm:truncated_backups} matches that in Theorem~\ref{thm:lcb_rl}, modulo a term that grows slower in the horizon than the other terms in the bound. If $U(b)$ is indeed small, then all properties that applied to conservative offline RL shall also follow for policy-constraint algorithms.

\textbf{Note on the bound.} We conjecture that it is possible to get rid of the $U(b)$ term, under certain assumptions on the support indicator $\zeta(\bs, \ba)$, and by relating the values of $\zeta(\bs, \ba)$ and $\zeta(\bs', \ba')$, at consecutive state-action tuples. For example, if $\zeta(\bs', \ba') = 1 \implies \zeta(\bs, \ba) = 1$, then we can derive a stronger guarantee.

\section{{Intuitive Illustrations of The Conditions on the Environment and Practical Guidelines for Verifying Them}}
\label{app:intuitive}
{In this section, we present intuitive illustrations of the various conditions we study and discuss practical guidelines that allow a practitioner to verify whether they are likely to hold for their problem domain. We focus on  Conditions~\ref{assumption:critical_num} and \ref{assumption:coverage}, as Condition~\ref{assumption:value_bounded} is satisfied in very common settings such as learning from sparse rewards obtained at the end of an episode, indicating success or failure.}

\begin{figure}[t]
\centering
\vspace{-0.2cm}
\includegraphics[width=0.7\textwidth]{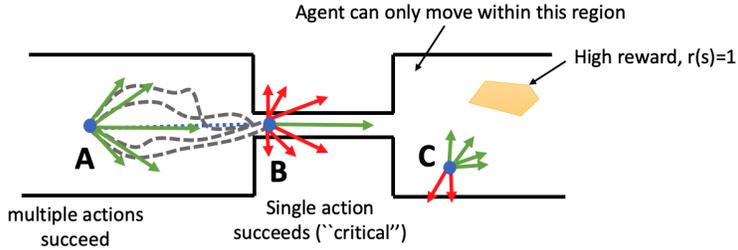}
\vspace{-0.2cm}
\caption{\label{fig:navigation_example_app} {\footnotesize{\textbf{(Figure~\ref{fig:navigation_example} restated) Illustration showing the intuition behind critical points in a navigation task.} The agent is supposed to navigate to a high-reward region marked as the yellow polygon, without crashing into the walls. For different states, \textbf{A}, \textbf{B} and \textbf{C} that we consider, the agent has a high volume of actions that allow it to reach the goal at states \textbf{A} and \textbf{C}, but only few actions that allow it to do so at state \textbf{B}. States around \textbf{A} and \textbf{C} are not critical, and so this task has only a small volume of critical states (i.e., those in the thin tunnel)}.}}
\vspace{-0.1cm}
\end{figure}

\subsection{{Condition~\ref{assumption:critical_num}}}
{To provide intuition behind when this condition is satisfied, we first provide an example of a sample navigation domain in Figure~\ref{fig:navigation_example_app}, to build examples of critical states. As shown in the figure, the task is to navigate to the high-reward yellow-colored region. We wish to understand if states marked as \textbf{A}, \textbf{B} and \textbf{C} are critical or not. The region where the agent can move is very wide in the neighborhood around state \textbf{A}, very narrow around state \textbf{B} and wide again around state \textbf{C}. In this case, as marked on the figure if the agent executes actions shown via green arrows at the various states, then it is likely to sill finish the task, while if it executes the actions shown via red arrows it is likely not on track to reach the high-reward region.} 

{Several actions at states \textbf{A} and \textbf{C} allow the agent to still reach the goal, without crashing into the walls, while only one action at state \textbf{B} allows so, as shown in the figure. Thus, state \textbf{B} is a ``critical state'' as per Definition~\ref{assumption:critical_def}. But since most of the states in the wide regions of the tunnel are non-critical, this navigation domain satisfies Condition~\ref{assumption:critical_num}.}

\subsection{{Condition~\ref{assumption:coverage}}}
{Now we discuss the intuition behind why offline RL run on a form of noisy-expert data can outperform BC on expert data. As shown in Figure~\ref{fig:suboptimal}, the task is to navigate from the \textbf{Start} to the \textbf{Goal}. In this case, when BC is trained using only expert trajectories, and the environment consists of a stochastic dynamics, then running the BC policy during evaluation may diverge to low reward regions as shown in the second row, second column of Figure~\ref{fig:suboptimal}. On the other hand, if RL is provided with some noisy-expert data that visits states around expert trajectories which might eventually lead to low-rewarding states, then an effective offline RL method should be able to figure out how to avoid such states and solve the task successfully.}

\begin{figure}[ht]
\centering
\vspace{-0.2cm}
\includegraphics[width=0.7\textwidth]{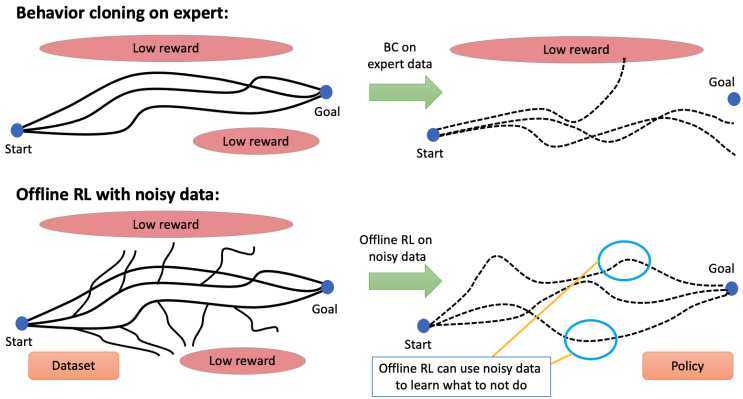}
\vspace{-0.2cm}
\caption{\label{fig:suboptimal} {(Figure~\ref{fig:suboptimal_main} restated) \footnotesize{\textbf{Illustration showing the intuition behind suboptimal data in a simple navigation task.} BC trained on expert data (data composition is shown on the left) may diverge away from the expert and find a poor policy that does not solve the task. On the other hand, if instead of expert data, offline RL is provided with noisy expert data that sometimes ventures away from the expert distribution, RL can use this data to learn to stay on the course to the goal.}}}
\vspace{-0.1cm}
\end{figure}

{Condition~\ref{assumption:coverage} requires that state-action tuples with $d^*(\bs, \ba) \geq b/H$ under the expert policy have high-enough density $\mu(\bs, \ba) \geq b$ under the data distribution. There are two ways to attain this condition: \textbf{(1)} the expert policy already sufficiently explores the state ($d^*(\bs, \ba) \geq b$), for example, when the environment is mostly deterministic or when trajectories are cyclic (e.g., in locomotion tasks discussed below), or, \textbf{(2)} the offline dataset is noisy such that states that are less-frequently visited by the expert are explored more in the data. The latter can be satisfied when the agent is provided with ``negative data'', i.e., failed trajectories, starting from states in an optimal trajectory as shown in Figure~\ref{fig:suboptimal}. Such counterfactual trajectories allow the agent to observe more outcomes starting from a state in the optimal trajectory. And running offline RL on them will enable the agent to learn what \emph{not} to do, as illustrated in Figure~\ref{fig:suboptimal}. On the other hand, BC cannot use this negative data.}

{This condition is satisfied in the following practical problems:}
\begin{itemize}
    \item {\textbf{Robotics}: In robotics noisy-expert data may not be directly available via demonstrations but can be obtained by a practitioner by running some limited autonomous data collection using noisy scripted policies (e.g., \citet{kalashnikov2018qtopt} and \citet{kalashnikov2021mt} run partly trained RL policies for noisy collection for running offline RL). Another simple way to obtain such data is to first run standard BC on the offline demonstrations, then store the rollouts obtained from the BC policy when evaluating it, and then run offline RL on the entire dataset (of expert demonstrations + suboptimal/noisy BC evaluation rollouts).}
    \item {\textbf{Autonomous driving and robotics:} In some domains such as autonomous driving and robotics, rolling out short counterfactual trajectories from states visited in the offline data, and labeling it with failures is a common strategy~\citep{bojarski2016end,rao2020rl}. A practitioner could therefore satisfy this condition by tuning the amount of counterfactual data they augment to training.} 
    \item {\textbf{Recommender systems}: In recommender systems, \citet{chen2019youtube} found that it was practical to add additional stochasticity while rolling out a policy for data collection which could be controlled, enabling the practitioner to satisfy this condition.}
    \item {\textbf{Trajectories consist of cyclic states (e.g., robot locomotion, character animation in computer graphics):} This condition can be satisfied in several applications such as locomotion (e.g., in character animation in computer graphics~\citep{peng2018sfv}, or open-world robot navigation~\citep{shah2021ving}) where the states observed by the agent are repeated over the course of the trajectory. Therefore a state with sufficient density under the optimal policy may appear enough times under $\mu$.}
\end{itemize}

\subsection{{{Practical Guidelines For Verifying These Conditions}}}
These conditions listed above and discussed in the paper can be difficult to check for in practice; for example, it is non-obvious how to quantitatively compute the volume of critical states, or how well-explored the dataset is. However, we believe that a practitioner who has sufficient domain-specific knowledge has enough intuition to qualitatively reason about whether the conditions hold. We believe that such practitioners can answer the following questions about their particular problem domain: 

\begin{itemize}
    \item Does there only exist a large fraction of states along a trajectory where either multiple good actions exist, or it is easy to recover from suboptimal actions? 
    \item Is the offline dataset collected from a noisy-expert policy, and if not, can the dataset be augmented using perturbed or simulated trajectories? 
\end{itemize}

{If either of those questions can be answered positively, our theoretical and empirical results show that it is favorable to use offline RL algorithms, even over collecting expert data and using BC. At least, offline RL algorithms should be tried on the problem under consideration. Hence, the goal of our contributions is not to provide a rigid set of guidelines, but rather provide practical advice to the ML practitioner. We would like to highlight that such non-rigid guidelines exist in general machine learning, beyond RL. For example, in supervised learning, tuning the architecture of a deep neural network depends heavily on domain knowledge, and choosing kernel in a kernel machine again depends on the domain.}

\section{Experimental Details}
In this section we provide a detailed description of the various tasks used in this paper, and describe the data collection procedures for various tasks considered. We discuss the details of our tasks and empirical validation at the following website: \url{https://sites.google.com/view/shouldirunrlorbc/home}.

\subsection{Tabular Gridworld Domains}
\label{app:gridworld_details}

The gridworld domains we consider are described by $10 \times 10$ grids, with a start and goal state, and walls and lava placed in between. We consider a sparse reward where the agent earns a reward of $1$ upon reaching the goal state; however, if the agent reaches a lava state, then its reward is $0$ for the rest of the trajectory. The agent is able to move in either of the four direction (or choose to stay still); to introduce stochasticity in the transition dynamics, there is a $10\%$ chance that the agent travels in a different direction than commanded. 

The exact three gridworlds we evaluate on vary in the number of critical points encountered per trajectory. We model critical states as holes in walls through which the agent must pass; if the agent chooses a wrong action at those states, it veers off into a lava state. The exact three gridworlds we evaluate on are: (a) ``Single Critical" with one critical state per trajectory, (b) ``Multiple Critical" with three critical states per trajectory, and (c) ``Cliffwalk", where every state is critical \citep{Schaul2015}. The renderings of each gridworld are in Figure~\ref{fig:gridworld_specs}. 
 
\begin{figure}[ht]
\centering
\begin{minipage}{0.33\linewidth}
    \includegraphics[width=\linewidth]{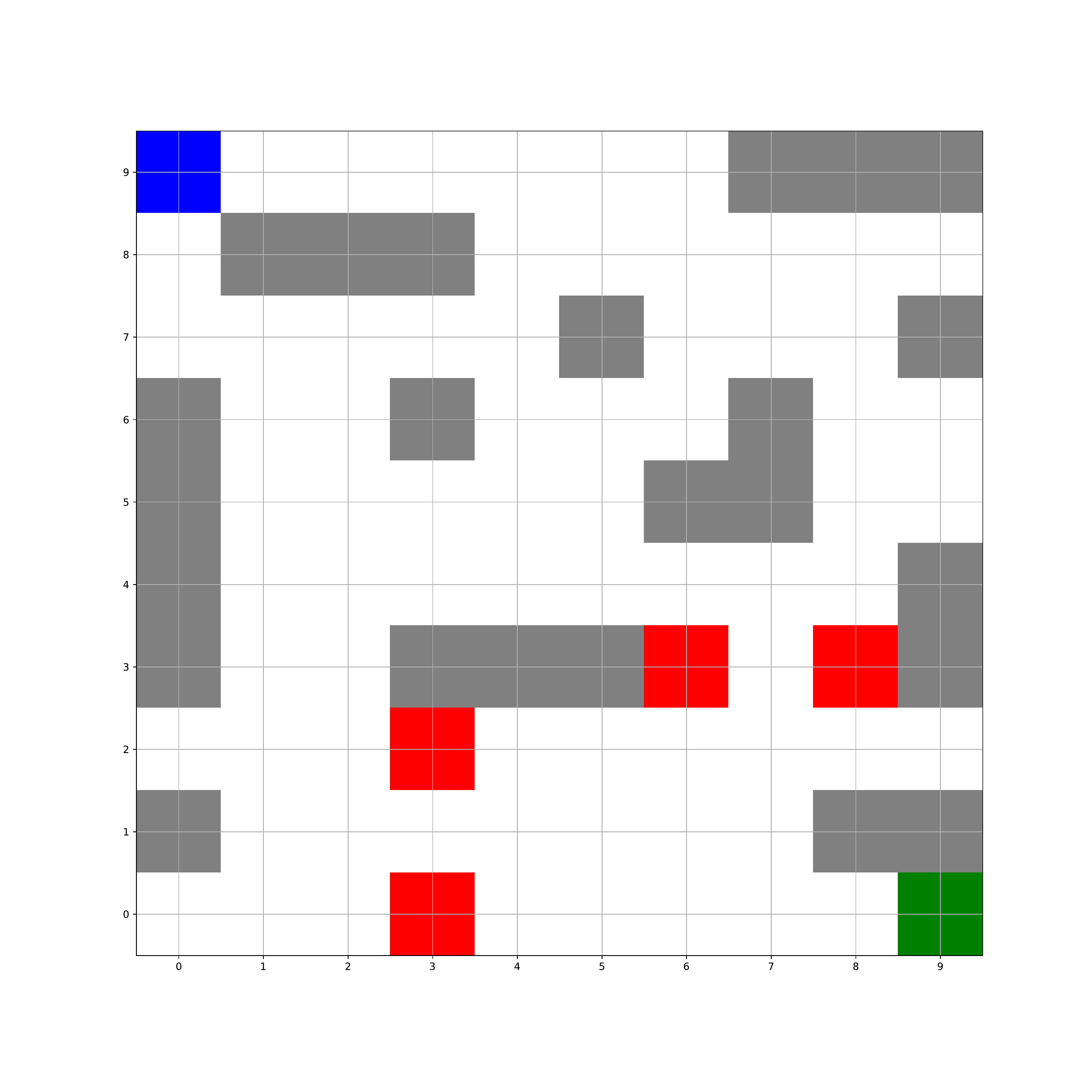}
\end{minipage}
\begin{minipage}{0.33\linewidth}
    \includegraphics[width=\linewidth]{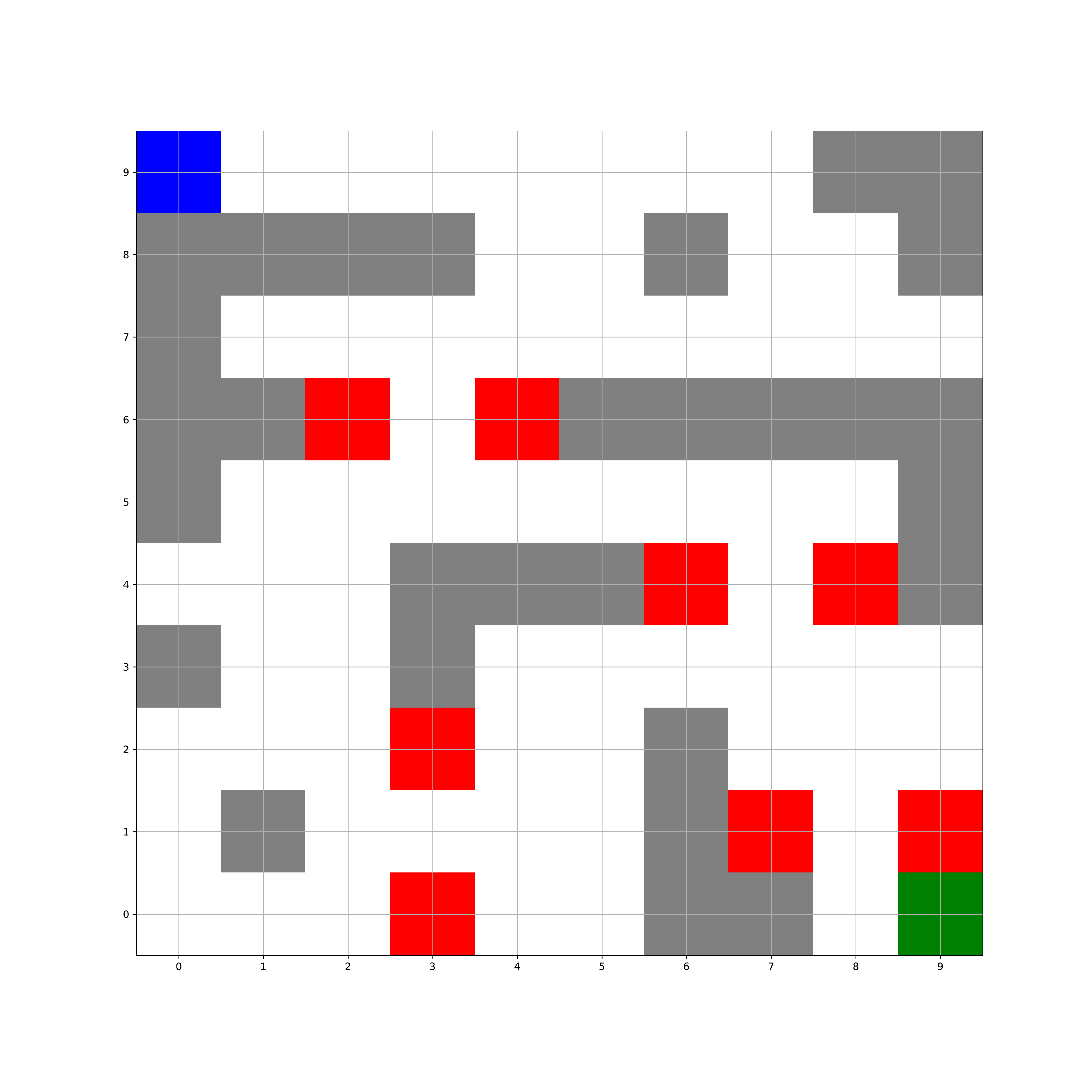}
\end{minipage}
\begin{minipage}{0.32\linewidth}
    \includegraphics[width=\linewidth]{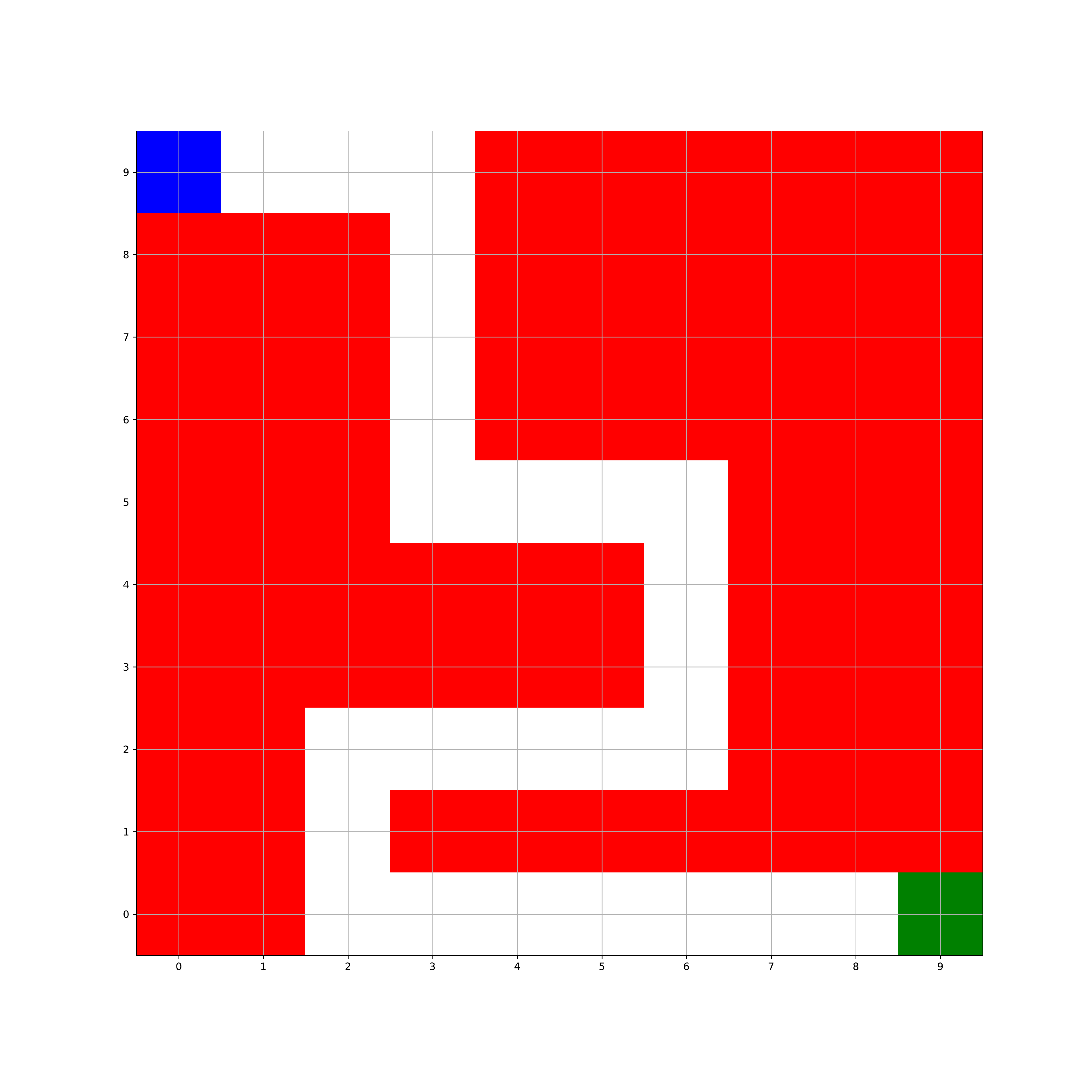}
\end{minipage}
\caption{\footnotesize{Renderings of three gridworld domains we evaluate on, where states are colored as: Start:blue, Goal:green, Lava:red, Wall:grey, and Open:white. The domains have varying number of critical points. \emph{Left}: Single Critical. \emph{Middle}: Multiple Critical. \emph{Right}: Cliffwalk.}}
\label{fig:gridworld_specs}
\end{figure}

\subsection{Multi-Stage Robotic Manipulation Domains}

\textbf{Overview of domains.}
These tasks are taken from \citet{singh2020cog}. The robotic manipulation simulated domains comprise of a 6-DoF WidowX robot that interacts with objects in the environment. There are three tasks of interest, all of which involve a drawer and a tray. The objective of each task is to remove obstructions of the drawer, open the drawer, pick an object and place it in a tray. The obstructions of the drawer were varied giving rise to three different domains — \textbf{open-grasp} (no obstruction of the drawer), \textbf{close-open-grasp} (an open top drawer obstructs the bottom drawer), \textbf{pick-place-open-grasp} (an object obstructs the bottom drawer).

\begin{figure}[ht]
\centering
\includegraphics[width=\linewidth]{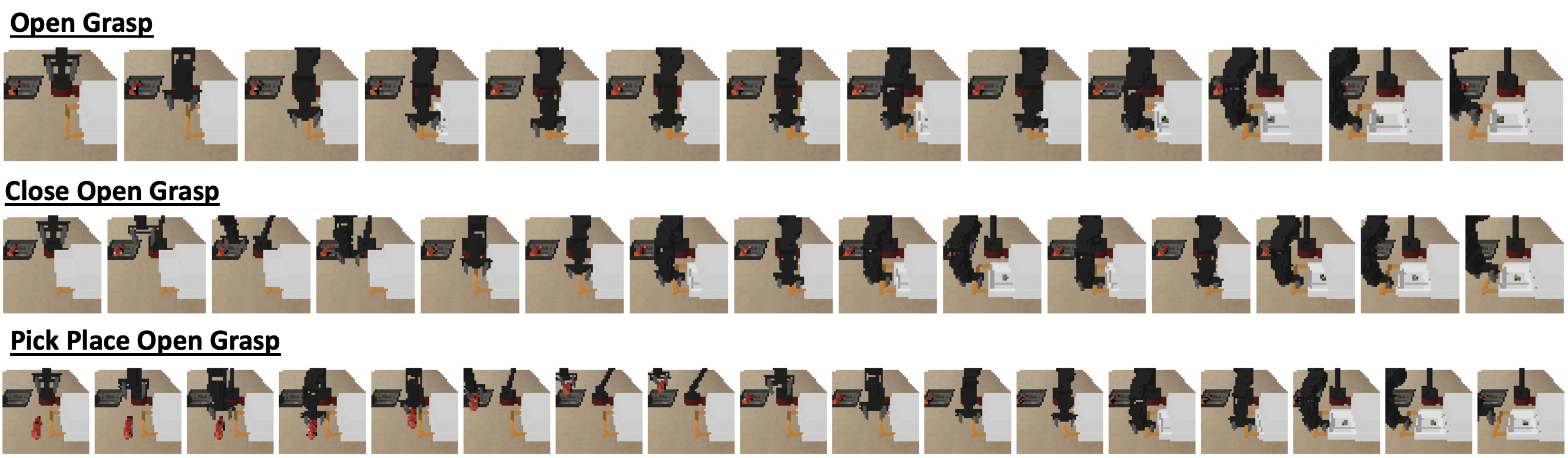}
\vspace{-0.1in}
\caption{\footnotesize{Filmstrip of the three tasks that we stufy for robotic manipulation -- open-grasp, close-open-grasp and pick-place-open-grasp.}}
\end{figure}

\textbf{Reward function.} For all the three tasks considered, a reward of $+1$ is provided when the robot is successfully able to open the drawer of interest (bottom drawer in close-open-grasp and pick-place-open-grasp; the only drawer in open-grasp) and is able to grasp the object inside it. If the robot fails at doing so, it gets no reward.

\textbf{Dataset composition.}
For each task, we collected a dataset comprising of 5000 trajectories. For our experiments where we utilize expert data, we used the (nearly)-expert scripted policy for collecting trajectories and discarded the ones that failed to succeed. Thus the expert data attains a 100\% success rate on this task. For our experiments with suboptimal data, which is used to train offline RL, we ran a noisy version of this near-expert scripted policy and collected 5000 trajectories. The average success rate in the suboptimal data is around 40-50\% in both opening and closing the drawers with, 70\% success rate in grasping objects, and a 70\% success rate in place those objects at random locations in the workspace.

\subsection{AntMaze Domains}
\textbf{Overview of the domain.} This task is based on the antmaze-medium and antmaze-large environments from \citet{fu2020d4rl}. The goal in this environment is to train an 8-DoF quadraped ant robot to successfully navigate to a given, pre-specifcied target location in a maze. We consider two different maze layouts provided by \citet{fu2020d4rl}. We believe that this domain is well-suited to test BC and RL methods in the presence of multiple critical points, and is representative of real-world navigation scenarios. 

\textbf{Scripted policies and datasets.} We utilize the scripted policies provided by \citet{fu2020d4rl} to generate two kinds of expert datasets: first, we generate trajectories that actually traverse the path from a given default start location to the target goal location that we consider for evaluation, and second, we generate trajectories that go from multiple random start positions in the maze to the target goal location in the maze. The latter has a wider coverage and a different initial state distribution compared to what we will test these algorithms on. We collected a dataset of 500k transitions, which was used by both BC and offline RL.

\textbf{Reward functions.} In this task, we consider a sparse binary reward $r(\bs, \ba) = +1$, if $|\bs' - \bg| \leq \varepsilon=0.5$ and $0$ otherwise. This reward is only provided at the end of a trajectory. This reward function is identical to the one reported by D4RL~\citep{fu2020d4rl}, but the dataset composition in our case comes from an expert policy. 

\subsection{Adroit Domains}
\textbf{Overview of the domain.} The Adroit domains~\citep{rajeswaran2018dapg,fu2020d4rl} involve controlling a 24-DoF simulated Shadow Hand robot tasked with hammering a nail (hammer), opening a door (door), twirling a pen (pen) or picking up and moving a ball (relocate). This domain presents itself with narrow data distributions, and we utilize the demonstrations provided by \citet{rajeswaran2018dapg} as our expert dataset for this task.  The environments were instantiated via D4RL, and we utilized the environments marked as: \texttt{hammer-human-longhorizon}, \texttt{door-human-longhorizon}, \texttt{pen-human-longhorizon} and \texttt{relocate-human-longhorizon} for evaluation.  

\textbf{Reward functions.} We directly utilize the data from D4RL~\citep{fu2020d4rl} for this task. However, we modify the reward function to be used for RL. While the D4RL adroit domains provide a dense reward function, with intermediate bonuses provided for various steps, we train offline RL using a binary reward function. To compute this binary reward function, we first extract the D4RL dataset for these tasks, and then modify the reward function as follows:
\begin{align}
    r(\bs, \ba) = +1~~~& \mathrm{if}~ r_\mathrm{D4RL}(\bs, \ba) \geq 70.0 ~~~~~ \text{(hammer-human)} \\
    r(\bs, \ba) = +1~~~& \mathrm{if}~ r_\mathrm{D4RL}(\bs, \ba) \geq 9.0 ~~~~~~~ \text{(door-human)} \\
    r(\bs, \ba) = +1~~~& \mathrm{if}~ r_\mathrm{D4RL}(\bs, \ba) \geq 47.0 ~~~~~ \text{(pen-human)} \\
    r(\bs, \ba) = +1~~~& \mathrm{if}~ r_\mathrm{D4RL}(\bs, \ba) \geq 18.0 ~~~~~ \text{(relocate-human)} 
\end{align}
The constant thresholds for various tasks are chosen in a way that only any transition that actually activates the flag \texttt{goal\_achieved=True} flag in the D4RL Adroit environments attains a reward +1, while other transitions attain a reward 0. We evaluate the performance of various algorithms on this new sparse reward that we consider for our setting.  

\subsection{Atari Domains}

\begin{table*}[t!]
\small{
\centering
\vspace*{.3cm}
\resizebox{\textwidth}{!}{\begin{tabular}{l|r|r r r}
\toprule
\textbf{Domain / Behavior Policy} & \textbf{Task/Data Quality}& \textbf{BC} & \textbf{Na\"ive CQL} & \textbf{Tuned CQL} \\  \midrule
\textbf{7 Atari games (RL policy)} 
& Pong, Expert & 109.78 $\pm$ 2.93 & 102.03 $\pm$ 4.43 & 105.84 $\pm$ 2.22\\
& Breakout, Expert & 75.59 $\pm$ 21.59 & 71.22 $\pm$ 27.55 & 94.77 $\pm$ 27.02\\
& Asterix, Expert & 41.10 $\pm$ 9.5 & 44.81 $\pm$ 12.0 & 80.19 $\pm$ 20.7 \\
& SpaceInvaders, Expert & 40.88 $\pm$ 4.17 & 45.27 $\pm$ 7.32 & 54.15 $\pm$ 2.96\\
& Q*bert, Expert & 121.48 $\pm$ 9.06 & 105.83 $\pm$ 23.17 & 98.52 $\pm$ 18.62 \\
& Enduro, Expert & 78.67 $\pm$ 3.98 & 141.53 $\pm$ 18.79 & 127.02 $\pm$ 10.53\\
& Seaquest, Expert & 63.15 $\pm$ 9.47 & 64.03 $\pm$ 27.67 & 85.28 $\pm$ 21.28\\
\bottomrule
 \end{tabular}}
\vspace{-0.1in}
\caption{Per-game results for the Atari domains with expert data. Note that while na\"ive CQL does not perform much better than BC (it performs similarly as BC), tuned CQL with the addition of the DR3 regularizer performs much better.}
\label{app:full_results}
}
\end{table*}

\begin{table}[t]
\small{
\centering
\vspace*{0.05cm}
\resizebox{0.5\linewidth}{!}{\begin{tabular}{l|r r}
\toprule
 \textbf{Task}& \textbf{BC-PI} & \textbf{CQL} \\ \midrule
 Pong & 100.03 $\pm$ 5.01 & 94.48 $\pm$ 8.39\\
 Breakout & 25.99 $\pm$ 1.98 & 86.92 $\pm$ 13.74 \\
 Asterix & 29.77 $\pm$ 5.33 & 157.54 $\pm$ 37.94\\
 SpaceInvaders & 31.45 $\pm$ 1.96 & 63.7 $\pm$ 16.18\\
 Q*bert & 106.06 $\pm$ 8.63 & 88.72 $\pm$ 20.41\\
 Enduro & 68.56 $\pm$ 0.23 & 148.97 $\pm$ 12.3\\
 Seaquest & 22.51 $\pm$ 2.23 & 124.95 $\pm$ 43.86\\
\bottomrule
 \end{tabular}}
\vspace{-0.1in}
\caption{\footnotesize{Comparing the performance of BC-PI and offline RL on noisy-expert data. Observe that in general, offline RL significantly outperforms BC-PI.}}
\label{tab:atari_bc_expert_vs_offlinerl_subexpert}
}
\vspace{-0.3cm}
\end{table}

We utilized 7 Atari games which are commonly studied in prior work~\citep{kumar2020conservative,kumar2021implicit}: \textsc{Asterix}, \textsc{Breakout}, \textsc{Seaquest}, \textsc{Pong}, \text{SpaceInvaders}, \textsc{Q*Bert}, \textsc{Enduro} for our experiments. We do not modify the Atari domains, directly utilize the sparse reward for RL training and operate in the stochastic Atari setting with sticky actions for our evaluations. For our experiments, we extracted datasets of different qualities from the DQN-Replay dataset provided by \citet{agarwal2019optimistic}. The DQN-Replay dataset is stored as 50 buffers consisting of sequentially stored data observed during training of an online DQN agent over the course of training. 

\textbf{Expert data.} To obtain expert data for training BC and RL algorithms, we utilized all the data from buffer with id 49 (i.e., the last buffer stored). Since each buffer in DQN-Replay consists of 1M transition samples, all algorithms training on expert data learn from 1M samples. 

\textbf{Noisy-expert data.} For obtaining noisy-expert data, analogous to the gridworld domains we study, we mix data from the optimal policy (buffer 49) with an equal amount of random exploration data drawn from the initial replay buffers in DQN replay (buffers 0-5). i.e. we utilize 0.5M samples form buffer 49 in addition to 0.5M samples sampled uniformly at random from the first 5 replay buffers.   

\section{Tuning and Hyperparameters}
\label{app:tuning}
In this section, we discuss our tuning strategy for BC and CQL used in our experiments. 

\textbf{Tuning CQL.} We tuned CQL offline, using recommendations from prior work~\citep{kumar2021workflow}. We used default hyperparameters for the CQL algorithm (Q-function learning rate = 3e-4, policy learning rate = 1e-4), based on prior works that utilize these domains. Note that prior works do not use the kind of data distributions we use, and our expert datasets can be very different in composition compared to some of the other medium or diverse data used by prior work in these domains. In particular, with regards to the hyperaprameter $\alpha$ in CQL that trades off conservatism and the TD error objective, we used $\alpha=0.1$ for all Atari games (following \citet{kumar2021implicit}), and $\alpha=1.0$ for the robotic manipulation domains following \citep{singh2020cog}. For the Antmaze and Adroit domains, we ran CQL training with multiple values of $\alpha \in \{0.01, 0.1, 0.5, 1.0, 5.0, 10.0, 20.0\}$, and then picked the smallest $\alpha$ that did not lead to eventually divergent Q-values (either positively or negatively) with more (1M) gradient steps. Next, we discuss how we regularized the Q-function training and performed policy selection on the various domains.  

\begin{itemize}[leftmargin=*]
    \item \textbf{Detecting overfitting and underfitting}: Following \citet{kumar2021workflow}, as a first step, we detect whether the run is overfitting or underfitting, by checking the trend in Q-values. In our experiments, we found that Q-values learned on Adroit domains exhibited a decreasing trend throughout training, from which we concluded it was overfitting. On the Antmaze and Atari experiments, Q-values continued to increase and eventually stabilized, indicating that the run might be underfitting (but not overfitting). 
    \item \textbf{Correcting for overfitting and policy selection}: As recommended, we applied a capacity decreasing regularizer to correct for overfitting, by utilizing dropout on every layer of the Q-function. We ran with three values of dropout parobability, $p \in \{0.1, 0.2, 0.4\}$, and found that $0.4$ was the most effective in alleviating the monotonically decreasing trend in Q-values, so used that for our results. Then, we performed policy checkpoint selection by picking the earliest checkpoint that appears after the peak in the Q-values for our evaluation.
    \item \textbf{Correcting for underfitting:} In the Atari and Antmaze domains, we observed that the Q-values exhibited a stable, convergent trend and did not decrease with more training. Following \citet{kumar2021workflow}, we concluded that this resembled underfitting and utilized a capacity-increasing regularizer (DR3 regularizer~\citep{anonymous2021value}) for addressing this issue. We used identical hyperparameter for the multiplier $(\beta)$ on this regularizer term for both Atari and Antmaze, $\beta = 0.03$ and did not tune it. 
\end{itemize}

\textbf{Tuning BC.} In all domains, we tested BC with different network architectures. On the antmaze domain, we evaluated two feed-forward policy architectures of sizes $(256, 256, 256)$ and $(256, 256, 256, 256, 256, 256)$ and picked the one that performed best online. ON Adroit domains, we were not able to get a tanh-Gaussian policy, typically used in continuous control to work well, since it overfitted very quickly giving rise to worse-than-random performance and therefore, we switched to utilizing a Gaussian policy network with hidden layer sizes $(256, 256, 256, 256)$, and a learned, state-dependent standard deviation. To prevent overfitting in BC, we applied a strong dropout regularization of $p=0.2$ after each layer for Adroit domains. On Atari and the manipulation domains, we utilized a Resnet architecture borrowed from IMPALA~\citep{espeholt2018impala}, but without any layer norm.

{\textbf{Tuning BC-PI.} Our BC-PI method is implemented by training a Q-function via SARSA, i.e., $Q(\bs, \ba) \leftarrow r(\bs, \ba) + \gamma Q(\bs', \ba')$, where $(\bs', \ba')$ is the state-action pair that appears next in the trajectory after $(\bs, \ba)$ in the dataset using the following Bellman error loss function to train $Q_\theta$:
\begin{align*}
    \mathcal{L}(\theta) = \frac{1}{|\mathcal{D}|} \sum_{\bs, \ba, \bs’, \ba’ \sim \mathcal{D}} \left( Q_\theta(\bs, \ba) - (r(\bs, \ba) + \gamma \bar{Q}_{\bar{\theta}}(\bs’, \ba’)\right)^2,
\end{align*}
and then performing advantage-weighted policy extraction: $\pi(\ba|\bs) \propto \widehat{\behavior}(\ba|\bs) \cdot \exp(A(\bs, \ba) / \eta)$ on the offline dataset $\mathcal{D}$. The loss function for this policy extraction step, following \citet{peng2019advantage} is given by:
\begin{align*}
    \pi_\phi \leftarrow \max_{\pi_\phi} \sum_{\bs, \ba} \log \pi_\phi(\ba|\bs) \cdot \exp \left( \frac{Q_\theta(\bs, \ba) - V(\bs)}{\eta}\right),
\end{align*}
where the value function was given by $V(\bs) = \sum_{\ba'} \pi_\beta(\ba'|\bs) Q_\theta(\bs, \ba')$ and $\pi_\beta$ is a learned model of the behavior policy, as done in the implementation of \citet{brandfonbrener2021offline}. This model of the behavior policy is trained according to the tuning protocol for BC, and is hence well-tuned.}

{\textbf{What we tuned:} We tuned the temperature hyperparameter $\eta$ using multiple values spanning various levels of magnitude: $\{0.005, 0.05, 0.1, 0.5, 1.0, 3.0\}$ and additionally tried two different clippings of the advantage values $A(\bs, \ba): = Q(\bs, \ba) - V(\bs)$ between $[-10, 2]$ and $[-10, 4]$. The Q-function architecture is identical to tuned CQL, and the policy and the model of the behavior policy both utilize the architecture used by our BC baseline.}

{\textbf{Our observations:} We summarize our observations below:}
\begin{itemize}
    \item {We found that in all the runs the temporal difference (TD) error for SARSA was in the range of $[0.001, 0.003]$, indicating that the SARSA Q-function is well behaved.}
    \item {The optimal hyperparameters that lead to the highest average performance across all games is $\eta=0.005$, and the advantage clipping between $[-10, 2]$. We find a huge variation in the performance of a given $\eta$, but we used a single hyperparameter $\eta$ across all games, in accordance with the Atari evaluation protocols~\citep{mnih2013playing}, and as we did for all other baselines. For example, while on some games such as Qbert, BC-PI improves quite a lot and attains 118\% normalized return, on Enduro it attains only 78\% and on SpaceInvaders it attains 31.7\%.}
    \item {These results perhaps indicate the need for per-state tuning of $\eta$, i.e., utilizing $\eta(\bs)$, however, this is not covered in our definition of BC-PI from Section~\ref{sec:generalized_bc}, and so we chose to utilize a single $\eta$ across all states. Additionally, we are unaware of any work that utilizes per-state $\eta(\bs)$ values for exponentiated-advantage weighted policy extraction.}
\end{itemize}

\section{{Regarding Offline RL Lower Bounds}}
{Here we address the connection between recently published lower-bounds for offline RL by \citet{zanette2020exponential} and \citet{wang2021what} and our work.}

{\citet{zanette2020exponential} worst-case analysis does address function approximation even under realizability and closure. However, this analysis concerns policy evaluation and applies to policies that go out of the support of the offline data. This does not necessarily prove that any conservative offline RL algorithm (i.e., one that prevents the learned policy from going out of the support of the offline dataset) would suffer from this issue during policy learning, just that there exist policies for which evaluation would be exponentially bad. Offline RL can exactly prevent the policy from going out of the support of the dataset in \citet{zanette2020exponential}'s counterexample since the dynamics and reward functions are deterministic. This policy can still improve over behavior cloning by stitching overlapping trajectories, similar to the discussion of one-step vs multi-step PI in Theorem~\ref{thm:one_step}. Thus, while the lower-bound applies to off-policy evaluation of some target policies, it doesn't apply to evaluation of \emph{\textbf{every}} policy in the MDP, and specifically \emph{\textbf{does not}} apply to in-support policies, which modern offline RL algorithms produce, in theory and practice. Certainly, their lower bound would apply to an algorithm aiming to estimate $Q^*$ from the offline data, but that's not the goal of pessimistic algorithms that we study. Finally, our practical results indicates that offline RL methods can be made to perform well in practice despite the possibility of any such lower bound.}

{The lower bound in \citet{wang2020critic} only applies to non-pessimistic algorithms that do not tackle distributional shift. In fact, Section 5, Theorem 5.1 in \citet{wang2021what} provides an upper bound for the performance of offline policy evaluation with low distribution shift. The algorithms analyzed in our paper are pessimistic and guarantee low distribution shift between the learned policy and the dataset, and so the lower bound does not apply to the algorithms we analyze.}

\end{document}